\algnewcommand{\algorithmicbreak}{\textbf{break}}
\algnewcommand{\algorithmicforeach}{\textbf{for each}}
\algnewcommand\Break{\algorithmicbreak}
\acrodef{acp}[ACP]{advanced colour passing}
\acrodef{bn}[BN]{Bayesian network}
\acrodef{cp}[CP]{colour passing}
\acrodef{crv}[CRV]{counting randvar}
\acrodef{decor}[DECOR]{detection of commutative factors}
\acrodef{deft}[DEFT]{detection of exchangeable factors}
\acrodef{er}[ER]{entity-relationship}
\acrodef{fg}[FG]{factor graph}
\acrodef{ljt}[LJT]{lifted junction tree}
\acrodef{lv}[logvar]{logical variable}
\acrodef{lve}[LVE]{lifted variable elimination}
\acrodef{mln}[MLN]{Markov logic network}
\acrodef{mn}[MN]{Markov network}
\acrodef{mpdag}[MPDAG]{maximally oriented partially directed acyclic graph}
\acrodef{pcfg}[PCFG]{parametric causal factor graph}
\acrodef{pcrv}[PCRV]{parameterised CRV}
\acrodef{ppcfg}[PPCFG]{partially directed parametric causal factor graph}
\acrodef{pf}[parfactor]{parametric factor}
\acrodef{pfg}[PFG]{parametric factor graph}
\acrodef{prv}[PRV]{parameterised randvar}
\acrodef{rv}[randvar]{random variable}
\acrodef{ve}[VE]{variable elimination}
\acrodef{wl}[WL]{Weisfeiler-Leman}
\crefname{algocf}{Alg.}{Algs.}
\Crefname{algocf}{Algorithm}{Algorithms}
\crefname{algorithm}{Alg.}{Algs.}
\Crefname{algorithm}{Algorithm}{Algorithms}
\crefname{corollary}{Cor.}{Cors.}
\Crefname{corollary}{Corollary}{Corollaries}
\crefname{definition}{Def.}{Defs.}
\Crefname{definition}{Definition}{Definitions}
\crefname{example}{Ex.}{Exs.}
\Crefname{example}{Example}{Examples}
\crefname{proposition}{Prop.}{Props.}
\Crefname{proposition}{Proposition}{Propositions}
\crefname{section}{Sec.}{Secs.}
\Crefname{section}{Section}{Sections}
\crefname{theorem}{Thm.}{Thms.}
\Crefname{theorem}{Theorem}{Theorems}
\newcommand{\alginput}[1]{\hspace*{\algorithmicindent} \textbf{Input:} #1}
\newcommand{\algoutput}[1]{\hspace*{\algorithmicindent} \textbf{Output:} #1}
\newcommand{\abs}[1]{\lvert #1 \rvert}
\newcommand{\domain}[1]{\ensuremath{\mathrm{dom}(#1)}}
\newcommand{\range}[1]{\ensuremath{\mathrm{range}(#1)}}
\newcommand{\true}{\ensuremath{\mathrm{true}}}
\definecolor{myyellow}{RGB}{247,192,26}
\definecolor{myblue}{RGB}{37,122,164}
\definecolor{mygreen}{RGB}{78,155,133}
\definecolor{mypurple}{RGB}{86,51,94}
\definecolor{newblue}{RGB}{50,113,173}
\definecolor{newred}{RGB}{222,32,36}
\definecolor{newgreen}{RGB}{70,165,69}
\definecolor{newpurple}{RGB}{140,69,152}
\definecolor{cborange}{RGB}{230,159,0}
\definecolor{cbblue}{RGB}{30,136,229}
\definecolor{cbbluedark}{RGB}{46,37,133}
\definecolor{cbpurple}{RGB}{170,68,153}
\definecolor{cbgreen}{RGB}{0,77,64}
\definecolor{cbgreenlight}{RGB}{93,168,153}
\definecolor{cbbrown}{RGB}{126,41,84}
\tikzset{
	rv/.style={draw, ellipse},
	pf/.style={draw, rectangle, fill = gray!30},
	arc/.style = {->, >={[round,sep]Stealth}},
	doublearc/.style = {<->, >={[round,sep]Stealth}},
}
\newcommand\factor[6]{
	\node[pf, #1=#3 of #2, label={#4:{#5}}](#6) {};
}
\newcommand\nodecolorshift[5]{
	\node[circle, fill=#1, above right=0.1cm of #2, inner sep=0pt, minimum size=2mm, xshift=#4, yshift=#5](#3) {};
}
\newcommand\factorcolor[3]{
	\node[circle, fill=#1, above right=0cm and 0.05cm of #2, inner sep=0pt, minimum size=2mm](#3) {};
}
\newcommand\factorcolorshift[4]{
	\node[circle, fill=#1, above right=0cm and 0.05cm of #2, inner sep=0pt, minimum size=2mm, xshift=#4](#3) {};
}
\newcommand\pfs[8]{
	\node[pf, #1=#3 of #2, xshift=-1mm, yshift=1mm](#6) {};
	\node[pf, #1=#3 of #2, label={[label distance=1mm]#4:{#5}}](#7) {};
	\node[pf, #1=#3 of #2, xshift=1mm, yshift=-1mm](#8) {};
}
\newtheorem{theorem}{Theorem}
\newtheorem{definition}{Definition}
\newtheorem{example}{Example}
\title[Lifted Model Construction without Normalisation]{Lifted Model Construction without Normalisation: A Vectorised Approach to Exploit Symmetries in Factor Graphs}
\author[Malte Luttermann, Ralf Möller, Marcel Gehrke]{
	Malte Luttermann\textsuperscript{1}, Ralf Möller\textsuperscript{2} \and Marcel Gehrke\textsuperscript{2} \\
	\textsuperscript{1}German Research Center for Artificial Intelligence (DFKI), Lübeck \\
	\textsuperscript{2}Institute for Humanities-Centered Artificial Intelligence, University of Hamburg \\
	\email{malte.luttermann@dfki.de,\{ralf.moeller,marcel.gehrke\}@uni-hamburg.de}
}
\begin{document}

\maketitle

\begin{abstract}
	Lifted probabilistic inference exploits symmetries in a probabilistic model to allow for tractable probabilistic inference with respect to domain sizes of \aclp{lv}.
	We found that the current state-of-the-art algorithm to construct a lifted representation in form of a \acl{pfg} misses symmetries between factors that are exchangeable but scaled differently, thereby leading to a less compact representation.
	In this paper, we propose a generalisation of the \ac{acp} algorithm, which is the state of the art to construct a \acl{pfg}.
	Our proposed algorithm allows for potentials of factors to be scaled arbitrarily and efficiently detects more symmetries than the original \ac{acp} algorithm.
	By detecting strictly more symmetries than \ac{acp}, our algorithm significantly reduces online query times for probabilistic inference when the resulting model is applied, which we also confirm in our experiments.
\end{abstract}

\acresetall

\section{Introduction}
\Acp{pfg} are probabilistic relational models, i.e., they combine probabilistic models and relational logic (which can be seen as first-order logic with known universes) to efficiently reason about objects and their relationships under uncertainty.
To allow for tractable probabilistic inference (e.g., inference requiring polynomial time) with respect to domain sizes of \aclp{lv}, \acp{pfg} use representatives of indistinguishable objects to represent groups of \acp{rv}, thereby yielding a more compact model that can be exploited by lifted inference algorithms for faster inference.
Here, probabilistic inference (or just inference for short) refers to the task of computing marginal distributions of \acp{rv} given observations for other \acp{rv} (see \cref{appendix:example_inference} for more details).
Clearly, to run a lifted inference algorithm on a \ac{pfg}, the \ac{pfg} has to be constructed first.
The current state-of-the-art algorithm to construct a \ac{pfg} is the \ac{acp} algorithm.
The \ac{acp} algorithm begins with a propositional model in form of a \ac{fg} and exploits symmetries therein to obtain a \ac{pfg} entailing equivalent semantics as the initial \ac{fg}.
During the course of \ac{acp}, potentials of factors are compared to decide whether factors are equivalent and thus might be grouped.
However, all potentials of the factors must be scaled equally for \ac{acp} to be able to detect symmetries between factors.
In other words, \ac{acp} fails to detect symmetries between factors that are exchangeable but whose potentials differ only by a scalar, thereby leading to a less compact lifted representation if potentials are not normalised before running \ac{acp}.
In this paper, we solve the problem of constructing a \ac{pfg} from a given \ac{fg} such that the resulting \ac{pfg} entails equivalent semantics as the initial \ac{fg} and exchangeable factors are detected independent of the scale of their potentials.
We therefore allow potentials to be learned from different data sources without having to perform a normalisation step while at the same time obtaining a more compact representation for lifted inference than the output of \ac{acp}.

In previous work, \citet{Poole2003a} introduces \acp{pfg} and \acl{lve} as an inference algorithm to carry out lifted probabilistic inference in \acp{pfg}.
Lifted inference exploits symmetries in a probabilistic model by using a representative of indistinguishable objects for computations while maintaining exact answers~\citep{Niepert2014a}.
By using \aclp{lv} in \acp{prv} to represent groups of indistinguishable \acp{rv}, \acl{lve} operating on \acp{pfg} is able to allow for tractable probabilistic inference with respect to domain sizes of \aclp{lv}~\citep{Taghipour2013b}.
After its first introduction, \acl{lve} has been steadily refined by many researchers to reach its current form~\citep{DeSalvoBraz2005a,DeSalvoBraz2006a,Milch2008a,Kisynski2009a,Taghipour2013a,Braun2018a}.
Recently, \citet{Luttermann2024b,Luttermann2024g} extend \acp{pfg} to incorporate causal knowledge, thereby allowing for lifted causal inference in addition to lifted probabilistic inference.
In any case (purely probabilistic or causal), the construction of a \ac{pfg} (or its causal extension, respectively) is necessary to apply lifted inference algorithms afterwards.
The \enquote{CompressFactorGraph} algorithm~\citep{Kersting2009a,Ahmadi2013a} builds on work by \citet{Singla2008a} and detects symmetries in an \ac{fg} to obtain possible groups of \acp{rv} and factors by deploying a colour passing procedure similar to the \acl{wl} algorithm~\citep{Weisfeiler1968a}, which is commonly used to test for graph isomorphism.
To obtain a valid \ac{pfg}, the resulting groups must be represented by introducing \aclp{lv} in \acp{prv}, and the current state-of-the-art algorithm to construct a valid \ac{pfg} entailing equivalent semantics as an initially given \ac{fg} is the \ac{acp} algorithm~\citep{Luttermann2024a,Luttermann2024d,Luttermann2024f}.
While \ac{acp} successfully constructs a valid \ac{pfg} from a given \ac{fg}, it requires all potentials of factors to be scaled by the same scalar, which imposes a serious limitation for practical applications, e.g., when potentials are learned from various data sources and normalisation is undesirable due to floating point arithmetic issues.

To circumvent the requirement of equally scaled potential values in all factors, we propose a modification of the \ac{acp} algorithm that encodes potential values of factors as vectors.
In an earlier work, \citet{Gehrke2020a} show that potentials of factors can be conceived as vectors such that the cosine similarity provides a useful measure to check whether factors \enquote{behave similarly}, thereby allowing to keep symmetries over time and to avoid groundings during temporal probabilistic inference.
By using vector representations of factors' potentials, the potentials do not have to be scaled by the same scalar and thus, we make use of this property already during the construction procedure of the \ac{pfg} in this paper.
Detecting symmetries independent of scalars already during the construction of the lifted representation (i.e., before lifted inference takes place) yields a more compact representation right from the beginning and thereby significantly speeds up online inference afterwards.
We formally show that using such a vector representation maintains equivalent semantics.
Further, we demonstrate that the vector representation can easily be incorporated into the \ac{acp} algorithm to obtain a more compact representation, which we also confirm in our empirical evaluation.

The remaining part of this paper is structured as follows.
First, we provide the necessary background information and introduce notations.
We begin to recap \acp{fg} and afterwards formalise the problem of detecting exchangeable factors.
Thereafter, we take a closer look at the problem of detecting exchangeable factors independent of the scale of their potentials and present our approach that makes use of vector representations of potentials to solve this problem.
We then embed the vectorised approach into the framework of the \ac{acp} algorithm to obtain a generalisation of \ac{acp}, which we evaluate empirically to demonstrate its practical effectiveness before we conclude.

\section{Background}
We begin by defining \acp{fg} as propositional probabilistic graphical models.
An \ac{fg} compactly encodes a full joint probability distribution over a set of \acp{rv} by factorising the distribution into a product of factors~\citep{Frey1997a,Kschischang2001a}.
\begin{definition}[Factor Graph]
	An \emph{\ac{fg}} $G = (\boldsymbol V, \boldsymbol E)$ is an undirected bipartite graph consisting of a node set $\boldsymbol V = \boldsymbol R \cup \boldsymbol \Phi$, where $\boldsymbol R = \{R_1, \ldots, R_n\}$ is a set of variable nodes (\acp{rv}) and $\boldsymbol \Phi = \{\phi_1, \ldots, \phi_m\}$ is a set of factor nodes (functions), as well as a set of edges $\boldsymbol E \subseteq \boldsymbol R \times \boldsymbol \Phi$.
	The term $\range{R_i}$ denotes the possible values of a \ac{rv} $R_i$.
	There is an edge between a variable node $R_i$ and a factor node $\phi_j$ in $\boldsymbol E$ if $R_i$ appears in the argument list of $\phi_j$.
	The argument list $\mathcal A_j$ of a factor $\phi_j(\mathcal A_j)$ is a sequence of \acp{rv} from $\boldsymbol R$.
	A factor is a function that maps its arguments to a positive real number, called potential.
	The semantics of $G$ is given by
	\begin{align} \label{eq:fg_semantics}
		P_G = \frac{1}{Z} \prod_{j=1}^m \phi_j(\mathcal A_j),
	\end{align}
	where $Z$ is the normalisation constant and $\mathcal A_j$ denotes the \acp{rv} occurring in $\phi_j$'s argument list.
\end{definition}
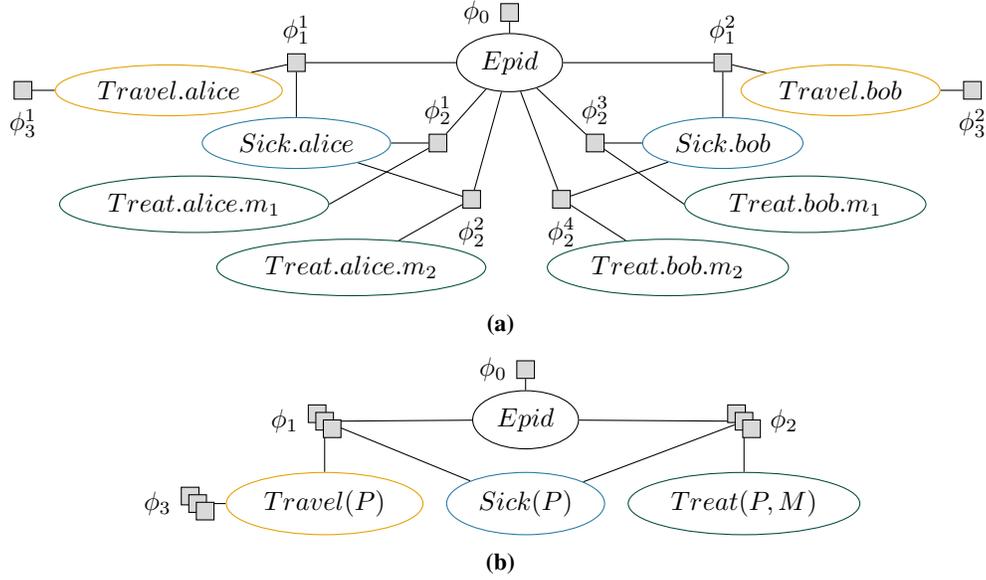
\begin{figure}
	\centering
	\begin{subfigure}{\linewidth}
		\centering
		\begin{tikzpicture}[
	rv/.append style={fill=white}
]
	\node[rv] (E) {$Epid$};

	\factor{above}{E}{0.15cm}{[label distance=0mm]180}{$\phi_0$}{F0}

	\factor{left}{E}{2cm}{[label distance=0mm]90}{$\phi_1^1$}{F1_1}
	\factor{right}{E}{2cm}{[label distance=0mm]90}{$\phi_1^2$}{F1_2}

	\node[rv, draw=myblue, below = 0.6cm of F1_1] (SickA) {$Sick.alice$};
	\node[rv, draw=myblue, below = 0.6cm of F1_2] (SickB) {$Sick.bob$};

	\factor{right}{SickA}{0.5cm}{[label distance=0mm]90}{$\phi_2^1$}{F2_1}
	\factor{below right}{F2_1}{0.5cm and 0.2cm}{[label distance=0mm]270}{$\phi_2^2$}{F2_2}

	\factor{left}{SickB}{0.5cm}{[label distance=0mm]90}{$\phi_2^3$}{F2_3}
	\factor{below left}{F2_3}{0.5cm and 0.2cm}{[label distance=0mm]270}{$\phi_2^4$}{F2_4}

	\node[rv, draw=cborange, below left = 0.0cm and 0.5cm of F1_1] (TravelA) {$Travel.alice$};
	\node[rv, draw=cborange, below right = 0.0cm and 0.5cm of F1_2] (TravelB) {$Travel.bob$};
	\node[rv, draw=cbgreen, below left = 0.3cm and -0.8cm of SickA] (TreatAM1) {$Treat.alice.m_1$};
	\node[rv, draw=cbgreen, below right = 0.3cm and -0.45cm of TreatAM1] (TreatAM2) {$Treat.alice.m_2$};
	\node[rv, draw=cbgreen, below right = 0.3cm and -0.8cm of SickB] (TreatBM1) {$Treat.bob.m_1$};
	\node[rv, draw=cbgreen, below left = 0.3cm and -0.45cm of TreatBM1] (TreatBM2) {$Treat.bob.m_2$};

	\factor{left}{TravelA}{0.3cm}{[label distance=0mm]270}{$\phi_3^1$}{F3_1}
	\factor{right}{TravelB}{0.3cm}{[label distance=0mm]270}{$\phi_3^2$}{F3_2}

	\begin{pgfonlayer}{bg}
		\draw (E) -- (F0);
		\draw (E) -- (F1_1);
		\draw (E) -- (F2_1);
		\draw (E) -- (F2_2);
		\draw (E) -- (F1_2);
		\draw (E) -- (F2_3);
		\draw (E) -- (F2_4);
		\draw (SickA) -- (F1_1);
		\draw (SickA) -- (F2_1);
		\draw (SickA) -- (F2_2);
		\draw (TravelA) -- (F1_1);
		\draw (TreatAM1.east) -- (F2_1);
		\draw (TreatAM2) -- (F2_2);
		\draw (SickB) -- (F1_2);
		\draw (SickB) -- (F2_3);
		\draw (SickB) -- (F2_4);
		\draw (TravelB) -- (F1_2);
		\draw (TreatBM1.west) -- (F2_3);
		\draw (TreatBM2) -- (F2_4);
		\draw (TravelA) -- (F3_1);
		\draw (TravelB) -- (F3_2);
	\end{pgfonlayer}
\end{tikzpicture}
		\caption{}
		\label{fig:example_fg_epid}
	\end{subfigure}

	\begin{subfigure}{\linewidth}
		\centering
		\begin{tikzpicture}
	\node[rv] (E) {$Epid$};
	\node[rv, draw=myblue, below = 0.3cm of E] (S) {$Sick(P)$};
	\node[rv, draw=cborange, left = 0.3cm of S] (Travel) {$Travel(P)$};
	\node[rv, draw=cbgreen, right = 0.3cm of S] (Treat) {$Treat(P,M)$};
	\factor{above}{E}{0.15cm}{180}{$\phi_0$}{G0}
	\pfs{above}{Travel}{0.55cm}{180}{$\phi_1$}{G1a}{G1}{G1b}
	\pfs{above}{Treat}{0.55cm}{0}{$\phi_2$}{G2a}{G2}{G2b}
	\pfs{left}{Travel}{0.25cm}{180}{$\phi_3$}{G3a}{G3}{G3b}

	\begin{pgfonlayer}{bg}
		\draw (E) -- (G0);
		\draw (E) -- (G1);
		\draw (E) -- (G2);
		\draw (S) -- (G1);
		\draw (S) -- (G2);
		\draw (Travel) -- (G1);
		\draw (Treat) -- (G2);
		\draw (Travel) -- (G3);
	\end{pgfonlayer}
\end{tikzpicture}
		\caption{}
		\label{fig:example_pfg_epid}
	\end{subfigure}
	\caption{(a) An \ac{fg} encoding a full joint probability distribution for an epidemic example~\citep{Hoffmann2022a}, (b) a \ac{pfg} corresponding to the lifted representation of the \ac{fg} shown in \cref{fig:example_fg_epid}. The mappings of argument values to potentials of the factors are omitted for brevity.}
	\label{fig:example_fg_pfg_epid}
\end{figure}
\begin{example}
	\Cref{fig:example_fg_epid} shows an \ac{fg} for an epidemic example.
	The \ac{fg} consists of two people ($alice$ and $bob$) as well as two possible medications ($m_1$ and $m_2$) for treatment.
	For each person, there are two Boolean \acp{rv} (that is, \acp{rv} having a Boolean range) $Sick$ and $Travel$, indicating whether the person is sick and travels, respectively.
	Moreover, there is another Boolean \ac{rv} $Treat$ for each combination of person and medication, specifying whether the person is treated with the medication.
	The Boolean \ac{rv} $Epid$ states whether an epidemic is present.
\end{example}
Lifted inference algorithms exploit symmetries in \acp{fg} to allow for tractable probabilistic inference with respect to domain sizes of \aclp{lv}.
In a lifted representation such as a \ac{pfg}, \aclp{prv} and \aclp{pf} represent sets of \acp{rv} and factors, respectively~\citep{Poole2003a}.
Symmetries in \acp{fg} frequently occur in relational models and are highly relevant in many real world domains.
For example, in the epidemic domain, each person influences the probability of an epidemic in the same way---that is, the probability of having an epidemic depends on the number of sick people and not on individual people being sick.
In other words, the probability for an epidemic is the same if there is a single sick person and the remaining people in the universe are not sick, independent of whether $alice$ or $bob$ is sick.
Analogously, there are symmetries in many other domains, e.g., for movies the popularity of an actor influences the success of a movie in the same way for each actor being part of the movie, and so on.
\begin{example}
	A \ac{pfg} corresponding to the lifted representation of the \ac{fg} illustrated in \cref{fig:example_fg_epid} is shown in \cref{fig:example_pfg_epid}.
	Here, two \aclp{lv} $P$ and $M$ with domains $\domain{P} = \{alice, bob\}$ and $\domain{M} = \{m_1, m_2\}$ are introduced to represent groups of indistinguishable people and medications, respectively.
	Further, there are \aclp{pf} that represent groups of factors, e.g., $\phi_1$ represents $\phi_1^1$ and $\phi_1^2$.
	The underlying assumption is that $\phi_1^1$ and $\phi_1^2$ are exchangeable and hence encode equivalent semantics.
	By using \aclp{lv} in \aclp{prv}, the number of \aclp{prv} and \aclp{pf} in the graph remains constant even if the number of people and medications increases.
\end{example}
To detect symmetries in an \ac{fg} and obtain a \ac{pfg} for lifted inference, the \ac{acp} algorithm~\citep{Luttermann2024a} is the current state of the art.
The \ac{acp} algorithm employs a colour passing routine to identify symmetric subgraphs and transforms a given \ac{fg} into a \ac{pfg} entailing equivalent semantics as the initial \ac{fg}.
A formal description of the \ac{acp} algorithm is given in \cref{appendix:acp}.
For now, it is important to understand that \ac{acp} has to detect exchangeable factors during the course of the algorithm.
Exchangeable factors are factors that encode equivalent semantics and play a crucial role when detecting and exploiting symmetries in an \ac{fg}.
In the next section, we investigate the problem of detecting exchangeable factors in \acp{fg} independent of the scale of their potentials in detail and provide an efficient solution to this problem.

\section{Avoiding Normalisation During Lifted Model Construction}
Before we formally define the notion of exchangeable factors, let us take a look at the upcoming example, which illustrates the idea of having differently scaled potentials in exchangeable factors.
\begin{example}
	Consider again the \ac{fg} depicted in \cref{fig:example_fg_epid} and let us assume we want to learn the potentials of the factors from observed data (e.g., by counting the occurrences of combinations of range values).
	For example, to obtain the potentials for the factor $\phi_1^1(Travel.alice, Sick.alice, Epid)$, occurrences of $alice$ becoming sick when travelling are counted.
	Analogously, occurrences of $bob$ becoming sick when travelling are counted to obtain the potentials of $\phi_1^2(Travel.bob, Sick.bob, Epid)$.
	If both factors encode equivalent potentials, they can be grouped (as in \cref{fig:example_pfg_epid}).
	Now, assume $alice$ travels twice as much as $bob$ and both become sick on every second trip on average.
	In consequence, $alice$ and $bob$ \enquote{behave identically} with respect to becoming sick when travelling but the potentials of $\phi_1^1$ and $\phi_1^2$ lie on a different scale---in this particular example, the potentials of $\phi_1^1$ are equal to the potentials of $\phi_1^2$ times two (because $alice$ travels twice as much as $bob$).
\end{example}
A straightforward solution to deal with different scales of potentials is to normalise potentials.
However, we cannot always assume that a given \ac{fg} contains normalised potentials by default and in practical applications, the normalisation of potentials is often undesirable as it results in additional floating point arithmetics causing numerical issues.
We thus develop a solution that does not require potentials to be normalised but still detects exchangeable factors, which we formally define next.
\begin{definition}[Exchangeable Factors] \label{def:exchangeable_scaled}
	Let $\phi_1(R_1, \ldots, R_n)$ and $\phi_2(R'_1, \ldots, R'_n)$ denote two factors in an \ac{fg} $G$.
	Then, $\phi_1$ and $\phi_2$ represent equivalent potentials if and only if there exists a scalar $\alpha \in \mathbb{R}^+$ and a permutation $\pi$ of $\{1, \ldots, n\}$ such that for all $r_1, \ldots, r_n \in \times_{i=1}^n \range{R_i}$ it holds that $\phi_1(r_1, \ldots, r_n) = \alpha \cdot \phi_2(r_{\pi(1)}, \ldots, r_{\pi(n)})$.
	Factors that represent equivalent potentials are called \emph{exchangeable factors}.
\end{definition}
Note that as a necessary condition, exchangeable factors must be defined over the same function domain and hence must have the same number of arguments.
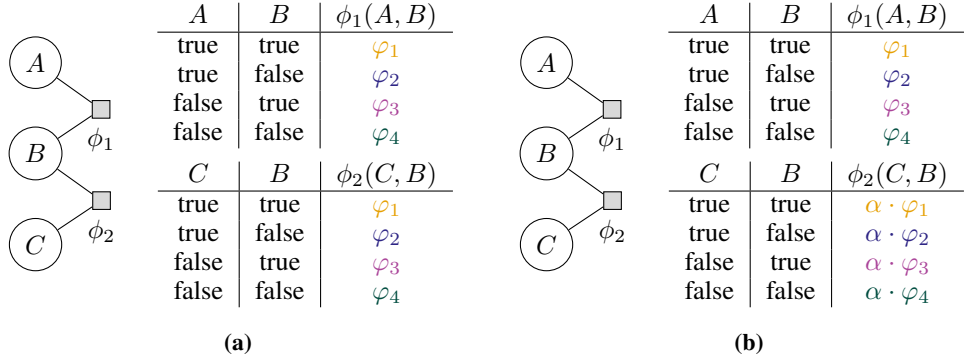
\begin{figure}
	\centering
	\begin{subfigure}{0.48\linewidth}
		\centering
		\begin{tikzpicture}
	\node[circle, draw] (A) {$A$};
	\node[circle, draw] (B) [below = 0.5cm of A] {$B$};
	\node[circle, draw] (C) [below = 0.5cm of B] {$C$};
	\factor{below right}{A}{0.25cm and 0.5cm}{270}{$\phi_1$}{f1}
	\factor{below right}{B}{0.25cm and 0.5cm}{270}{$\phi_2$}{f2}

	\node[right = 0.5cm of f1, yshift=4.5mm] (tab_f1) {
		\begin{tabular}{c|c|c}
			$A$   & $B$   & $\phi_1(A,B)$                  \\ \hline
			true  & true  & $\color{cborange}   \varphi_1$ \\
			true  & false & $\color{cbbluedark} \varphi_2$ \\
			false & true  & $\color{cbpurple}   \varphi_3$ \\
			false & false & $\color{cbgreen}    \varphi_4$ \\
		\end{tabular}
	};

	\node[right = 0.5cm of f2, yshift=-4.5mm] (tab_f2) {
		\begin{tabular}{c|c|c}
			$C$   & $B$   & $\phi_2(C, B)$ \\ \hline
			true  & true  & $\color{cborange}   \varphi_1$ \\
			true  & false & $\color{cbbluedark} \varphi_2$ \\
			false & true  & $\color{cbpurple}   \varphi_3$ \\
			false & false & $\color{cbgreen}    \varphi_4$ \\
		\end{tabular}
	};

	\draw (A) -- (f1);
	\draw (B) -- (f1);
	\draw (B) -- (f2);
	\draw (C) -- (f2);
\end{tikzpicture}
		\caption{}
		\label{fig:example_fg}
	\end{subfigure}
	\begin{subfigure}{0.48\linewidth}
		\centering
		\begin{tikzpicture}
	\node[circle, draw] (A) {$A$};
	\node[circle, draw] (B) [below = 0.5cm of A] {$B$};
	\node[circle, draw] (C) [below = 0.5cm of B] {$C$};
	\factor{below right}{A}{0.25cm and 0.5cm}{270}{$\phi_1$}{f1}
	\factor{below right}{B}{0.25cm and 0.5cm}{270}{$\phi_2$}{f2}

	\node[right = 0.5cm of f1, yshift=4.5mm] (tab_f1) {
		\begin{tabular}{c|c|c}
			$A$   & $B$   & $\phi_1(A,B)$                  \\ \hline
			true  & true  & $\color{cborange}   \varphi_1$ \\
			true  & false & $\color{cbbluedark} \varphi_2$ \\
			false & true  & $\color{cbpurple}   \varphi_3$ \\
			false & false & $\color{cbgreen}    \varphi_4$ \\
		\end{tabular}
	};

	\node[right = 0.5cm of f2, yshift=-4.5mm] (tab_f2) {
		\begin{tabular}{c|c|c}
			$C$   & $B$   & $\phi_2(C, B)$                              \\ \hline
			true  & true  & $\color{cborange}   \alpha \cdot \varphi_1$ \\
			true  & false & $\color{cbbluedark} \alpha \cdot \varphi_2$ \\
			false & true  & $\color{cbpurple}   \alpha \cdot \varphi_3$ \\
			false & false & $\color{cbgreen}    \alpha \cdot \varphi_4$ \\
		\end{tabular}
	};

	\draw (A) -- (f1);
	\draw (B) -- (f1);
	\draw (B) -- (f2);
	\draw (C) -- (f2);
\end{tikzpicture}
		\caption{}
		\label{fig:example_fg_scaled}
	\end{subfigure}
	\caption{(a) An exemplary \ac{fg}, (b) another \ac{fg} encoding equivalent semantics as the \ac{fg} shown in (a) but containing a factor $\phi_2$ whose potentials are scaled by factor $\alpha \in \mathbb{R}^+$.}
	\label{fig:example_fg_and_scaled}
\end{figure}
\begin{example}
	Take a look at the \ac{fg} depicted in \cref{fig:example_fg}, which features two factors $\phi_1$ and $\phi_2$ that map to the exact same potential values $\varphi_i \in \mathbb{R}^+$, $i \in \{1,\ldots,4\}$.
	In this scenario, both tables of mappings from assignments of arguments to potential values are identical (i.e., $\alpha = 1$ and $\pi$ is the identity function) and hence, it is easy to tell that $\phi_1$ and $\phi_2$ are exchangeable.
	If we now consider the \ac{fg} shown in \cref{fig:example_fg_scaled}, we can observe that the potential values of $\phi_2$ are scaled by a factor $\alpha \in \mathbb{R}^+$.
	Despite the scaling, $\phi_1$ and $\phi_2$ encode equivalent semantics and thus are exchangeable.
\end{example}
We remark that in general, $\pi$ does not have to be the identity function, i.e., there might be situations where, for example, the argument positions of $C$ and $B$ in $\phi_2$ are swapped and the potential values in the table read $\varphi_1, \varphi_3, \varphi_2, \varphi_4$ from top to bottom instead of $\varphi_1, \varphi_2, \varphi_3, \varphi_4$.
Note that the potential mappings are still the same but their order is a different one.
For now, we focus on the scalar $\alpha$ and assume that $\pi$ is the identity function, that is, the arguments are already ordered such that exchangeable arguments are located at the same argument positions if there are any exchangeable arguments.
Later on, in \cref{sec:permuted_arguments_scaled}, we also show how to deal with arbitrary permutations of arguments.

A fundamental insight is that factors, whose potential mappings are equivalent up to a scalar $\alpha$, are semantically equivalent.
The intuition here is that the \emph{ratio} of the potentials within a factor is the relevant part for the semantics of the factor whereas the absolute values do not matter.
For example, think of a factor $\phi$ that has two mappings in total, one for the assignment $\mathrm{true}$ and one for the assignment $\mathrm{false}$.
Semantically, it does not matter whether $\phi$ maps $\mathrm{true}$ to $1$ and $\mathrm{false}$ to $2$ or $\mathrm{true}$ to $2$ and $\mathrm{false}$ to $4$ because in both cases, $\phi$ weights the assignment $\mathrm{false}$ twice as much as the assignment $\mathrm{true}$.
The normalisation constant $Z$ in \cref{eq:fg_semantics} ensures that in both cases, the probability for $\mathrm{true}$ is $1/3$ and the probability for $\mathrm{false}$ is $2/3$.
We next formalise this insight.
\begin{theorem} \label{th:scaling_semantics}
	Let $G = (\boldsymbol V, \boldsymbol E)$ denote an \ac{fg} with $\boldsymbol V = \boldsymbol R \cup \boldsymbol \Phi$, where $\boldsymbol R = \{R_1, \ldots, R_n\}$ is a set of \acp{rv} and $\boldsymbol \Phi = \{\phi_1, \ldots, \phi_m\}$ is a set of factors.
	Then, scaling any factor $\phi_k \in \boldsymbol \Phi$ by a scalar $\alpha \in \mathbb{R}^+$ leaves the semantics of $G$ unchanged.
\end{theorem}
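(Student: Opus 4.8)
The plan is to compare the joint distribution of $G$ with that of the graph $G'$ obtained from $G$ by replacing $\phi_k$ with the scaled factor $\phi_k' \coloneqq \alpha \cdot \phi_k$ and leaving every other factor unchanged, and to show that $P_{G'} = P_G$ as functions on $\times_{i=1}^n \range{R_i}$. First I would observe that because $\alpha \in \mathbb{R}^+$, the scaled factor $\phi_k'$ still maps every assignment to a positive real, so $G'$ is a well-defined \ac{fg} whose semantics is given by \cref{eq:fg_semantics}. Writing $\phi_j' = \phi_j$ for $j \neq k$, the unnormalised product of $G'$ at an arbitrary joint assignment $\boldsymbol r = (r_1, \ldots, r_n)$ satisfies
\begin{align*}
	\prod_{j=1}^m \phi_j'(\mathcal A_j) = \alpha \cdot \prod_{j=1}^m \phi_j(\mathcal A_j),
\end{align*}
since the single common scalar $\alpha$ can be pulled out of $\phi_k$ regardless of the assignment.

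The key step is then to recompute the normalisation constant of $G'$ from the scaled factors: summing the identity above over all joint assignments yields
\begin{align*}
	Z' = \sum_{\boldsymbol r} \prod_{j=1}^m \phi_j'(\mathcal A_j) = \alpha \cdot \sum_{\boldsymbol r} \prod_{j=1}^m \phi_j(\mathcal A_j) = \alpha \cdot Z.
\end{align*}
Because $\alpha > 0$ and $Z > 0$ (it is a sum of products of positive potentials over a non-empty domain), $Z' > 0$, so the normalisation is well-defined. Combining both observations, for every joint assignment $\boldsymbol r$ we get
\begin{align*}
	P_{G'} = \frac{1}{Z'} \prod_{j=1}^m \phi_j'(\mathcal A_j) = \frac{1}{\alpha Z} \cdot \alpha \cdot \prod_{j=1}^m \phi_j(\mathcal A_j) = \frac{1}{Z} \prod_{j=1}^m \phi_j(\mathcal A_j) = P_G,
\end{align*}
so the factor of $\alpha$ cancels and the semantics of $G$ is unchanged.

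I do not expect any genuine obstacle here: the statement is essentially a one-line computation once one is careful that \enquote{semantics} refers to the \emph{normalised} distribution $P_G$, and that the normalisation constant must be recomputed from the modified factors rather than treated as fixed — this is precisely what makes the scalar cancel. If desired, the argument extends verbatim to scaling several factors, or one factor repeatedly, by iterating the same cancellation, which is the form in which it will be invoked later when grouping exchangeable factors whose potentials differ by a scalar.
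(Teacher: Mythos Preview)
Your proof is correct and follows essentially the same route as the paper's own proof: both observe that scaling $\phi_k$ by $\alpha$ multiplies the unnormalised product by $\alpha$ and the normalisation constant by the same $\alpha$, so the factor cancels in $P_G$. Your version is slightly more explicit about well-definedness ($\alpha>0$ keeps $\phi_k'$ positive and $Z'>0$), but the argument is the same.
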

\begin{proof}
	Recall that the semantics of $G$ (before scaling) is given by $P_G = \frac{1}{Z} \prod_{j=1}^m \phi_j(\mathcal A_j)$, where $\mathcal A_j$ denotes the \acp{rv} occurring in $\phi_j$'s argument list and $Z$ is the normalisation constant, defined as
	\begin{align}
		Z = \sum\limits_{\boldsymbol a \in \times_{i=1}^n \range{R_i}} \prod\limits_{j = 1}^m \phi_j(\mathcal A_j = \boldsymbol a_j),
	\end{align}
	where $\boldsymbol a_j$ denotes the assigned values to arguments $\mathcal A_j$ according to the assignment $\boldsymbol a$.
	Now, assume that $\phi_k \in \Phi$ is scaled by $\alpha \in \mathbb{R}^+$.
	Then, $P_G$ changes to $P_G = \frac{1}{Z} \cdot \alpha \cdot \prod_{j=1}^m \phi_j(\mathcal A_j)$ and $Z$ changes to $Z = \alpha \cdot \sum_{\boldsymbol a \in \times_{i=1}^n \range{R_i}} \prod_{j = 1}^m \phi_j(\mathcal A_j = \boldsymbol a_j)$.
	In consequence, it holds that $P_G = \frac{1}{\alpha \cdot Z} \cdot \alpha \cdot \prod_{j=1}^m \phi_j(\mathcal A_j)$, which is equivalent to the original definition of $P_G$ as $\alpha$ cancels out.
\end{proof}
\Cref{th:scaling_semantics} implies that it is also possible to scale various factors by different scalars without changing the semantics of the underlying model.
Using this insight, it becomes clear that \acp{fg} can be further compressed by taking into account factors that are exchangeable up to a scalar $\alpha$.
We next show how exchangeable factors can efficiently be detected independent of the scaling factor $\alpha$.

\subsection{Dealing with Scaled Potentials} \label{sec:vector_rep_cosine_distance}
Previous work by \citet{Gehrke2020a} shows that potentials of factors can be conceived as vectors such that the cosine similarity of the vectors can be used to check whether factors \enquote{behave identically}, thereby avoiding groundings in temporal probabilistic inference.
We apply the idea of representing potentials as vectors to detect exchangeable factors independent of a scaling factor already during the construction of a \ac{pfg} to obtain a more compact model even before online inference takes place.
\begin{definition}[Vector Representation of Factors]
	Let $\phi(R_1, \ldots, R_n)$ denote a factor.
	The \emph{vector representation} of $\phi$ is defined as the vector $\vec \phi = (\phi(\boldsymbol a))_{\boldsymbol a \in \times_{i=1}^n \range{R_i}}$.
\end{definition}
\begin{example}
	Consider the factors $\phi_1$ and $\phi_2$ depicted in \cref{fig:example_fg_scaled}.
	The vector representations of $\phi_1$ and $\phi_2$ are given by $\vec \phi_1 = (\varphi_1, \varphi_2, \varphi_3, \varphi_4)$ and $\vec \phi_2 = (\alpha \varphi_1, \alpha \varphi_2, \alpha \varphi_3, \alpha \varphi_4)$, respectively.
\end{example}
Given a vector representation of a factor, the idea is that vectors of exchangeable factors point to the same direction in the vector space.
Thus, the angle between those vectors can be computed to determine whether the factors are exchangeable because exchangeable factors have vector representations whose angle is equal to zero (i.e., they are collinear).
The upcoming example illustrates this idea.
\begin{figure}
	\centering
	\begin{tikzpicture}
		\draw[arc] (0, 0) -- (5.4, 0);
		\draw[arc] (0, 0) -- (0, 2.9);

		\foreach \x in {0, 1, ..., 10}
			\draw ({\x / 2}, 0.1) -- ({\x / 2}, -0.1) node[below] {\x};

		\foreach \y in {0, 1, ..., 5}
			\draw (0.1, {\y / 2}) -- (-0.1, {\y / 2}) node[left] {\y};

		\draw[arc, thick, cborange] (0, 0) -- (4, 1);
		\node[cborange] at (4, 1) [above] {$\vec \phi_1$};
		\draw[arc, thick, cbgreen] (0, 0) -- (2, 0.5);
		\node[cbgreen] at (2, 0.5) [above] {$\vec \phi_2$};

		\draw[arc, thick, cbbluedark] (0, 0) -- (1, 1);
		\node[cbbluedark] at (1, 1) [above] {$\vec \phi_3$};
		\draw[arc, thick, cbpurple] (0, 0) -- (2.2, 1.8);
		\node[cbpurple] at (2.2, 1.8) [above] {$\vec \phi_4$};

		\coordinate (o) at (0, 0);
		\coordinate (v1) at (2, 0.5);
		\coordinate (v2) at (2.2, 1.8);

		\pic [draw, angle radius=11mm, angle eccentricity=0.8, "$\theta$"] {angle = v1--o--v2};
\end{tikzpicture}
	\caption{Vector representations for exemplary factors $\phi_1, \ldots, \phi_4$. For the sake of this example, every factor maps two possible assignments to a potential value each.
	The mappings of the factors are encoded as vectors and are given by $\vec \phi_1 = (8, 2)$ (i.e., $\phi_1$ maps its first assignment to potential value $8$ and the second assignment to potential value $2$), $\vec \phi_2 = (4, 1)$, $\vec \phi_3 = (2, 2)$, and $\vec \phi_4 = (4.4, 3.6)$.}
	\label{fig:example_vector_representation}
\end{figure}
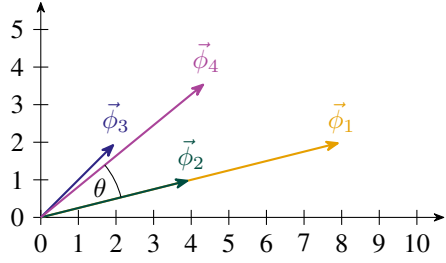
\begin{example}
	Take a look at \cref{fig:example_vector_representation}, which shows the vector representations $\vec \phi_1 = (8, 2)$, $\vec \phi_2 = (4, 1)$, $\vec \phi_3 = (2, 2)$, and $\vec \phi_4 = (4.4, 3.6)$ for exemplary factors $\phi_1, \ldots, \phi_4$.
	To allow for a two-dimensional visualisation, every factor has two possible assignments (e.g., due to having a single Boolean argument).
	The angle between $\vec \phi_1$ and $\vec \phi_2$ is exactly zero, indicating that $\phi_1$ and $\phi_2$ are collinear and hence exchangeable, which can be verified as $\phi_1(\boldsymbol a) = 2 \cdot \phi_2(\boldsymbol a)$ holds for all assignments $\boldsymbol a$.
	At the same time, the angle between, e.g., $\vec \phi_1$ and $\vec \phi_3$ is much larger than zero, indicating that $\phi_1$ and $\phi_3$ are not exchangeable.
	Further, observe that the angle between $\vec \phi_3$ and $\vec \phi_4$ is not exactly zero but close to zero, indicating that $\phi_3$ and $\phi_4$ are not equivalent but approximately equivalent.
\end{example}
By using vector representations and the cosine similarity between them, exchangeable factors can efficiently be detected in practice.
The cosine similarity between two vector representations of factors lies within the interval $[0, 1]$ and reaches its maximum value of one if the angle between the vectors is zero. 
To obtain a distance measure, we define the cosine distance as one minus the cosine similarity.
\begin{definition}[Cosine Distance, \citealp{Gehrke2020a}]
	Let $\phi_1(R_1, \ldots, R_n)$ and $\phi_2(R'_1, \ldots, R'_n)$ denote two factors.
	The \emph{cosine distance} between $\phi_1$ and $\phi_2$ is defined as
	\begin{align} \label{eq:cosine_distance}
		D_{\cos}(\phi_1, \phi_2) = 1 - \frac{\sum\limits_{\boldsymbol a \in \times_{i=1}^n \range{R_i}} \phi_1(\boldsymbol a) \cdot \phi_2(\boldsymbol a)}{\sqrt{\sum\limits_{\boldsymbol a \in \times_{i=1}^n \range{R_i}} \phi_1(\boldsymbol a)^2} \cdot \sqrt{\sum\limits_{\boldsymbol a \in \times_{i=1}^n \range{R'_i}} \phi_2(\boldsymbol a)^2}}.
	\end{align}
	If $\phi_1$ and $\phi_2$ are defined over different function domains, we define $D_{\cos}(\phi_1, \phi_2) = \infty$.
\end{definition}
A fundamental advantage of using vector representations in combination with the cosine distance to search for exchangeable factors is that it is also possible to allow for a small deviation of $D_{\cos}$ from zero (e.g., dependent on a hyperparameter $\varepsilon$).
While it is also conceivable to directly compare the tables of potential mappings and allowing for a deviation controlled by $\varepsilon$, a direct comparison of tables becomes sophisticated in settings where both a deviation and a scaling factor $\alpha$ have to be considered at the same time.
Vector representations circumvent such issues and allow for a straightforward comparison of factors at any time. 
In this paper, however, we focus on the problem of exact lifted model construction, i.e., we aim to transform a given \ac{fg} into a \ac{pfg} entailing equivalent semantics as the initial \ac{fg}.
Allowing for a deviation between potentials results in the problem setup of approximate lifted model construction, which is a different problem not considered in detail here. 

Before we continue to deal with permutations of arguments in addition to scaled potentials, we formally show that the cosine distance is a suitable measure to check for exchangeability.
\begin{restatable}{theorem}{cosineDistanceMeasureTheorem}
	Let $\phi_1(R_1, \ldots, R_n)$ and $\phi_2(R'_1, \ldots, R'_n)$ denote two factors.
	If $\phi_1$ and $\phi_2$ are exchangeable, then it holds that $D_{\cos}(\phi_1, \phi_2) = 0$.
\end{restatable}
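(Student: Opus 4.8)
The plan is to unfold the definition of exchangeability and substitute directly into \cref{eq:cosine_distance}. Since $\phi_1$ and $\phi_2$ are exchangeable, \cref{def:exchangeable_scaled}---with $\pi$ the identity, as assumed throughout this section---provides a scalar $\alpha \in \mathbb{R}^+$ such that $\phi_1(\boldsymbol a) = \alpha \cdot \phi_2(\boldsymbol a)$ for every assignment $\boldsymbol a \in \times_{i=1}^n \range{R_i}$. In particular $\phi_1$ and $\phi_2$ are defined over the same function domain, so $D_{\cos}(\phi_1,\phi_2)$ is given by the fraction in \cref{eq:cosine_distance} rather than by $\infty$.

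First I would rewrite the numerator: replacing $\phi_1(\boldsymbol a)$ by $\alpha \cdot \phi_2(\boldsymbol a)$ turns $\sum_{\boldsymbol a} \phi_1(\boldsymbol a)\,\phi_2(\boldsymbol a)$ into $\alpha \sum_{\boldsymbol a} \phi_2(\boldsymbol a)^2$. Then I would do the same in the denominator: $\sqrt{\sum_{\boldsymbol a} \phi_1(\boldsymbol a)^2} = \sqrt{\alpha^2 \sum_{\boldsymbol a} \phi_2(\boldsymbol a)^2} = \alpha\,\sqrt{\sum_{\boldsymbol a} \phi_2(\boldsymbol a)^2}$, using $\alpha > 0$ so that $\sqrt{\alpha^2} = \alpha$. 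Hence the denominator equals $\alpha \sum_{\boldsymbol a} \phi_2(\boldsymbol a)^2$, which matches the numerator, so the fraction equals $1$ and $D_{\cos}(\phi_1,\phi_2) = 1 - 1 = 0$.

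The only point needing care is that the fraction is well defined, i.e., its denominator is nonzero: since a factor maps to strictly positive reals and the index set $\times_{i=1}^n \range{R_i}$ is non-empty, we have $\sum_{\boldsymbol a} \phi_2(\boldsymbol a)^2 > 0$, and together with $\alpha > 0$ this makes the denominator strictly positive. I do not expect any genuine obstacle here; the computation is essentially a one-line substitution. If one later wished to drop the identity-permutation assumption, the single extra step would be to note that re-indexing each summation by the permutation $\pi$ leaves the sum over all assignments unchanged, so the same computation goes through after aligning the argument orders.
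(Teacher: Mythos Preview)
Your proposal is correct and follows essentially the same approach as the paper: both substitute $\phi_1(\boldsymbol a)=\alpha\cdot\phi_2(\boldsymbol a)$ into \cref{eq:cosine_distance} and simplify the resulting fraction to $1$. The paper handles the permutation by a ``without loss of generality, rearrange $\phi_2$'s arguments'' step, which amounts to the identity-permutation assumption you invoke from the section's context; your extra remarks on non-vanishing of the denominator and on how a re-indexing argument would cover general $\pi$ are welcome but not required.
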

\begin{proof}[Proof Sketch]
	If $\phi_1$ and $\phi_2$ are exchangeable, there exists a scalar $\alpha \in \mathbb{R}^+$ and a permutation $\pi$ of $\{1, \ldots, n\}$ such that for all $r_1, \ldots, r_n \in \times_{i=1}^n \range{R_i}$ it holds that $\phi_1(r_1, \ldots, r_n) = \alpha \cdot \phi_2(r_{\pi(1)}, \ldots, r_{\pi(n)})$.
	Without loss of generality, assume that the arguments of $\phi_2$ are rearranged such that for all $r_1, \ldots, r_n \in \times_{i=1}^n \range{R_i}$ it holds that $\phi_1(r_1, \ldots, r_n) = \alpha \cdot \phi_2(r_1, \ldots, r_n)$.
	Then, entering $\phi_1(r_1, \ldots, r_n) = \alpha \cdot \phi_2(r_1, \ldots, r_n)$ into \cref{eq:cosine_distance} yields $D_{\cos}(\phi_1, \phi_2) = 0$.
\end{proof}
Note that the cosine distance is a measure to check for collinearity of the vectors $\vec \phi_1$ and $\vec \phi_2$, that is, to check whether there exists a scalar $\alpha$ such that $\vec \phi_1 = \alpha \cdot \vec \phi_2$.
As we only have to check for collinearity, we can avoid computing the cosine distance (and hence avoid floating point arithmetics during exchangeability checks) by checking the equality of products of potential values.
The technical details for collinearity checks using only multiplication operations are given in \cref{appendix:collinearity_wo_division}.
However, also keep in mind that even when using the cosine distance to determine collinearity of vectors (which involves floating point arithmetics), we are able to avoid floating point numbers in the tables of potential mappings of the factors, which is the more important place to avoid floating point arithmetics (because during probabilistic inference, these numbers are multiplied).

So far, we did not pay attention to permutations of arguments when looking for exchangeable factors.
In practice, however, we cannot assume that exchangeable arguments are always located at the same argument position in their respective factors.
Therefore, in the next section, we investigate the problem of detecting exchangeable factors independent of the scale of their potentials while at the same time taking arbitrary permutations of their arguments into account.

\subsection{Dealing with Permutations of Arguments} \label{sec:permuted_arguments_scaled}
A straightforward approach to handle permutations of arguments when searching for exchangeable factors is to iterate over all possible argument permutations of one of the factors, rearrange its arguments and its table of potential mappings accordingly, and then compute the cosine distance as described in \cref{sec:vector_rep_cosine_distance}.
If there exists a permutation such that the cosine distance is zero, the factors are exchangeable, otherwise they are not.
Such an approach, however, is computationally expensive as it iterates over $O(n!)$ argument permutations for a factor with $n$ arguments in the worst case.
\citet{Luttermann2024d} introduce the \emph{\ac{deft}} algorithm, which avoids iterating over all permutations of arguments and thereby allows to efficiently detect exchangeable factors according to \cref{def:exchangeable_scaled} where $\alpha = 1$.
In other words, \ac{deft} is able to efficiently handle permutations of arguments but does not consider differently scaled potentials.
We now combine the ideas of \ac{deft} and the vector representation in combination with the cosine distance to handle both scalars different from one and permutations of arguments simultaneously.

The idea behind the \ac{deft} algorithm is that a factor maps its arguments to potential values that can be distributed across so-called \emph{buckets}.
Buckets count the occurrences of specific range values in an assignment for a subset of a factor's arguments and within these buckets, possible permutations of arguments are heavily restricted such that not all permutations have to be considered.
Before we illustrate the idea at an example, we give a formal definition of a bucket.
\begin{definition}[Bucket, \citealp{Luttermann2024d}]
	Let $\phi(R_1, \ldots, R_n)$ denote a factor and let $\boldsymbol S \subseteq \{R_1, \ldots, R_n\}$ denote a subset of $\phi$'s arguments such that $\range{R_i} = \range{R_j}$ holds for all $R_i, R_j \in \boldsymbol S$.
	Further, let $\mathcal V$ denote the range of the elements in $\boldsymbol S$ (identical for all $R_i \in \boldsymbol S$).
	Then, a \emph{bucket} $b$ entailed by $\boldsymbol S$ is a set of tuples $\{(v_i, n_i)\}_{i = 1}^{\abs{\mathcal V}}$, $v_i \in \mathcal V$, $n_i \in \mathbb{N}$, and $\sum_i n_i = \abs{\boldsymbol S}$, such that $n_i$ specifies the number of occurrences of potential value $v_i$ in an assignment for all \acp{rv} in $\boldsymbol S$.
	A shorthand notation for $\{(v_i, n_i)\}_{i = 1}^{\abs{\mathcal V}}$ is $[n_1, \dots, n_{\abs{\mathcal V}}]$.
	In abuse of notation, we denote by $\phi^{\succ}(b)$ the ordered multiset of potentials a bucket $b$ is mapped to by $\phi$ (in order of their appearance in $\phi$'s table of potential mappings). 
	The set of all buckets entailed by $\phi$ is denoted as $\mathcal B(\phi)$.
\end{definition}
\begin{figure}
	\centering
	\begin{tikzpicture}
	\node[circle, draw] (A) {$A$};
	\node[circle, draw] (B) [below = 0.5cm of A] {$B$};
	\node[circle, draw] (C) [below = 0.5cm of B] {$C$};
	\factor{below right}{A}{0.25cm and 0.5cm}{270}{$\phi_1$}{f1}
	\factor{below right}{B}{0.25cm and 0.5cm}{270}{$\phi_2$}{f2}

	\node[right = 0.5cm of f1, yshift=4.5mm] (tab_f1) {
		\begin{tabular}{c|c|c|c}
			$A$   & $B$   & $\phi_1(A,B)$                  & $b$     \\ \hline
			true  & true  & $\color{cborange}   \varphi_1$ & $[2,0]$ \\
			true  & false & $\color{cbbluedark} \varphi_2$ & $[1,1]$ \\
			false & true  & $\color{cbpurple}   \varphi_3$ & $[1,1]$ \\
			false & false & $\color{cbgreen}    \varphi_4$ & $[0,2]$ \\
		\end{tabular}
	};

	\node[right = 0.5cm of f2, yshift=-4.5mm] (tab_f2) {
		\begin{tabular}{c|c|c|c}
			$B$   & $C$   & $\phi_2(B,C)$                               & $b$     \\ \hline
			true  & true  & $\color{cborange}   \alpha \cdot \varphi_1$ & $[2,0]$ \\
			true  & false & $\color{cbbluedark} \alpha \cdot \varphi_3$ & $[1,1]$ \\
			false & true  & $\color{cbpurple}   \alpha \cdot \varphi_2$ & $[1,1]$ \\
			false & false & $\color{cbgreen}    \alpha \cdot \varphi_4$ & $[0,2]$ \\
		\end{tabular}
	};

	\node[right = 7cm of B] (buckets) {
		\begin{tabular}{c|c|c}
			$b$     & $\phi_1^{\succ}(b)$                                                          & $\phi_2^{\succ}(b)$                                                                      \\ \hline
			$[2,0]$ & $\langle {\color{cborange}\varphi_1} \rangle$                                & $\langle {\color{cborange}\alpha\cdot\varphi_1} \rangle$                                 \\
			$[1,1]$ & $\langle {\color{cbpurple}\varphi_2}, {\color{cbbluedark}\varphi_3} \rangle$ & $\langle {\color{cbbluedark}\alpha\varphi_3}, {\color{cbpurple}\alpha\varphi_2} \rangle$ \\
			$[0,2]$ & $\langle {\color{cbgreen}\varphi_4} \rangle$                                 & $\langle {\color{cbgreen}\alpha\cdot\varphi_4} \rangle$                                  \\
		\end{tabular}
	};

	\draw (A) -- (f1);
	\draw (B) -- (f1);
	\draw (B) -- (f2);
	\draw (C) -- (f2);
\end{tikzpicture}
	\caption{An \ac{fg} entailing equivalent semantics as the \acp{fg} shown in \cref{fig:example_fg,fig:example_fg_scaled} with corresponding buckets. Note that the arguments of the factor $\phi_2$ are arranged in a different order than in \cref{fig:example_fg,fig:example_fg_scaled} as $B$ appears at position one and $C$ at position two, whereas in the previous examples, $C$ was at position one and $B$ at position two.}
	\label{fig:example_buckets_and_scale}
\end{figure}
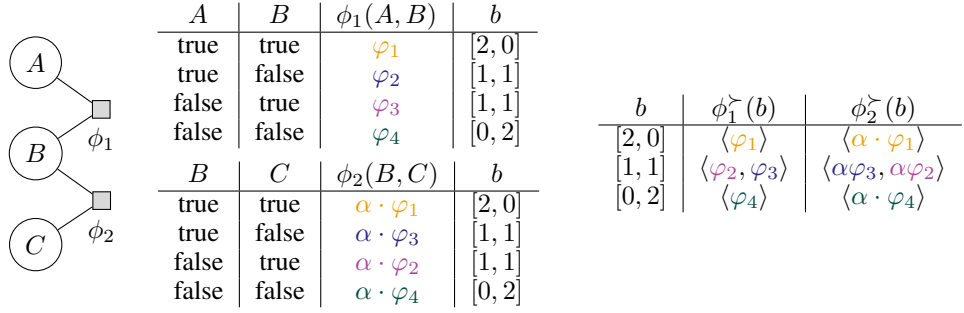
\begin{example}
	Take a look at \cref{fig:example_buckets_and_scale}, which displays an \ac{fg} entailing equivalent semantics as the \acp{fg} shown in \cref{fig:example_fg,fig:example_fg_scaled} with corresponding buckets.
	In this example, $B$ appears at position one and $C$ at position two in $\phi_2$ whereas in \cref{fig:example_fg,fig:example_fg_scaled}, $C$ was at position one and $B$ at position two.
	Both $\phi_1$ and $\phi_2$ entail three buckets $\{(\mathrm{true}, 2), (\mathrm{false}, 0)\}$, $\{(\mathrm{true}, 1), (\mathrm{false}, 1)\}$, $\{(\mathrm{true}, 0), (\mathrm{false}, 2)\}$---or $[2,0]$, $[1,1]$, $[0,2]$ in shorthand notation.
	Every bucket corresponds to at least one assignment, e.g., the bucket $[1,1]$ corresponds to all assignments that contain one $\mathrm{true}$ and one $\mathrm{false}$ value.
	$\phi_1$ maps $[1,1]$ to $\langle \varphi_2, \varphi_3 \rangle$ and $\phi_2$ maps $[1,1]$ to $\langle \alpha\varphi_3, \alpha\varphi_2 \rangle$.
\end{example}
\citet{Luttermann2024d} show that two factors $\phi_1$ and $\phi_2$ are exchangeable (for the setting of $\alpha = 1$) if and only if there exists a permutation of their arguments such that $\phi_1^{\succ}(b) = \phi_2^{\succ}(b)$ for all buckets $b$ entailed by the arguments of $\phi_1$ and $\phi_2$.
The \ac{deft} algorithm exploits this property by checking for each bucket $b$ whether arguments can be rearranged such that $\phi_1^{\succ}(b) = \phi_2^{\succ}(b)$ holds.
The idea is that the potential values in the ordered multisets determine possible permutations of arguments and thus, \ac{deft} avoids iterating over all permutations of arguments.
For example, assuming that $\alpha = 1$ in \cref{fig:example_buckets_and_scale}, we know that $\alpha \varphi_3$ must be located at position one and $\alpha \varphi_2$ at position two in $\phi_2^{\succ}$ to match the order of $\phi_1^{\succ}$.
The corresponding assignments of $\alpha \varphi_2$, i.e., ($\mathrm{false}, \mathrm{true}$), and $\alpha \varphi_3$, i.e., ($\mathrm{true}, \mathrm{false}$), are then used to determine possible positions of arguments to achieve that $\alpha \varphi_3$ is located at position one and $\alpha \varphi_2$ at position two in $\phi_2^{\succ}$.
For now, it is sufficient to understand that identical potential values in $\phi_1^{\succ}$ and $\phi_2^{\succ}$ must be found.
Further technical details about the \ac{deft} algorithm are given in \citep{Luttermann2024d}.
We next generalise the \ac{deft} algorithm such that it is able to handle the setting of $\alpha \neq 1$ as well.
A crucial observation is that the orders of the potential values in the ordered multisets must be identical up to the scaling factor $\alpha$.
\begin{theorem} \label{th:exchangeable_buckets_scaled}
	Let $\phi_1$ and $\phi_2$ denote two factors.
	Then, $\phi_1$ and $\phi_2$ are exchangeable if and only if there exists a permutation of their arguments such that $\phi_1^{\succ}(b) = \alpha \cdot \phi_2^{\succ}(b)$ for all buckets $b$ entailed by the arguments of $\phi_1$ and $\phi_2$, where, by abuse of notation, $\alpha \cdot \phi_2^{\succ}(b)$ denotes the ordered multiset resulting from multiplying each potential value in $\phi_2^{\succ}(b)$ by $\alpha$.
\end{theorem}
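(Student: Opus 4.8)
The plan is to reduce the claim to the $\alpha = 1$ characterisation of \citet{Luttermann2024d} by passing to a rescaled copy of $\phi_2$. I would introduce the factor $\psi := \alpha \cdot \phi_2$, obtained from $\phi_2$ by multiplying every entry of its table of potential mappings by $\alpha$, and establish three routine facts: (i) scaling leaves the argument list of $\phi_2$ and the ranges of its arguments untouched, so $\phi_2$ and $\psi$ entail exactly the same buckets, i.e.\ $\mathcal B(\phi_2) = \mathcal B(\psi)$; (ii) multiplying all table entries by the positive scalar $\alpha$ does not reorder the rows of the table, so for every bucket $b$ we have $\psi^{\succ}(b) = \alpha \cdot \phi_2^{\succ}(b)$ entrywise and in the same order; and (iii) by \cref{def:exchangeable_scaled}, $\phi_1$ and $\phi_2$ are exchangeable via scalar $\alpha$ and permutation $\pi$ if and only if $\phi_1$ and $\psi$ are exchangeable via scalar $1$ and the same $\pi$, since $\phi_1(r_1,\dots,r_n) = \alpha\cdot\phi_2(r_{\pi(1)},\dots,r_{\pi(n)})$ is literally $\phi_1(r_1,\dots,r_n) = \psi(r_{\pi(1)},\dots,r_{\pi(n)})$.

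For the forward direction, I would assume $\phi_1$ and $\phi_2$ are exchangeable, extract $\alpha$ and $\pi$ from \cref{def:exchangeable_scaled}, and use (iii) to conclude that $\phi_1$ and $\psi$ are exchangeable with scalar $1$. Applying the characterisation of \citet{Luttermann2024d} to $(\phi_1, \psi)$ then yields a permutation of their arguments with $\phi_1^{\succ}(b) = \psi^{\succ}(b)$ for all buckets $b$ (which, by (i), are the same buckets as those entailed by $\phi_1$ and $\phi_2$), and substituting $\psi^{\succ}(b) = \alpha\cdot\phi_2^{\succ}(b)$ from (ii) gives $\phi_1^{\succ}(b) = \alpha\cdot\phi_2^{\succ}(b)$ for all $b$, as required.

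For the converse, I would assume there is a permutation of the arguments together with a scalar $\alpha \in \mathbb R^+$ such that $\phi_1^{\succ}(b) = \alpha\cdot\phi_2^{\succ}(b)$ for all buckets $b$; by (i) and (ii) this is exactly $\phi_1^{\succ}(b) = \psi^{\succ}(b)$ for all buckets $b$ entailed by $\phi_1$ and $\psi$, so the characterisation of \citet{Luttermann2024d} applied in the reverse direction shows $\phi_1$ and $\psi$ are exchangeable with scalar $1$, i.e.\ there is a $\pi$ with $\phi_1(r_1,\dots,r_n) = \psi(r_{\pi(1)},\dots,r_{\pi(n)}) = \alpha\cdot\phi_2(r_{\pi(1)},\dots,r_{\pi(n)})$; by \cref{def:exchangeable_scaled}, $\phi_1$ and $\phi_2$ are exchangeable. (If $\phi_1$ and $\phi_2$ are not defined over the same function domain, then no admissible permutation exists and exchangeability fails by the remark following \cref{def:exchangeable_scaled}, so both sides are false and I may assume matching argument structure throughout.)

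The only delicate point I anticipate is fact (ii): I have to argue carefully that $\phi^{\succ}$, which by definition lists potentials in order of their appearance in the factor's table of potential mappings, commutes with scaling---that is, that rescaling a factor reorders nothing and therefore scales each ordered multiset $\phi^{\succ}(b)$ entrywise while preserving order. Once this bookkeeping and the identity $\mathcal B(\phi_2) = \mathcal B(\psi)$ are pinned down, the theorem drops out of the $\alpha = 1$ result with no further combinatorial work on argument permutations, since all of that is inherited from the correctness of \ac{deft}. I would also be careful that the intended reading is the one that makes the reduction go through, namely exact equality with a single $\alpha$ quantified uniformly over all buckets; a per-bucket or approximate variant would not reduce this cleanly.
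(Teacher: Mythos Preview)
Your proposal is correct and takes a genuinely different route from the paper. The paper argues directly: since the buckets partition the rows of a factor's table and $\phi^{\succ}(b)$ simply lists the potentials of those rows in their table order, having identical tables up to the scalar $\alpha$ (after permuting arguments) is the same as having $\phi_1^{\succ}(b) = \alpha\cdot\phi_2^{\succ}(b)$ for every bucket $b$; both implications are dispatched in one sentence each. Your approach instead reduces to the $\alpha=1$ characterisation of \citet{Luttermann2024d} by introducing the rescaled factor $\psi = \alpha\cdot\phi_2$ and checking the three bookkeeping facts (i)--(iii). This is longer but more modular: it makes explicit that no new combinatorial argument about permutations is needed beyond what \ac{deft} already establishes, and it isolates the only genuinely new ingredient---that scaling commutes with the $\phi^{\succ}$ operator---as your fact (ii). The paper's proof buys brevity and self-containment; yours buys a cleaner separation of concerns and a more careful treatment of the order-preservation point, which the paper's proof leaves implicit. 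Your caveat about the uniform quantification of $\alpha$ across all buckets is well taken and matches the intended reading.
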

\begin{proof}
	For the first direction, it holds that $\phi_1$ and $\phi_2$ are exchangeable.
	According to \cref{def:exchangeable_scaled}, there exists a scalar $\alpha \in \mathbb{R}^+$ and a permutation of $\phi_2$'s arguments such that $\phi_1$ and $\phi_2$ have identical tables of potential mappings up to the scaling factor $\alpha$.
	In consequence, for every bucket $b$ it holds that $\phi_1^{\succ}(b) = \alpha \cdot \phi_2^{\succ}(b)$ since both tables read identical potential values up to $\alpha$ from top to bottom.

	For the second direction, it holds that $\phi_1^{\succ}(b) = \alpha \cdot \phi_2^{\succ}(b)$ for all buckets $b$.
	Converting the buckets back to tables of potential mappings then results in identical tables of potential mappings up to the scalar $\alpha$, which implies that $\phi_1$ and $\phi_2$ are exchangeable.
\end{proof}
Using the insight from \cref{th:exchangeable_buckets_scaled}, the \ac{deft} algorithm can be adapted to search for identical potential values up to scalar $\alpha$ in the ordered multisets of potential values, as shown in \cref{alg:exchangeable_vector}.
\begin{algorithm}[t]
	\caption{Detection of Exchangeable Factors without Normalisation}
	\label{alg:exchangeable_vector}
	\alginput{Two factors $\phi_1(R_1, \dots, R_n)$ and $\phi_2(R'_1, \dots, R'_m)$.} \\
	\algoutput{$\mathrm{true}$ if $\phi_1$ and $\phi_2$ are exchangeable, else $\mathrm{false}$.}
	\begin{algorithmic}[1]
		\If{$n \neq m \lor \mathcal B(\phi_1) \neq \mathcal B(\phi_2)$}
			\State \Return $\mathrm{false}$\;
		\EndIf
		\ForEach{$b \in \mathcal B(\phi_1)$}\Comment{It holds that $\mathcal B(\phi_1) = \mathcal B(\phi_2)$}\;
			\State $\alpha \gets \max(\phi_1^{\succ}(b)) \mathbin{/} \max(\phi_2^{\succ}(b))$\;
			\If{$\alpha$ differs from $\alpha$ for a previous bucket}
				\State \Return $\mathrm{false}$\;
			\EndIf
			\State $C_b \gets$ Possible swaps to obtain $\phi_1^{\succ}(b) = \alpha \cdot \phi_2^{\succ}(b)$\; \label{line:swap}
		\EndForEach
		\If{there exists a swap of $\phi_2$'s arguments in $\bigcap_{b \in \mathcal B(\phi_1)} C_b$ such that $D_{\cos}(\phi_1, \phi_2) = 0$}
			\State \Return $\mathrm{true}$\;
		\Else
			\State \Return $\mathrm{false}$\;
		\EndIf
	\end{algorithmic}
\end{algorithm}
To do so, $\alpha$ is determined first, which is done by computing $\alpha = \max(\phi_1^{\succ}(b)) \mathbin{/} \max(\phi_2^{\succ}(b))$ in bucket $b$.
Note that $\alpha$ must be identical for every bucket $b$, otherwise the two factors cannot be exchangeable.
Having determined $\alpha$, possible permutations of arguments are obtained by looking for identical potential values up to scalar $\alpha$.
Possible permutations of arguments are then verified (or rejected) using the cosine distance between $\phi_1$ and $\phi_2$ after rearranging $\phi_2$'s arguments.
\begin{example}
	Consider again the factors $\phi_1$ and $\phi_2$ depicted in \cref{fig:example_buckets_and_scale}.
	Rearranging $\phi_2$'s arguments such that $B$ is placed at position two and $C$ at position one in $\phi_2$'s argument list yields the table of potential mappings for $\phi_2$ depicted in \cref{fig:example_fg_scaled} and thus results in $D_{\cos}(\phi_1, \phi_2) = 0$.
\end{example}
We next demonstrate the practical effectiveness of \cref{alg:exchangeable_vector} in our empirical evaluation.

\section{Experiments} \label{sec:aacp_experiments}
To assess the effectiveness of \cref{alg:exchangeable_vector} in practice, we compare the run times of running \acl{lve} on the output of \ac{acp} in its original form and of running \acl{lve} on the output of \ac{acp} extended by running \cref{alg:exchangeable_vector} to detect exchangeable factors (\acused{aacp}\ac{aacp}).\footnote{Note that the run time required to perform probabilistic inference on a model directly depends on the graph size of the model, i.e., the presented run times also give information about the compactness of the models.}
\begin{figure}
	\centering
	\resizebox{0.49\textwidth}{!}{
\begin{tikzpicture}[x=1pt,y=1pt]
\definecolor{fillColor}{RGB}{255,255,255}
\path[use as bounding box,fill=fillColor,fill opacity=0.00] (0,0) rectangle (209.58,115.63);
\begin{scope}
\path[clip] (  0.00,  0.00) rectangle (209.58,115.63);
\definecolor{drawColor}{RGB}{255,255,255}
\definecolor{fillColor}{RGB}{255,255,255}

\path[draw=drawColor,line width= 0.6pt,line join=round,line cap=round,fill=fillColor] (  0.00,  0.00) rectangle (209.58,115.63);
\end{scope}
\begin{scope}
\path[clip] ( 39.63, 29.80) rectangle (204.08,110.13);
\definecolor{fillColor}{RGB}{255,255,255}

\path[fill=fillColor] ( 39.63, 29.80) rectangle (204.08,110.13);
\definecolor{drawColor}{RGB}{230,159,0}

\path[draw=drawColor,line width= 0.6pt,line join=round] ( 47.10, 33.62) --
	( 47.39, 34.21) --
	( 47.98, 34.52) --
	( 49.15, 34.17) --
	( 51.49, 35.06) --
	( 56.17, 35.00) --
	( 65.53, 36.08) --
	( 84.26, 36.54) --
	(121.71, 38.72) --
	(196.61, 40.35);
\definecolor{drawColor}{RGB}{46,37,133}

\path[draw=drawColor,line width= 0.6pt,dash pattern=on 2pt off 2pt ,line join=round] ( 47.10, 33.45) --
	( 47.39, 35.80) --
	( 47.98, 38.60) --
	( 49.15, 41.59) --
	( 51.49, 48.33) --
	( 56.17, 57.38) --
	( 65.53, 66.24) --
	( 84.26, 77.53) --
	(121.71, 92.79) --
	(196.61,106.48);
\definecolor{drawColor}{RGB}{230,159,0}
\definecolor{fillColor}{RGB}{230,159,0}

\path[draw=drawColor,line width= 0.4pt,line join=round,line cap=round,fill=fillColor] ( 51.49, 35.06) circle (  1.96);

\path[draw=drawColor,line width= 0.4pt,line join=round,line cap=round,fill=fillColor] ( 47.39, 34.21) circle (  1.96);

\path[draw=drawColor,line width= 0.4pt,line join=round,line cap=round,fill=fillColor] ( 49.15, 34.17) circle (  1.96);

\path[draw=drawColor,line width= 0.4pt,line join=round,line cap=round,fill=fillColor] ( 56.17, 35.00) circle (  1.96);

\path[draw=drawColor,line width= 0.4pt,line join=round,line cap=round,fill=fillColor] ( 47.10, 33.62) circle (  1.96);

\path[draw=drawColor,line width= 0.4pt,line join=round,line cap=round,fill=fillColor] (196.61, 40.35) circle (  1.96);

\path[draw=drawColor,line width= 0.4pt,line join=round,line cap=round,fill=fillColor] ( 47.98, 34.52) circle (  1.96);

\path[draw=drawColor,line width= 0.4pt,line join=round,line cap=round,fill=fillColor] (121.71, 38.72) circle (  1.96);

\path[draw=drawColor,line width= 0.4pt,line join=round,line cap=round,fill=fillColor] ( 65.53, 36.08) circle (  1.96);

\path[draw=drawColor,line width= 0.4pt,line join=round,line cap=round,fill=fillColor] ( 84.26, 36.54) circle (  1.96);
\definecolor{fillColor}{RGB}{46,37,133}

\path[fill=fillColor] ( 49.53, 46.36) --
	( 53.45, 46.36) --
	( 53.45, 50.29) --
	( 49.53, 50.29) --
	cycle;

\path[fill=fillColor] ( 45.43, 33.84) --
	( 49.36, 33.84) --
	( 49.36, 37.77) --
	( 45.43, 37.77) --
	cycle;

\path[fill=fillColor] ( 47.19, 39.63) --
	( 51.11, 39.63) --
	( 51.11, 43.55) --
	( 47.19, 43.55) --
	cycle;

\path[fill=fillColor] ( 54.21, 55.42) --
	( 58.13, 55.42) --
	( 58.13, 59.34) --
	( 54.21, 59.34) --
	cycle;

\path[fill=fillColor] ( 45.14, 31.49) --
	( 49.06, 31.49) --
	( 49.06, 35.42) --
	( 45.14, 35.42) --
	cycle;

\path[fill=fillColor] (194.65,104.52) --
	(198.57,104.52) --
	(198.57,108.44) --
	(194.65,108.44) --
	cycle;

\path[fill=fillColor] ( 46.02, 36.63) --
	( 49.94, 36.63) --
	( 49.94, 40.56) --
	( 46.02, 40.56) --
	cycle;

\path[fill=fillColor] (119.75, 90.83) --
	(123.67, 90.83) --
	(123.67, 94.75) --
	(119.75, 94.75) --
	cycle;

\path[fill=fillColor] ( 63.57, 64.28) --
	( 67.50, 64.28) --
	( 67.50, 68.20) --
	( 63.57, 68.20) --
	cycle;

\path[fill=fillColor] ( 82.30, 75.57) --
	( 86.22, 75.57) --
	( 86.22, 79.49) --
	( 82.30, 79.49) --
	cycle;
\end{scope}
\begin{scope}
\path[clip] (  0.00,  0.00) rectangle (209.58,115.63);
\definecolor{drawColor}{RGB}{0,0,0}

\path[draw=drawColor,line width= 0.6pt,line join=round] ( 39.63, 29.80) --
	( 39.63,110.13);

\path[draw=drawColor,line width= 0.6pt,line join=round] ( 41.05,107.67) --
	( 39.63,110.13) --
	( 38.20,107.67);
\end{scope}
\begin{scope}
\path[clip] (  0.00,  0.00) rectangle (209.58,115.63);
\definecolor{drawColor}{gray}{0.30}

\node[text=drawColor,anchor=base east,inner sep=0pt, outer sep=0pt, scale=  0.88] at ( 34.68, 36.91) {30};

\node[text=drawColor,anchor=base east,inner sep=0pt, outer sep=0pt, scale=  0.88] at ( 34.68, 52.36) {100};

\node[text=drawColor,anchor=base east,inner sep=0pt, outer sep=0pt, scale=  0.88] at ( 34.68, 66.46) {300};

\node[text=drawColor,anchor=base east,inner sep=0pt, outer sep=0pt, scale=  0.88] at ( 34.68, 81.90) {1000};

\node[text=drawColor,anchor=base east,inner sep=0pt, outer sep=0pt, scale=  0.88] at ( 34.68, 96.00) {3000};
\end{scope}
\begin{scope}
\path[clip] (  0.00,  0.00) rectangle (209.58,115.63);
\definecolor{drawColor}{gray}{0.20}

\path[draw=drawColor,line width= 0.6pt,line join=round] ( 36.88, 39.94) --
	( 39.63, 39.94);

\path[draw=drawColor,line width= 0.6pt,line join=round] ( 36.88, 55.39) --
	( 39.63, 55.39);

\path[draw=drawColor,line width= 0.6pt,line join=round] ( 36.88, 69.49) --
	( 39.63, 69.49);

\path[draw=drawColor,line width= 0.6pt,line join=round] ( 36.88, 84.94) --
	( 39.63, 84.94);

\path[draw=drawColor,line width= 0.6pt,line join=round] ( 36.88, 99.03) --
	( 39.63, 99.03);
\end{scope}
\begin{scope}
\path[clip] (  0.00,  0.00) rectangle (209.58,115.63);
\definecolor{drawColor}{RGB}{0,0,0}

\path[draw=drawColor,line width= 0.6pt,line join=round] ( 39.63, 29.80) --
	(204.08, 29.80);

\path[draw=drawColor,line width= 0.6pt,line join=round] (201.62, 28.38) --
	(204.08, 29.80) --
	(201.62, 31.23);
\end{scope}
\begin{scope}
\path[clip] (  0.00,  0.00) rectangle (209.58,115.63);
\definecolor{drawColor}{gray}{0.20}

\path[draw=drawColor,line width= 0.6pt,line join=round] ( 46.81, 27.05) --
	( 46.81, 29.80);

\path[draw=drawColor,line width= 0.6pt,line join=round] ( 83.38, 27.05) --
	( 83.38, 29.80);

\path[draw=drawColor,line width= 0.6pt,line join=round] (119.95, 27.05) --
	(119.95, 29.80);

\path[draw=drawColor,line width= 0.6pt,line join=round] (156.53, 27.05) --
	(156.53, 29.80);

\path[draw=drawColor,line width= 0.6pt,line join=round] (193.10, 27.05) --
	(193.10, 29.80);
\end{scope}
\begin{scope}
\path[clip] (  0.00,  0.00) rectangle (209.58,115.63);
\definecolor{drawColor}{gray}{0.30}

\node[text=drawColor,anchor=base,inner sep=0pt, outer sep=0pt, scale=  0.88] at ( 46.81, 18.79) {0};

\node[text=drawColor,anchor=base,inner sep=0pt, outer sep=0pt, scale=  0.88] at ( 83.38, 18.79) {250};

\node[text=drawColor,anchor=base,inner sep=0pt, outer sep=0pt, scale=  0.88] at (119.95, 18.79) {500};

\node[text=drawColor,anchor=base,inner sep=0pt, outer sep=0pt, scale=  0.88] at (156.53, 18.79) {750};

\node[text=drawColor,anchor=base,inner sep=0pt, outer sep=0pt, scale=  0.88] at (193.10, 18.79) {1000};
\end{scope}
\begin{scope}
\path[clip] (  0.00,  0.00) rectangle (209.58,115.63);
\definecolor{drawColor}{RGB}{0,0,0}

\node[text=drawColor,anchor=base,inner sep=0pt, outer sep=0pt, scale=  1.00] at (121.85,  7.44) {$d$};
\end{scope}
\begin{scope}
\path[clip] (  0.00,  0.00) rectangle (209.58,115.63);
\definecolor{drawColor}{RGB}{0,0,0}

\node[text=drawColor,rotate= 90.00,anchor=base,inner sep=0pt, outer sep=0pt, scale=  1.00] at ( 12.39, 69.97) {time (ms)};
\end{scope}
\begin{scope}
\path[clip] (  0.00,  0.00) rectangle (209.58,115.63);

\path[] ( 41.35, 78.13) rectangle ( 97.11,118.04);
\end{scope}
\begin{scope}
\path[clip] (  0.00,  0.00) rectangle (209.58,115.63);
\definecolor{drawColor}{RGB}{230,159,0}

\path[draw=drawColor,line width= 0.6pt,line join=round] ( 48.29,105.31) -- ( 59.86,105.31);
\end{scope}
\begin{scope}
\path[clip] (  0.00,  0.00) rectangle (209.58,115.63);
\definecolor{drawColor}{RGB}{230,159,0}
\definecolor{fillColor}{RGB}{230,159,0}

\path[draw=drawColor,line width= 0.4pt,line join=round,line cap=round,fill=fillColor] ( 54.07,105.31) circle (  1.96);
\end{scope}
\begin{scope}
\path[clip] (  0.00,  0.00) rectangle (209.58,115.63);
\definecolor{drawColor}{RGB}{46,37,133}

\path[draw=drawColor,line width= 0.6pt,dash pattern=on 2pt off 2pt ,line join=round] ( 48.29, 90.86) -- ( 59.86, 90.86);
\end{scope}
\begin{scope}
\path[clip] (  0.00,  0.00) rectangle (209.58,115.63);
\definecolor{fillColor}{RGB}{46,37,133}

\path[fill=fillColor] ( 52.11, 88.89) --
	( 56.04, 88.89) --
	( 56.04, 92.82) --
	( 52.11, 92.82) --
	cycle;
\end{scope}
\begin{scope}
\path[clip] (  0.00,  0.00) rectangle (209.58,115.63);
\definecolor{drawColor}{RGB}{0,0,0}

\node[text=drawColor,anchor=base west,inner sep=0pt, outer sep=0pt, scale=  0.80] at ( 66.80,102.55) {$\alpha$-ACP};
\end{scope}
\begin{scope}
\path[clip] (  0.00,  0.00) rectangle (209.58,115.63);
\definecolor{drawColor}{RGB}{0,0,0}

\node[text=drawColor,anchor=base west,inner sep=0pt, outer sep=0pt, scale=  0.80] at ( 66.80, 88.10) {ACP};
\end{scope}
\end{tikzpicture}}
	\resizebox{0.49\textwidth}{!}{\input{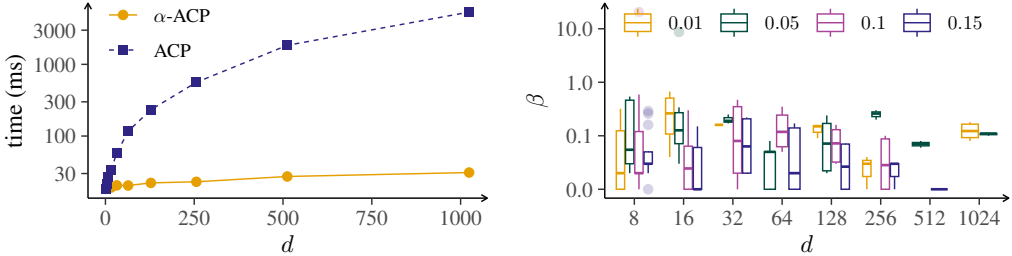}}
	\caption{Average query times of \acl{lve} on the output of \ac{acp} and \ac{aacp} (left) and the average number $\beta$ of queries after which the offline overhead of \ac{aacp} amortises (right).}
	\label{fig:aacp_plot_main}
\end{figure}
For our experiments, we generate \acp{fg} containing between $2d+1$ and $d\cdot\lfloor\log_2(d)\rfloor+2d+1$ \acp{rv} as well as between $2d$ and $d\cdot\lfloor\log_2(d)\rfloor+d+1$ factors, where the parameter $d \in \{2, 4, 8, 16, 32, 64, 128, 256, 512, 1024\}$ controls the size of the \ac{fg}.
In every \ac{fg}, a proportion of $p \in \{0.01, 0.05, 0.1, 0.15\}$ of the factors is scaled by a scalar $\alpha \in \{1, \ldots, 10\}$ (chosen uniformly at random).
For each choice of $d$, we pose three to four queries to each \ac{fg} and report the average run time over all queries.
\Cref{fig:aacp_plot_main} displays the results.
The left plot shows the average run times of \acl{lve} on the output of \ac{acp} and \ac{aacp}, respectively.
As expected, \acl{lve} runs significantly faster and is able to handle larger values of $d$ if \ac{aacp} instead of \ac{acp} is applied.
Since \ac{aacp} is able to detect exchangeable factors that are scaled differently, it detects strictly more symmetries than \ac{acp} resulting in a more compact model and hence in faster inference times.
Clearly, the speedup depends on the proportion $p$ of scaled factors and thus, we provide additional experimental results for each individual choice of $p$ in \cref{appendix:more_eval}.

The boxplot on the right in \cref{fig:aacp_plot_main} displays the average number $\beta$ of queries after which the additional offline overhead of \ac{aacp} compared to \ac{acp} amortises.\footnote{The dots in the boxplot on the right in \cref{fig:aacp_plot_main} represent outliers.}
In particular, it holds that $\beta = \Delta_o \mathbin{/} \Delta_g$, where $\Delta_o$ denotes the offline overhead of \ac{aacp} (i.e., the difference of the offline run times required by \ac{aacp} and \ac{acp}) and $\Delta_g$ denotes the online gain (i.e., the difference of the times required by \acl{lve} run on the output of \ac{acp} and \ac{aacp}).
In other words, after $\beta$ queries, the additional time needed by \ac{aacp} to construct the \ac{pfg} is saved due to faster inference times.
Negative values for $\beta$ are not displayed (hence the missing boxes for some $p$) as there is no overhead in these cases.
The boxplot shows a box for each choice of $p$ for every $d \geq 8$ and it becomes clear that the median value for $\beta$ is always smaller than one.
Apart from a single outlier, all values of $\beta$ are smaller than ten.
Thus, after a maximum of ten queries, the additional offline overhead of \ac{aacp} amortises, showing that \ac{aacp} works efficiently as it introduces almost no overhead.

\section{Conclusion}
In this paper, we generalise the \ac{acp} algorithm to detect exchangeable factors independent of the scale of their potentials without the requirement of normalising the potentials.
Our proposed approach allows for arbitrary scalars and makes use of vector representations in combination with collinearity checks to efficiently detect exchangeable factors independent of their scale.
We show that our approach maintains equivalent semantics and at the same time yields a more compact representation by detecting strictly more symmetries than the original \ac{acp} algorithm, thereby speeding up inference.

\section*{Acknowledgements}
This work is funded by the BMBF project AnoMed 16KISA057.

\bibliographystyle{unsrtnat}
\bibliography{references.bib}

\begin{thebibliography}{22}
\providecommand{\natexlab}[1]{#1}
\providecommand{\url}[1]{\texttt{#1}}
\expandafter\ifx\csname urlstyle\endcsname\relax
  \providecommand{\doi}[1]{doi: #1}\else
  \providecommand{\doi}{doi: \begingroup \urlstyle{rm}\Url}\fi

\bibitem[Poole(2003)]{Poole2003a}
David Poole.
\newblock {First-Order Probabilistic Inference}.
\newblock In \emph{Proceedings of the Eighteenth International Joint Conference
  on Artificial Intelligence (IJCAI-03)}, pages 985--991. Morgan Kaufmann
  Publishers Inc., 2003.

\bibitem[Niepert and {Van den Broeck}(2014)]{Niepert2014a}
Mathias Niepert and Guy {Van den Broeck}.
\newblock {Tractability through Exchangeability: A New Perspective on Efficient
  Probabilistic Inference}.
\newblock In \emph{Proceedings of the Twenty-Eighth AAAI Conference on
  Artificial Intelligence (AAAI-2014)}, pages 2467--2475. {AAAI} Press, 2014.

\bibitem[Taghipour et~al.(2013{\natexlab{a}})Taghipour, Fierens, {Van den
  Broeck}, Davis, and Blockeel]{Taghipour2013b}
Nima Taghipour, Daan Fierens, Guy {Van den Broeck}, Jesse Davis, and Hendrik
  Blockeel.
\newblock {Completeness Results for Lifted Variable Elimination}.
\newblock In \emph{Proceedings of the Sixteenth International Conference on
  Artificial Intelligence and Statistics (AISTATS-2013)}, pages 572--580.
  {PMLR}, 2013{\natexlab{a}}.

\bibitem[{De Salvo Braz} et~al.(2005){De Salvo Braz}, Amir, and
  Roth]{DeSalvoBraz2005a}
Rodrigo {De Salvo Braz}, Eyal Amir, and Dan Roth.
\newblock {Lifted First-Order Probabilistic Inference}.
\newblock In \emph{Proceedings of the Nineteenth International Joint Conference
  on Artificial Intelligence (IJCAI-2005)}, pages 1319--1325. Morgan Kaufmann
  Publishers Inc., 2005.

\bibitem[{De Salvo Braz} et~al.(2006){De Salvo Braz}, Amir, and
  Roth]{DeSalvoBraz2006a}
Rodrigo {De Salvo Braz}, Eyal Amir, and Dan Roth.
\newblock {MPE and Partial Inversion in Lifted Probabilistic Variable
  Elimination}.
\newblock In \emph{Proceedings of the Twenty-First National Conference on
  Artificial Intelligence (AAAI-2006)}, pages 1123--1130. {AAAI} Press, 2006.

\bibitem[Milch et~al.(2008)Milch, Zettlemoyer, Kersting, Haimes, and
  Kaelbling]{Milch2008a}
Brian Milch, Luke~S. Zettlemoyer, Kristian Kersting, Michael Haimes, and
  Leslie~Pack Kaelbling.
\newblock {Lifted Probabilistic Inference with Counting Formulas}.
\newblock In \emph{Proceedings of the Twenty-Third AAAI Conference on
  Artificial Intelligence (AAAI-2008)}, pages 1062--1068. {AAAI} Press, 2008.

\bibitem[Kisy\'{n}ski and Poole(2009)]{Kisynski2009a}
Jacek Kisy\'{n}ski and David Poole.
\newblock {Constraint Processing in Lifted Probabilistic Inference}.
\newblock In \emph{Proceedings of the Twenty-Fifth Conference on Uncertainty in
  Artificial Intelligence (UAI-2009)}, pages 293--302. {AUAI} Press, 2009.

\bibitem[Taghipour et~al.(2013{\natexlab{b}})Taghipour, Fierens, Davis, and
  Blockeel]{Taghipour2013a}
Nima Taghipour, Daan Fierens, Jesse Davis, and Hendrik Blockeel.
\newblock {Lifted Variable Elimination: Decoupling the Operators from the
  Constraint Language}.
\newblock \emph{Journal of Artificial Intelligence Research}, 47:\penalty0
  393--439, 2013{\natexlab{b}}.

\bibitem[Braun and Möller(2018)]{Braun2018a}
Tanya Braun and Ralf Möller.
\newblock {Parameterised Queries and Lifted Query Answering}.
\newblock In \emph{Proceedings of the Twenty-Seventh International Joint
  Conference on Artificial Intelligence (IJCAI-2018)}, pages 4980--4986.
  {IJCAI} Organization, 2018.

\bibitem[Luttermann et~al.(2024{\natexlab{a}})Luttermann, Hartwig, Braun,
  M\"oller, and Gehrke]{Luttermann2024b}
Malte Luttermann, Mattis Hartwig, Tanya Braun, Ralf M\"oller, and Marcel
  Gehrke.
\newblock {Lifted Causal Inference in Relational Domains}.
\newblock In \emph{Proceedings of the Third Conference on Causal Learning and
  Reasoning (CLeaR-2024)}, pages 827--842. {PMLR}, 2024{\natexlab{a}}.

\bibitem[Luttermann et~al.(2024{\natexlab{b}})Luttermann, Braun, Möller, and
  Gehrke]{Luttermann2024g}
Malte Luttermann, Tanya Braun, Ralf Möller, and Marcel Gehrke.
\newblock {Estimating Causal Effects in Partially Directed Parametric Causal
  Factor Graphs}.
\newblock In \emph{Proceedings of the Sixteenth International Conference on
  Scalable Uncertainty Management (SUM-2024)}, pages 265--280. Springer,
  2024{\natexlab{b}}.

\bibitem[Kersting et~al.(2009)Kersting, Ahmadi, and Natarajan]{Kersting2009a}
Kristian Kersting, Babak Ahmadi, and Sriraam Natarajan.
\newblock {Counting Belief Propagation}.
\newblock In \emph{Proceedings of the Twenty-Fifth Conference on Uncertainty in
  Artificial Intelligence (UAI-2009)}, pages 277--284. {AUAI} Press, 2009.

\bibitem[Ahmadi et~al.(2013)Ahmadi, Kersting, Mladenov, and
  Natarajan]{Ahmadi2013a}
Babak Ahmadi, Kristian Kersting, Martin Mladenov, and Sriraam Natarajan.
\newblock {Exploiting Symmetries for Scaling Loopy Belief Propagation and
  Relational Training}.
\newblock \emph{Machine Learning}, 92:\penalty0 91--132, 2013.

\bibitem[Singla and Domingos(2008)]{Singla2008a}
Parag Singla and Pedro Domingos.
\newblock {Lifted First-Order Belief Propagation}.
\newblock In \emph{Proceedings of the Twenty-Third AAAI Conference on
  Artificial Intelligence (AAAI-2008)}, pages 1094--1099. {AAAI} Press, 2008.

\bibitem[Weisfeiler and Leman(1968)]{Weisfeiler1968a}
Boris Weisfeiler and Andrei~A. Leman.
\newblock {The Reduction of a Graph to Canonical Form and the Algebra which
  Appears Therein}.
\newblock \emph{{NTI}, Series}, 2:\penalty0 12--16, 1968.
\newblock {English} translation by Grigory Ryabov available at
  \url{https://www.iti.zcu.cz/wl2018/pdf/wl_paper_translation.pdf}.

\bibitem[Luttermann et~al.(2024{\natexlab{c}})Luttermann, Braun, M\"oller, and
  Gehrke]{Luttermann2024a}
Malte Luttermann, Tanya Braun, Ralf M\"oller, and Marcel Gehrke.
\newblock {Colour Passing Revisited: Lifted Model Construction with Commutative
  Factors}.
\newblock In \emph{Proceedings of the Thirty-Eighth AAAI Conference on
  Artificial Intelligence (AAAI-2024)}, pages 20500--20507. {AAAI} Press,
  2024{\natexlab{c}}.

\bibitem[Luttermann et~al.(2024{\natexlab{d}})Luttermann, Machemer, and
  Gehrke]{Luttermann2024d}
Malte Luttermann, Johann Machemer, and Marcel Gehrke.
\newblock {Efficient Detection of Exchangeable Factors in Factor Graphs}.
\newblock In \emph{Proceedings of the Thirty-Seventh International Florida
  Artificial Intelligence Research Society Conference (FLAIRS-2024)}. Florida
  Online Journals, 2024{\natexlab{d}}.

\bibitem[Luttermann et~al.(2024{\natexlab{e}})Luttermann, Machemer, and
  Gehrke]{Luttermann2024f}
Malte Luttermann, Johann Machemer, and Marcel Gehrke.
\newblock {Efficient Detection of Commutative Factors in Factor Graphs}.
\newblock In \emph{Proceedings of the Twelfth International Conference on
  Probabilistic Graphical Models (PGM-2024)}. {PMLR}, 2024{\natexlab{e}}.

\bibitem[Gehrke et~al.(2020)Gehrke, M\"oller, and Braun]{Gehrke2020a}
Marcel Gehrke, Ralf M\"oller, and Tanya Braun.
\newblock {Taming Reasoning in Temporal Probabilistic Relational Models}.
\newblock In \emph{Proceedings of the Twenty-Fourth European Conference on
  Artificial Intelligence (ECAI-2020)}, pages 2592--2599. {IOS} Press, 2020.

\bibitem[Frey et~al.(1997)Frey, Kschischang, Loeliger, and Wiberg]{Frey1997a}
Brendan~J. Frey, Frank~R. Kschischang, Hans-Andrea Loeliger, and Niclas Wiberg.
\newblock {Factor Graphs and Algorithms}.
\newblock In \emph{Proceedings of the Thirty-Fifth Annual Allerton Conference
  on Communication, Control, and Computing}, pages 666--680. Allerton House,
  1997.

\bibitem[Kschischang et~al.(2001)Kschischang, Frey, and
  Loeliger]{Kschischang2001a}
Frank~R. Kschischang, Brendan~J. Frey, and Hans-Andrea Loeliger.
\newblock {Factor Graphs and the Sum-Product Algorithm}.
\newblock \emph{{IEEE} Transactions on Information Theory}, 47:\penalty0
  498--519, 2001.

\bibitem[Hoffmann et~al.(2022)Hoffmann, Braun, and M\"oller]{Hoffmann2022a}
Moritz Hoffmann, Tanya Braun, and M\"oller.
\newblock {Lifted Division for Lifted Hugin Belief Propagation}.
\newblock In \emph{Proceedings of the Twenty-Fifth International Conference on
  Artificial Intelligence and Statistics (AISTATS-2022)}, pages 6501--6510.
  {PMLR}, 2022.

\end{thebibliography}

\clearpage
\appendix

\section{Probabilistic Inference in More Detail} \label{appendix:example_inference}
The task of probabilistic inference describes the computation of marginal distributions of \acp{rv} given observations for other \acp{rv}.
In other words, probabilistic inference refers to query answering, where a query is defined as follows.
\begin{definition}[Query]
	A \emph{query} $P(Q \mid E_1 = e_1, \ldots, E_k = e_k)$ consists of a query term $Q$ and a set of events $\{E_j = e_j\}_{j=1}^{k}$ (called evidence), where $Q$ and $E_1, \ldots, E_k$ are \acp{rv}.
	To query a specific probability instead of a probability distribution, the query term is an event $Q = q$.
\end{definition}
\begin{example}[Probabilistic Inference] \label{ex:eacp_lifting_idea}
	Take a look again at the \ac{fg} depicted in \cref{fig:example_fg} and assume we want to answer the query $P(B = \true)$.
	\begin{align}
		P(B = \true)
		&= \sum_{a \in \range{A}} \sum_{c \in \range{C}} P(A = a, B = \true, C = c) \\
		&= \frac{1}{Z} \sum_{a \in \range{A}} \sum_{c \in \range{C}} \phi_1(a, \true) \cdot \phi_2(c, \true) \\
		&= \frac{1}{Z} \Big( \varphi_1 \varphi_1 + \varphi_1 \varphi_3 + \varphi_3 \varphi_1 + \varphi_3 \varphi_3 \Big).
	\end{align}
	Since $\phi_1(A,B)$ and $\phi_2(C,B)$ are exchangeable (i.e., it holds that $\phi_1(a, \true) = \phi_2(c, \true)$ for all assignments where $a = c$), we can exploit this symmetry to simplify the computation and obtain
	\begin{align}
		P(B = \true)
		&= \frac{1}{Z} \sum_{a \in \range{A}} \sum_{c \in \range{C}} \phi_1(a, \true) \cdot \phi_2(c, \true) \\
		&= \frac{1}{Z} \sum_{a \in \range{A}} \phi_1(a, \true) \sum_{c \in \range{C}} \phi_2(c, \true) \\
		&= \frac{1}{Z} \Bigg( \sum_{a \in \range{A}} \phi_1(a, \true) \Bigg)^2 \\
		&= \frac{1}{Z} \Bigg( \sum_{c \in \range{C}} \phi_2(c, \true) \Bigg)^2 \\
		&= \frac{1}{Z} \Big( \varphi_1 + \varphi_3 \Big)^2.
	\end{align}
	This example illustrates the idea of using a representative of indistinguishable objects for computations (here, either $A$ or $C$ can be chosen as a representative for the group consisting of $A$ and $C$).
	This idea can be generalised to groups of $k$ indistinguishable objects to significantly reduce the computational effort when answering queries.
\end{example}

\section{Formal Description of the Advanced Colour Passing Algorithm} \label{appendix:acp}
The \ac{acp} algorithm~\citep{Luttermann2024a} extends the \acl{cp} algorithm~\citep{Kersting2009a,Ahmadi2013a}, thereby solving the problem of constructing a lifted representation in form of a \ac{pfg} from a given \ac{fg}.
The idea of \ac{acp} is to first find symmetric subgraphs in a propositional \ac{fg} and then group together these symmetric subgraphs.
\Ac{acp} searches for symmetries based on potentials of factors, on ranges and evidence of \acp{rv}, as well as on the graph structure by employing a colour passing routine.
A formal description of the \ac{acp} algorithm is depicted in \cref{alg:acp}.
We next explain the steps undertaken by \ac{acp} in more detail.

\begin{algorithm}
	\caption{Advanced Colour Passing (reprinted from \citealp{Luttermann2024a})}
	\label{alg:acp}
	\alginput{An \ac{fg} $G$ with \acp{rv} $\boldsymbol R = \{R_1, \ldots, R_n\}$, factors $\boldsymbol \Phi = \{\phi_1, \ldots, \phi_m\}$, and evidence \\\hspace*{\algorithmicindent} $\boldsymbol E = \{R_1 = r_1, \ldots, R_k = r_k\}$.} \\
	\algoutput{A lifted representation $G'$ in form of a \ac{pfg} entailing equivalent semantics as $G$.}
	\begin{algorithmic}[1]
		\State Assign each $R_i$ a colour according to $\mathcal R(R_i)$ and $\boldsymbol E$\;
		\State Assign each $\phi_i$ a colour according to order-independent potentials and rearrange arguments accordingly\; \label{line:acp_factor_colour_init}
		\Repeat
			\For{each factor $\phi \in \boldsymbol \Phi$}
				\State $signature_{\phi} \gets [\,]$\;
				\For{each \ac{rv} $R \in neighbours(G, \phi)$}
					\Comment{In order of appearance in $\phi$}\;
					\State $append(signature_{\phi}, R.colour)$\;
				\EndFor
				\State $append(signature_{\phi}, \phi.colour)$\;
			\EndFor
			\State Group together all $\phi$s with the same signature\;
			\State Assign each such cluster a unique colour\;
			\State Set $\phi.colour$ correspondingly for all $\phi$s\;
			\For{each \ac{rv} $R \in \boldsymbol R$}
				\State $signature_{R} \gets [\,]$\;
				\For{each factor $\phi \in neighbours(G, R)$}
					\If{$\phi$ is commutative w.r.t.\ $\boldsymbol S$ and $R \in \boldsymbol S$}
						\State $append(signature_{R}, (\phi.colour, 0))$\;
					\Else
						\State $append(signature_{R}, (\phi.colour, p(R, \phi)))$\;
					\EndIf
				\EndFor
				\State Sort $signature_{R}$ according to colour\;
				\State $append(signature_{R}, R.colour)$\;
			\EndFor
			\State Group together all $R$s with the same signature\;
			\State Assign each such cluster a unique colour\;
			\State Set $R.colour$ correspondingly for all $R$s\;
		\Until{grouping does not change}
		\State $G' \gets$ construct \acs{pfg} from groupings\;
	\end{algorithmic}
\end{algorithm}

\Ac{acp} begins with the colour assignment to variable nodes, meaning that all \acp{rv} having the same range and observed event are assigned the same colour.
\Ac{rv} with different ranges or different observed events are assigned distinct colours since those \ac{rv} do not \enquote{behave in the same way}.
Then, in \cref{line:acp_factor_colour_init}, \ac{acp} assigns colours to factor nodes such that exchangeable factors encoding equivalent semantics according to \cref{def:exchangeable_scaled} are assigned the same colour.
In its original form, \ac{acp} uses the \ac{deft} algorithm~\citep{Luttermann2024d} to efficiently detect exchangeable factors and to assign colours accordingly.
The \ac{deft} algorithm deployed within \ac{acp}, however, is not able to handle factors with differently scaled potentials (i.e., it detects exchangeable factors according to \cref{def:exchangeable_scaled} only for $\alpha = 1$) and thus, our extension replaces the \ac{deft} algorithm by \cref{alg:exchangeable_vector} to detect exchangeable factors independent of the scale of their potentials in \cref{line:acp_factor_colour_init} of \cref{alg:acp}.

After the initial colour assignments, \ac{acp} runs a colour passing routine.
\Ac{acp} first passes the colours from each variable node to its neighbouring factor nodes and after a recolouring step to reduce communication overhead, each factor node $\phi$ sends its colour as well as the position $p(R, \phi)$ of $R$ in $\phi$'s argument list to all of its neighbouring variable nodes $R$, again followed by a recolouring step.
The procedure is then iterated until the identified groups do not change anymore.
In the end, the determined groups are then used to construct a \ac{pfg} entailing equivalent semantics as the input \ac{fg} $G$.

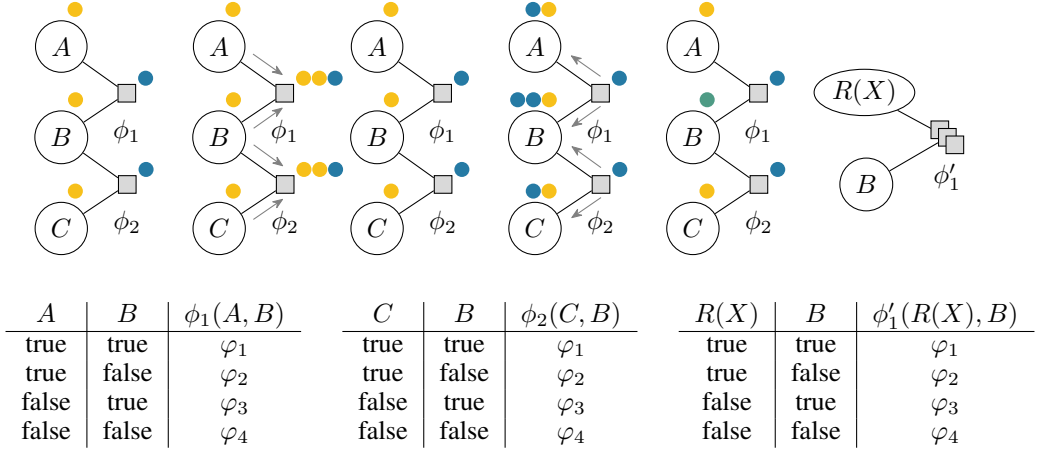
\begin{figure*}[t]
	\centering
	\begin{tikzpicture}[label distance=1mm]
	\node[circle, draw] (A) {$A$};
	\node[circle, draw] (B) [below = 0.5cm of A] {$B$};
	\node[circle, draw] (C) [below = 0.5cm of B] {$C$};
	\factor{below right}{A}{0.25cm and 0.5cm}{270}{$\phi_1$}{f1}
	\factor{below right}{B}{0.25cm and 0.5cm}{270}{$\phi_2$}{f2}

	\nodecolorshift{myyellow}{A}{Acol}{-2.1mm}{1mm}
	\nodecolorshift{myyellow}{B}{Bcol}{-2.1mm}{1mm}
	\nodecolorshift{myyellow}{C}{Ccol}{-2.1mm}{1mm}

	\factorcolor{myblue}{f1}{f1col}
	\factorcolor{myblue}{f2}{f2col}

	\draw (A) -- (f1);
	\draw (B) -- (f1);
	\draw (B) -- (f2);
	\draw (C) -- (f2);

	\node[circle, draw, right = 1.4cm of A] (A1) {$A$};
	\node[circle, draw, below = 0.5cm of A1] (B1) {$B$};
	\node[circle, draw, below = 0.5cm of B1] (C1) {$C$};
	\factor{below right}{A1}{0.25cm and 0.5cm}{270}{$\phi_1$}{f1_1}
	\factor{below right}{B1}{0.25cm and 0.5cm}{270}{$\phi_2$}{f2_1}

	\nodecolorshift{myyellow}{A1}{A1col}{-2.1mm}{1mm}
	\nodecolorshift{myyellow}{B1}{B1col}{-2.1mm}{1mm}
	\nodecolorshift{myyellow}{C1}{C1col}{-2.1mm}{1mm}

	\factorcolor{myyellow}{f1_1}{f1_1col1}
	\factorcolorshift{myyellow}{f1_1}{f1_1col2}{2.1mm}
	\factorcolorshift{myblue}{f1_1}{f1_1col3}{4.2mm}
	\factorcolor{myyellow}{f2_1}{f2_1col1}
	\factorcolorshift{myyellow}{f2_1}{f2_1col2}{2.1mm}
	\factorcolorshift{myblue}{f2_1}{f2_1col3}{4.2mm}

	\coordinate[right=0.1cm of A1, yshift=-0.1cm] (CA1);
	\coordinate[above=0.2cm of f1_1, yshift=-0.1cm] (Cf1_1);
	\coordinate[right=0.1cm of B1, yshift=0.12cm] (CB1);
	\coordinate[right=0.1cm of B1, yshift=-0.1cm] (CB1_1);
	\coordinate[below=0.2cm of f1_1, yshift=0.15cm] (Cf1_1b);
	\coordinate[above=0.2cm of f2_1, yshift=-0.1cm] (Cf2_1);
	\coordinate[right=0.1cm of C1, yshift=0.12cm] (CC1);
	\coordinate[below=0.2cm of f2_1, yshift=0.15cm] (Cf2_1b);

	\begin{pgfonlayer}{bg}
		\draw (A1) -- (f1_1);
		\draw [arc, gray] (CA1) -- (Cf1_1);
		\draw (B1) -- (f1_1);
		\draw [arc, gray] (CB1) -- (Cf1_1b);
		\draw (B1) -- (f2_1);
		\draw [arc, gray] (CB1_1) -- (Cf2_1);
		\draw (C1) -- (f2_1);
		\draw [arc, gray] (CC1) -- (Cf2_1b);
	\end{pgfonlayer}

	\node[circle, draw, right = 1.4cm of A1] (A2) {$A$};
	\node[circle, draw, below = 0.5cm of A2] (B2) {$B$};
	\node[circle, draw, below = 0.5cm of B2] (C2) {$C$};
	\factor{below right}{A2}{0.25cm and 0.5cm}{270}{$\phi_1$}{f1_2}
	\factor{below right}{B2}{0.25cm and 0.5cm}{270}{$\phi_2$}{f2_2}

	\nodecolorshift{myyellow}{A2}{A2col}{-2.1mm}{1mm}
	\nodecolorshift{myyellow}{B2}{B2col}{-2.1mm}{1mm}
	\nodecolorshift{myyellow}{C2}{C2col}{-2.1mm}{1mm}

	\factorcolor{myblue}{f1_2}{f1_2col1}
	\factorcolor{myblue}{f2_2}{f2_2col1}

	\draw (A2) -- (f1_2);
	\draw (B2) -- (f1_2);
	\draw (B2) -- (f2_2);
	\draw (C2) -- (f2_2);

	\node[circle, draw, right = 1.4cm of A2] (A3) {$A$};
	\node[circle, draw, below = 0.5cm of A3] (B3) {$B$};
	\node[circle, draw, below = 0.5cm of B3] (C3) {$C$};
	\factor{below right}{A3}{0.25cm and 0.5cm}{270}{$\phi_1$}{f1_3}
	\factor{below right}{B3}{0.25cm and 0.5cm}{270}{$\phi_2$}{f2_3}

	\nodecolorshift{myblue}{A3}{A3col1}{-4.2mm}{1mm}
	\nodecolorshift{myyellow}{A3}{A3col2}{-2.1mm}{1mm}
	\nodecolorshift{myblue}{B3}{B3col1}{-6.3mm}{1mm}
	\nodecolorshift{myblue}{B3}{B3col2}{-4.2mm}{1mm}
	\nodecolorshift{myyellow}{B3}{B3col3}{-2.1mm}{1mm}
	\nodecolorshift{myblue}{C3}{C3col1}{-4.2mm}{1mm}
	\nodecolorshift{myyellow}{C3}{C3col2}{-2.1mm}{1mm}

	\factorcolor{myblue}{f1_3}{f1_3col1}
	\factorcolor{myblue}{f2_3}{f2_3col1}

	\coordinate[right=0.1cm of A3, yshift=-0.1cm] (CA3);
	\coordinate[above=0.2cm of f1_3, yshift=-0.1cm] (Cf1_3);
	\coordinate[right=0.1cm of B3, yshift=0.12cm] (CB3);
	\coordinate[right=0.1cm of B3, yshift=-0.1cm] (CB1_3);
	\coordinate[below=0.2cm of f1_3, yshift=0.15cm] (Cf1_3b);
	\coordinate[above=0.2cm of f2_3, yshift=-0.1cm] (Cf2_3);
	\coordinate[right=0.1cm of C3, yshift=0.12cm] (CC3);
	\coordinate[below=0.2cm of f2_3, yshift=0.15cm] (Cf2_3b);

	\begin{pgfonlayer}{bg}
		\draw (A3) -- (f1_3);
		\draw [arc, gray] (Cf1_3) -- (CA3);
		\draw (B3) -- (f1_3);
		\draw [arc, gray] (Cf1_3b) -- (CB3);
		\draw (B3) -- (f2_3);
		\draw [arc, gray] (Cf2_3) -- (CB1_3);
		\draw (C3) -- (f2_3);
		\draw [arc, gray] (Cf2_3b) -- (CC3);
	\end{pgfonlayer}

	\node[circle, draw, right = 1.4cm of A3] (A4) {$A$};
	\node[circle, draw, below = 0.5cm of A4] (B4) {$B$};
	\node[circle, draw, below = 0.5cm of B4] (C4) {$C$};
	\factor{below right}{A4}{0.25cm and 0.5cm}{270}{$\phi_1$}{f1_4}
	\factor{below right}{B4}{0.25cm and 0.5cm}{270}{$\phi_2$}{f2_4}

	\nodecolorshift{myyellow}{A4}{A4col}{-2.1mm}{1mm}
	\nodecolorshift{mygreen}{B4}{B4col}{-2.1mm}{1mm}
	\nodecolorshift{myyellow}{C4}{C4col}{-2.1mm}{1mm}

	\factorcolor{myblue}{f1_4}{f1_4col1}
	\factorcolor{myblue}{f2_4}{f2_4col1}

	\draw (A4) -- (f1_4);
	\draw (B4) -- (f1_4);
	\draw (B4) -- (f2_4);
	\draw (C4) -- (f2_4);

	\pfs{right}{B4}{2.9cm}{270}{$\phi'_1$}{f12a}{f12}{f12b}

	\node[ellipse, inner sep = 1.2pt, draw, above left = 0.25cm and 0.5cm of f12] (AC) {$R(X)$};
	\node[circle, draw] (B) [below left = 0.25cm and 0.7cm of f12] {$B$};

	\begin{pgfonlayer}{bg}
		\draw (AC) -- (f12);
		\draw (B) -- (f12);
	\end{pgfonlayer}

	\node[below = 0.5cm of C2, xshift=1.5cm] (tab_f2) {
		\begin{tabular}{c|c|c}
			$C$   & $B$   & $\phi_2(C,B)$ \\ \hline
			true  & true  & $\varphi_1$ \\
			true  & false & $\varphi_2$ \\
			false & true  & $\varphi_3$ \\
			false & false & $\varphi_4$ \\
		\end{tabular}
	};

	\node[left = 0.3cm of tab_f2] (tab_f1) {
		\begin{tabular}{c|c|c}
			$A$   & $B$   & $\phi_1(A,B)$ \\ \hline
			true  & true  & $\varphi_1$ \\
			true  & false & $\varphi_2$ \\
			false & true  & $\varphi_3$ \\
			false & false & $\varphi_4$ \\
		\end{tabular}
	};

	\node[right = 0.3cm of tab_f2] (tab_f12) {
		\begin{tabular}{c|c|c}
			$R(X)$   & $B$   & $\phi'_1(R(X),B)$ \\ \hline
			true  & true  & $\varphi_1$ \\
			true  & false & $\varphi_2$ \\
			false & true  & $\varphi_3$ \\
			false & false & $\varphi_4$ \\
		\end{tabular}
	};
\end{tikzpicture}
	\caption{A visualisation of the steps undertaken by \cref{alg:acp} on an input \ac{fg} with only Boolean \acp{rv} and no evidence (left). Colours are first passed from variable nodes to factor nodes, followed by a recolouring, and then passed back from factor nodes to variable nodes, again followed by a recolouring. The colour passing procedure is iterated until convergence and the resulting \ac{pfg} is depicted on the right. This figure is reprinted from \citep{Luttermann2024a}.}
	\label{fig:acp_example}
\end{figure*}

\Cref{fig:acp_example} illustrates \ac{acp} on an example input \ac{fg}.
In this example, all \acp{rv} are Boolean and there is no evidence available (i.e., $\boldsymbol E = \emptyset$).
Initially, \ac{acp} assigns all \acp{rv} the same colour (e.g., $\mathrm{yellow}$) because they have the same range (Boolean) and evidence (no evidence at all).
As $\phi_1$ and $\phi_2$ encode equivalent semantics (they represent identical potentials), they are assigned the same colour (e.g., $\mathrm{blue}$).
The colours are then passed from variable nodes to factor nodes and as each factor has two neighbouring \acp{rv}, all factors receive the same messages.
After recolouring the factors, their colour assignments remain identical to their initial assignments as they all received the same message.
The purpose of the recolouring is mainly to reduce communication overhead.
Afterwards, the factor nodes send their colours to their neighbouring variable nodes.
Each message from a factor to a \ac{rv} contains the position of the \ac{rv} in the factor if the factor is not commutative\footnote{A commutative factor is a factor which maps its arguments to the same potential value independent of the order of a (sub)set of its assigned values~\citep{Luttermann2024f}.}, else the position is replaced by zero.
For simplicity, there is no commutative factor in the example shown in \cref{fig:acp_example}.
Thus, $A$ receives a message $(\mathrm{blue},1)$ from $\phi_1$, $B$ receives a message $(\mathrm{blue},2)$ from $\phi_1$ as well as a message $(\mathrm{blue},2)$ from $\phi_2$, and $C$ receives a message $(\mathrm{blue},1)$ from $\phi_2$.
Consequently, $A$ and $C$ receive identical messages (positions are not shown in \cref{fig:acp_example}) and after the recolouring step, $A$ and $C$ share the same colour while $B$ is assigned a different colour.
The groupings do not change in further iterations and the resulting \ac{pfg} is shown on the right (where $X$ has domain $\{A,C\}$).

For more details about the colour passing routine and the grouping of nodes, we refer the reader to \citep{Luttermann2024a}.
The authors also demonstrate the benefits of a lifted representation in terms of speedup for probabilistic inference.

\section{Missing Proofs} \label{appendix:missing_proofs}
\cosineDistanceMeasureTheorem*
\begin{proof}
	If $\phi_1$ and $\phi_2$ are exchangeable, there exists a scalar $\alpha \in \mathbb{R}^+$ and a permutation $\pi$ of $\{1, \ldots, n\}$ such that for all $r_1, \ldots, r_n \in \times_{i=1}^n \range{R_i}$ it holds that $\phi_1(r_1, \ldots, r_n) = \alpha \cdot \phi_2(r_{\pi(1)}, \ldots, r_{\pi(n)})$.
	Without loss of generality, assume that the arguments of $\phi_2$ are rearranged such that for all $r_1, \ldots, r_n \in \times_{i=1}^n \range{R_i}$ it holds that $\phi_1(r_1, \ldots, r_n) = \alpha \cdot \phi_2(r_1, \ldots, r_n)$.
	Then, entering $\phi_1(r_1, \ldots, r_n) = \alpha \cdot \phi_2(r_1, \ldots, r_n)$ into \cref{eq:cosine_distance} yields
	\begingroup
	\allowdisplaybreaks
	\begin{align}
		D_{\cos}(\phi_1, \phi_2)
		&= 1 - \frac{\alpha \cdot \sum\limits_{\boldsymbol a \in \times_{i=1}^n \range{R_i}} \phi_2(\boldsymbol a)^2}{\sqrt{\alpha^2 \cdot \sum\limits_{\boldsymbol a \in \times_{i=1}^n \range{R_i}} \phi_2(\boldsymbol a)^2} \cdot \sqrt{\sum\limits_{\boldsymbol a \in \times_{i=1}^n \range{R_i}} \phi_2(\boldsymbol a)^2}} \\
		&= 1 - \frac{\alpha \cdot \sum\limits_{\boldsymbol a \in \times_{i=1}^n \range{R_i}} \phi_2(\boldsymbol a)^2}{\sqrt{\alpha^2} \cdot \sqrt{\sum\limits_{\boldsymbol a \in \times_{i=1}^n \range{R_i}} \phi_2(\boldsymbol a)^2} \cdot \sqrt{\sum\limits_{\boldsymbol a \in \times_{i=1}^n \range{R_i}} \phi_2(\boldsymbol a)^2}} \\
		&= 1 - \frac{\alpha \cdot \sum\limits_{\boldsymbol a \in \times_{i=1}^n \range{R_i}} \phi_2(\boldsymbol a)^2}{\alpha \cdot \sum\limits_{\boldsymbol a \in \times_{i=1}^n \range{R_i}} \phi_2(\boldsymbol a)^2} \\
		&= 0.
	\end{align}
	\endgroup
\end{proof}

\section{Checking Collinearity of Vectors without Division Operations} \label{appendix:collinearity_wo_division}
Let $\vec \phi_1 = (\varphi_1, \ldots, \varphi_n)$ and $\vec \phi_2 = (\psi_1, \ldots, \psi_n)$ denote two vector representations of factors $\phi_1$ and $\phi_2$.
Our goal is to check whether $\vec \phi_1$ and $\vec \phi_2$ are collinear, that is, whether there exists a scalar $\alpha$ such that $\vec \phi_1 = \alpha \cdot \vec \phi_2$, without using division or square root operations.
By doing so, we avoid floating point arithmetic issues.
In particular, if the potential values $\varphi_1, \ldots, \varphi_n$ and $\psi_1, \ldots, \psi_n$ are integers (which is the case if the initial \ac{fg} is learned from data by counting occurrences of specific assignments), then no floating point numbers are involved in our calculations during exchangeability checks.
Note that, even if we apply the cosine distance to check for collinearity, the model itself can still consist only of integer potential values and only the check for exchangeability involves floating point arithmetics in this case (as opposed to a model where potential values are normalised at the beginning, which results in floating point numbers for its potential values).

We next provide the technical details to check whether $\vec \phi_1 = \alpha \cdot \vec \phi_2$ holds by using only multiplication operations.
If $\vec \phi_1$ and $\vec \phi_2$ are collinear, we have $\varphi_1 = \alpha \cdot \psi_1, \ldots, \varphi_n = \alpha \cdot \psi_n$ and hence $\alpha = \varphi_i \mathbin{/} \psi_i$ for all $i \in \{1, \ldots, n\}$.
Since $\alpha = \varphi_i \mathbin{/} \psi_i$ holds for all $i \in \{1, \ldots, n\}$, we can enter $\alpha = \varphi_1 \mathbin{/} \psi_1$ into the equation $\varphi_i = \alpha \cdot \psi_i$ and obtain
\begingroup
\allowdisplaybreaks
\begin{align}
	\varphi_i &= \frac{\varphi_1}{\psi_1} \cdot \psi_i \\
	\Leftrightarrow \varphi_i \cdot \psi_1 &= \varphi_1 \cdot \psi_i.
\end{align}
\endgroup
In consequence, we can check whether $\vec \phi_1$ and $\vec \phi_2$ are collinear by verifying whether $\varphi_i \cdot \psi_1 = \varphi_1 \cdot \psi_i$ holds for all $i \in \{1, \ldots, n\}$.
Verifying this equation involves only multiplication operations and hence avoids floating point arithmetics as much as possible.

To supplement \cref{sec:permuted_arguments_scaled}, we next show that the aforementioned approach can also be extended to handle permutations of arguments.
In case there exists a permutation $\pi$ of $\{1, \ldots, n\}$ such that for all $r_1, \ldots, r_n \in \times_{i=1}^n \range{R_i}$ it holds that $\phi_1(r_1, \ldots, r_n) = \alpha \cdot \phi_2(r_{\pi(1)}, \ldots, r_{\pi(n)})$, it does not necessarily hold that $\alpha = \varphi_i \mathbin{/} \psi_i$ for $i \in \{1, \ldots, n\}$.
However, we know that $\alpha = \max_{i \in \{1, \ldots, n\}} \varphi_i \mathbin{/} \max_{i \in \{1, \ldots, n\}} \psi_i$ (analogously for $\min$ instead of $\max$).
At the same time, we know that in the first bucket, there is only a single potential value, which remains the same for all possible permutations $\pi$ of $\{1, \ldots, n\}$ because the corresponding assignment assigns all arguments the same range value (e.g., $\mathrm{true}$).
We thus have $\alpha = \varphi_1 \mathbin{/} \psi_1$ again, regardless of the order of arguments.
Consequently, when searching for possible swaps to obtain $\phi_1^{\succ}(b) = \alpha \cdot \phi_2^{\succ}(b)$ in \cref{line:swap} of \cref{alg:exchangeable_vector}, we can find identical potential values up to the scalar $\alpha$ for each $\varphi_i \in \phi_1^{\succ}(b)$ in $\phi_2^{\succ}(b)$ by iterating over all potential values $\psi_i \in \phi_2^{\succ}(b)$ and checking whether $\varphi_i \cdot \psi_1 = \varphi_1 \cdot \psi_i$ holds.
Having found identical potential values up to $\alpha$, possible swaps to obtain $\phi_1^{\succ}(b) = \alpha \cdot \phi_2^{\succ}(b)$ are again determined by the positions of the identical potential values (up to $\alpha$) in the ordered multisets.
In particular, if $\varphi_i = \alpha \cdot \psi_i$ holds for $\varphi_i \in \phi_1^{\succ}(b)$ and $\psi_i \in \phi_2^{\succ}(b)$, then $\varphi_i$ and $\psi_i$ must be located at the same position in $\phi_1^{\succ}(b)$ and $\phi_2^{\succ}(b)$, respectively, to achieve that $\phi_1^{\succ}(b) = \alpha \cdot \phi_2^{\succ}(b)$.

Finally, we remark that this approach can be further extended to allow for a small deviation between potential values that are considered identical.
More specifically, instead of requiring that the equality $\varphi_i = \alpha \cdot \psi_i$ holds for all $i \in \{1, \ldots, n\}$, we can allow for a small deviation of factor $(1 + \varepsilon)$ between potential values and require that $\varphi_i \in [\alpha \cdot \psi_i \cdot (1 - \varepsilon), \alpha \cdot \psi_i \cdot (1 + \varepsilon)]$ holds for all $i \in \{1, \ldots, n\}$.
To check whether $\varphi_i$ lies in the specified interval, we need to ensure that the following inequalities hold:
\begin{align}
	\varphi_i &\geq \alpha \cdot \psi_i \cdot (1 - \varepsilon), \text{ and} \label{eq:int_interval_check_lower} \\
	\varphi_i &\leq \alpha \cdot \psi_i \cdot (1 + \varepsilon). \label{eq:int_interval_check_higher}
\end{align}
By entering $\alpha = \varphi_1 \mathbin{/} \psi_1$ into \cref{eq:int_interval_check_lower,eq:int_interval_check_higher}, we obtain
\begin{align}
	\varphi_i \cdot \psi_1 &\geq \varphi_1 \cdot \psi_i \cdot (1 - \varepsilon), \text{ and} \label{eq:int_interval_check_ineq_1} \\
	\varphi_i \cdot \psi_1 &\leq \varphi_1 \cdot \psi_i \cdot (1 + \varepsilon). \label{eq:int_interval_check_ineq_2}
\end{align}
Note that in general, it holds that $\varepsilon \in [0, 1]$ is a small floating point number.
To avoid floating point arithmetics, we can restrict $\varepsilon$ to be a rational number, which does not limit the practical applicability of this approach, as $\varepsilon$ can still be chosen arbitrarily small (e.g., $1 \mathbin{/} q$ for some arbitrary $q \in \mathbb{Z}$).
In particular, if $\varepsilon$ is a rational number, it can be represented by a fraction $\varepsilon = p \mathbin{/} q$ where $p \in \mathbb{Z}$ and $q \in \mathbb{Z}$ are integers.
Making use of this property, entering $\varepsilon = p \mathbin{/} q$ into \cref{eq:int_interval_check_ineq_1} yields
\begin{align}
	&\phantom{~\Leftrightarrow} \varphi_i \cdot \psi_1 &&\geq \varphi_1 \cdot \psi_i \cdot (1 - \varepsilon) \\
	&\Leftrightarrow \varphi_i \cdot \psi_1 &&\geq \varphi_1 \cdot (\psi_i - \psi_i \cdot \varepsilon) \\
	&\Leftrightarrow \varphi_i \cdot \psi_1 &&\geq \varphi_1 \cdot \psi_i - \varphi_1 \cdot \psi_i \cdot \varepsilon \\
	&\Leftrightarrow \varphi_i \cdot \psi_1 &&\geq \varphi_1 \cdot \psi_i - \varphi_1 \cdot \psi_i \cdot \frac{p}{q} \\
	&\Leftrightarrow \varphi_i \cdot \psi_1 \cdot q &&\geq \varphi_1 \cdot \psi_i \cdot q - \varphi_1 \cdot \psi_i \cdot p
\end{align}
and analogously for \cref{eq:int_interval_check_ineq_2}, we get
\begin{align}
	\varphi_i \cdot \psi_1 \cdot q &\leq \varphi_1 \cdot \psi_i \cdot q + \varphi_1 \cdot \psi_i \cdot p.
\end{align}
Again, these inequalities can be checked by using only multiplication operations, thereby allowing us to check for exchangeable factors independent of the scale of their potentials while at the same time allowing for arbitrary permutations of arguments and even a small deviation between potential values without using any floating point arithmetics (if the potential values themselves are integers).

\section{Additional Experimental Results} \label{appendix:more_eval}
In addition to the experimental results provided in \cref{sec:aacp_experiments}, we give further experimental results in this section.
We again evaluate the run times of running \acl{lve} on the output of \ac{acp} as well as of running \acl{lve} on the output of \ac{aacp} and also investigate the average number $\beta$ of queries after which the additional offline overhead of \ac{aacp} compared to \ac{acp} amortises.
The instances used in this section are identical to those used in \cref{sec:aacp_experiments} but we do not average the results over the proportion $p \in \{0.01, 0.05, 0.1, 0.15\}$ of scaled factors.
Instead, we present separate results for each individual choice of $p$ to highlight the effect of $p$ on both the run times for online query answering as well as on the offline overhead for constructing the \ac{pfg}.

\begin{figure}
	\centering
	\resizebox{0.49\textwidth}{!}{
\begin{tikzpicture}[x=1pt,y=1pt]
\definecolor{fillColor}{RGB}{255,255,255}
\path[use as bounding box,fill=fillColor,fill opacity=0.00] (0,0) rectangle (209.58,115.63);
\begin{scope}
\path[clip] (  0.00,  0.00) rectangle (209.58,115.63);
\definecolor{drawColor}{RGB}{255,255,255}
\definecolor{fillColor}{RGB}{255,255,255}

\path[draw=drawColor,line width= 0.6pt,line join=round,line cap=round,fill=fillColor] (  0.00,  0.00) rectangle (209.58,115.63);
\end{scope}
\begin{scope}
\path[clip] ( 35.23, 29.80) rectangle (204.08,110.13);
\definecolor{fillColor}{RGB}{255,255,255}

\path[fill=fillColor] ( 35.23, 29.80) rectangle (204.08,110.13);
\definecolor{drawColor}{RGB}{230,159,0}

\path[draw=drawColor,line width= 0.6pt,line join=round] ( 42.90, 34.35) --
	( 43.20, 35.85) --
	( 43.80, 37.73) --
	( 45.01, 35.35) --
	( 47.41, 38.92) --
	( 52.22, 36.67) --
	( 61.83, 40.68) --
	( 81.05, 41.41) --
	(119.51, 48.04) --
	(196.41, 50.51);
\definecolor{drawColor}{RGB}{46,37,133}

\path[draw=drawColor,line width= 0.6pt,dash pattern=on 2pt off 2pt ,line join=round] ( 42.90, 33.45) --
	( 43.20, 35.38) --
	( 43.80, 37.03) --
	( 45.01, 39.88) --
	( 47.41, 41.84) --
	( 52.22, 57.88) --
	( 61.83, 66.49) --
	( 81.05, 74.61) --
	(119.51, 84.00) --
	(196.41,106.48);
\definecolor{drawColor}{RGB}{230,159,0}
\definecolor{fillColor}{RGB}{230,159,0}

\path[draw=drawColor,line width= 0.4pt,line join=round,line cap=round,fill=fillColor] ( 47.41, 38.92) circle (  1.96);

\path[draw=drawColor,line width= 0.4pt,line join=round,line cap=round,fill=fillColor] ( 43.20, 35.85) circle (  1.96);

\path[draw=drawColor,line width= 0.4pt,line join=round,line cap=round,fill=fillColor] ( 45.01, 35.35) circle (  1.96);

\path[draw=drawColor,line width= 0.4pt,line join=round,line cap=round,fill=fillColor] ( 52.22, 36.67) circle (  1.96);

\path[draw=drawColor,line width= 0.4pt,line join=round,line cap=round,fill=fillColor] ( 42.90, 34.35) circle (  1.96);

\path[draw=drawColor,line width= 0.4pt,line join=round,line cap=round,fill=fillColor] (196.41, 50.51) circle (  1.96);

\path[draw=drawColor,line width= 0.4pt,line join=round,line cap=round,fill=fillColor] ( 43.80, 37.73) circle (  1.96);

\path[draw=drawColor,line width= 0.4pt,line join=round,line cap=round,fill=fillColor] (119.51, 48.04) circle (  1.96);

\path[draw=drawColor,line width= 0.4pt,line join=round,line cap=round,fill=fillColor] ( 61.83, 40.68) circle (  1.96);

\path[draw=drawColor,line width= 0.4pt,line join=round,line cap=round,fill=fillColor] ( 81.05, 41.41) circle (  1.96);
\definecolor{fillColor}{RGB}{46,37,133}

\path[fill=fillColor] ( 45.45, 39.88) --
	( 49.37, 39.88) --
	( 49.37, 43.81) --
	( 45.45, 43.81) --
	cycle;

\path[fill=fillColor] ( 41.24, 33.42) --
	( 45.17, 33.42) --
	( 45.17, 37.34) --
	( 41.24, 37.34) --
	cycle;

\path[fill=fillColor] ( 43.04, 37.92) --
	( 46.97, 37.92) --
	( 46.97, 41.84) --
	( 43.04, 41.84) --
	cycle;

\path[fill=fillColor] ( 50.25, 55.92) --
	( 54.18, 55.92) --
	( 54.18, 59.85) --
	( 50.25, 59.85) --
	cycle;

\path[fill=fillColor] ( 40.94, 31.49) --
	( 44.87, 31.49) --
	( 44.87, 35.42) --
	( 40.94, 35.42) --
	cycle;

\path[fill=fillColor] (194.45,104.52) --
	(198.37,104.52) --
	(198.37,108.44) --
	(194.45,108.44) --
	cycle;

\path[fill=fillColor] ( 41.84, 35.07) --
	( 45.77, 35.07) --
	( 45.77, 38.99) --
	( 41.84, 38.99) --
	cycle;

\path[fill=fillColor] (117.54, 82.04) --
	(121.47, 82.04) --
	(121.47, 85.97) --
	(117.54, 85.97) --
	cycle;

\path[fill=fillColor] ( 59.87, 64.53) --
	( 63.79, 64.53) --
	( 63.79, 68.46) --
	( 59.87, 68.46) --
	cycle;

\path[fill=fillColor] ( 79.09, 72.65) --
	( 83.02, 72.65) --
	( 83.02, 76.58) --
	( 79.09, 76.58) --
	cycle;
\end{scope}
\begin{scope}
\path[clip] (  0.00,  0.00) rectangle (209.58,115.63);
\definecolor{drawColor}{RGB}{0,0,0}

\path[draw=drawColor,line width= 0.6pt,line join=round] ( 35.23, 29.80) --
	( 35.23,110.13);

\path[draw=drawColor,line width= 0.6pt,line join=round] ( 36.65,107.67) --
	( 35.23,110.13) --
	( 33.81,107.67);
\end{scope}
\begin{scope}
\path[clip] (  0.00,  0.00) rectangle (209.58,115.63);
\definecolor{drawColor}{gray}{0.30}

\node[text=drawColor,anchor=base east,inner sep=0pt, outer sep=0pt, scale=  0.88] at ( 30.28, 46.77) {30};

\node[text=drawColor,anchor=base east,inner sep=0pt, outer sep=0pt, scale=  0.88] at ( 30.28, 62.72) {50};

\node[text=drawColor,anchor=base east,inner sep=0pt, outer sep=0pt, scale=  0.88] at ( 30.28, 84.36) {100};
\end{scope}
\begin{scope}
\path[clip] (  0.00,  0.00) rectangle (209.58,115.63);
\definecolor{drawColor}{gray}{0.20}

\path[draw=drawColor,line width= 0.6pt,line join=round] ( 32.48, 49.80) --
	( 35.23, 49.80);

\path[draw=drawColor,line width= 0.6pt,line join=round] ( 32.48, 65.75) --
	( 35.23, 65.75);

\path[draw=drawColor,line width= 0.6pt,line join=round] ( 32.48, 87.39) --
	( 35.23, 87.39);
\end{scope}
\begin{scope}
\path[clip] (  0.00,  0.00) rectangle (209.58,115.63);
\definecolor{drawColor}{RGB}{0,0,0}

\path[draw=drawColor,line width= 0.6pt,line join=round] ( 35.23, 29.80) --
	(204.08, 29.80);

\path[draw=drawColor,line width= 0.6pt,line join=round] (201.62, 28.38) --
	(204.08, 29.80) --
	(201.62, 31.23);
\end{scope}
\begin{scope}
\path[clip] (  0.00,  0.00) rectangle (209.58,115.63);
\definecolor{drawColor}{gray}{0.20}

\path[draw=drawColor,line width= 0.6pt,line join=round] ( 42.60, 27.05) --
	( 42.60, 29.80);

\path[draw=drawColor,line width= 0.6pt,line join=round] ( 80.15, 27.05) --
	( 80.15, 29.80);

\path[draw=drawColor,line width= 0.6pt,line join=round] (117.70, 27.05) --
	(117.70, 29.80);

\path[draw=drawColor,line width= 0.6pt,line join=round] (155.25, 27.05) --
	(155.25, 29.80);

\path[draw=drawColor,line width= 0.6pt,line join=round] (192.80, 27.05) --
	(192.80, 29.80);
\end{scope}
\begin{scope}
\path[clip] (  0.00,  0.00) rectangle (209.58,115.63);
\definecolor{drawColor}{gray}{0.30}

\node[text=drawColor,anchor=base,inner sep=0pt, outer sep=0pt, scale=  0.88] at ( 42.60, 18.79) {0};

\node[text=drawColor,anchor=base,inner sep=0pt, outer sep=0pt, scale=  0.88] at ( 80.15, 18.79) {250};

\node[text=drawColor,anchor=base,inner sep=0pt, outer sep=0pt, scale=  0.88] at (117.70, 18.79) {500};

\node[text=drawColor,anchor=base,inner sep=0pt, outer sep=0pt, scale=  0.88] at (155.25, 18.79) {750};

\node[text=drawColor,anchor=base,inner sep=0pt, outer sep=0pt, scale=  0.88] at (192.80, 18.79) {1000};
\end{scope}
\begin{scope}
\path[clip] (  0.00,  0.00) rectangle (209.58,115.63);
\definecolor{drawColor}{RGB}{0,0,0}

\node[text=drawColor,anchor=base,inner sep=0pt, outer sep=0pt, scale=  1.00] at (119.66,  7.44) {$d$};
\end{scope}
\begin{scope}
\path[clip] (  0.00,  0.00) rectangle (209.58,115.63);
\definecolor{drawColor}{RGB}{0,0,0}

\node[text=drawColor,rotate= 90.00,anchor=base,inner sep=0pt, outer sep=0pt, scale=  1.00] at ( 12.39, 69.97) {time (ms)};
\end{scope}
\begin{scope}
\path[clip] (  0.00,  0.00) rectangle (209.58,115.63);

\path[] ( 37.74, 78.13) rectangle ( 93.50,118.04);
\end{scope}
\begin{scope}
\path[clip] (  0.00,  0.00) rectangle (209.58,115.63);
\definecolor{drawColor}{RGB}{230,159,0}

\path[draw=drawColor,line width= 0.6pt,line join=round] ( 44.69,105.31) -- ( 56.25,105.31);
\end{scope}
\begin{scope}
\path[clip] (  0.00,  0.00) rectangle (209.58,115.63);
\definecolor{drawColor}{RGB}{230,159,0}
\definecolor{fillColor}{RGB}{230,159,0}

\path[draw=drawColor,line width= 0.4pt,line join=round,line cap=round,fill=fillColor] ( 50.47,105.31) circle (  1.96);
\end{scope}
\begin{scope}
\path[clip] (  0.00,  0.00) rectangle (209.58,115.63);
\definecolor{drawColor}{RGB}{46,37,133}

\path[draw=drawColor,line width= 0.6pt,dash pattern=on 2pt off 2pt ,line join=round] ( 44.69, 90.86) -- ( 56.25, 90.86);
\end{scope}
\begin{scope}
\path[clip] (  0.00,  0.00) rectangle (209.58,115.63);
\definecolor{fillColor}{RGB}{46,37,133}

\path[fill=fillColor] ( 48.51, 88.89) --
	( 52.43, 88.89) --
	( 52.43, 92.82) --
	( 48.51, 92.82) --
	cycle;
\end{scope}
\begin{scope}
\path[clip] (  0.00,  0.00) rectangle (209.58,115.63);
\definecolor{drawColor}{RGB}{0,0,0}

\node[text=drawColor,anchor=base west,inner sep=0pt, outer sep=0pt, scale=  0.80] at ( 63.19,102.55) {$\alpha$-ACP};
\end{scope}
\begin{scope}
\path[clip] (  0.00,  0.00) rectangle (209.58,115.63);
\definecolor{drawColor}{RGB}{0,0,0}

\node[text=drawColor,anchor=base west,inner sep=0pt, outer sep=0pt, scale=  0.80] at ( 63.19, 88.10) {ACP};
\end{scope}
\end{tikzpicture}}
	\resizebox{0.49\textwidth}{!}{
\begin{tikzpicture}[x=1pt,y=1pt]
\definecolor{fillColor}{RGB}{255,255,255}
\path[use as bounding box,fill=fillColor,fill opacity=0.00] (0,0) rectangle (209.58,115.63);
\begin{scope}
\path[clip] (  0.00,  0.00) rectangle (209.58,115.63);
\definecolor{drawColor}{RGB}{255,255,255}
\definecolor{fillColor}{RGB}{255,255,255}

\path[draw=drawColor,line width= 0.6pt,line join=round,line cap=round,fill=fillColor] (  0.00,  0.00) rectangle (209.58,115.63);
\end{scope}
\begin{scope}
\path[clip] ( 33.27, 29.80) rectangle (204.08,110.13);
\definecolor{fillColor}{RGB}{255,255,255}

\path[fill=fillColor] ( 33.27, 29.80) rectangle (204.08,110.13);
\definecolor{drawColor}{RGB}{46,37,133}

\path[draw=drawColor,line width= 0.6pt,line join=round] ( 43.32, 86.29) -- ( 43.32,106.48);

\path[draw=drawColor,line width= 0.6pt,line join=round] ( 43.32, 54.49) -- ( 43.32, 33.45);
\definecolor{fillColor}{RGB}{46,37,133}

\path[draw=drawColor,line width= 0.6pt,fill=fillColor,fill opacity=0.20] ( 37.04, 86.29) --
	( 37.04, 54.49) --
	( 49.60, 54.49) --
	( 49.60, 86.29) --
	( 37.04, 86.29) --
	cycle;

\path[draw=drawColor,line width= 1.1pt] ( 37.04, 72.94) -- ( 49.60, 72.94);

\path[draw=drawColor,line width= 0.6pt,line join=round] ( 60.07, 53.07) -- ( 60.07, 66.50);

\path[draw=drawColor,line width= 0.6pt,line join=round] ( 60.07, 40.62) -- ( 60.07, 33.45);

\path[draw=drawColor,line width= 0.6pt,fill=fillColor,fill opacity=0.20] ( 53.79, 53.07) --
	( 53.79, 40.62) --
	( 66.35, 40.62) --
	( 66.35, 53.07) --
	( 53.79, 53.07) --
	cycle;

\path[draw=drawColor,line width= 1.1pt] ( 53.79, 45.80) -- ( 66.35, 45.80);

\path[draw=drawColor,line width= 0.6pt,line join=round] ( 76.81, 68.17) -- ( 76.81, 81.22);

\path[draw=drawColor,line width= 0.6pt,line join=round] ( 76.81, 33.45) -- ( 76.81, 33.45);

\path[draw=drawColor,line width= 0.6pt,fill=fillColor,fill opacity=0.20] ( 70.53, 68.17) --
	( 70.53, 33.45) --
	( 83.09, 33.45) --
	( 83.09, 68.17) --
	( 70.53, 68.17) --
	cycle;

\path[draw=drawColor,line width= 1.1pt] ( 70.53, 43.01) -- ( 83.09, 43.01);

\path[draw=drawColor,line width= 0.6pt,line join=round] ( 93.56, 87.52) -- ( 93.56, 91.41);

\path[draw=drawColor,line width= 0.6pt,line join=round] ( 93.56, 66.22) -- ( 93.56, 52.56);

\path[draw=drawColor,line width= 0.6pt,fill=fillColor,fill opacity=0.20] ( 87.28, 87.52) --
	( 87.28, 66.22) --
	( 99.84, 66.22) --
	( 99.84, 87.52) --
	( 87.28, 87.52) --
	cycle;

\path[draw=drawColor,line width= 1.1pt] ( 87.28, 78.50) -- ( 99.84, 78.50);

\path[draw=drawColor,line width= 0.6pt,line join=round] (110.30, 71.67) -- (110.30, 71.67);

\path[draw=drawColor,line width= 0.6pt,line join=round] (110.30, 71.67) -- (110.30, 71.67);

\path[draw=drawColor,line width= 0.6pt,fill=fillColor,fill opacity=0.20] (104.03, 71.67) --
	(104.03, 71.67) --
	(116.58, 71.67) --
	(116.58, 71.67) --
	(104.03, 71.67) --
	cycle;

\path[draw=drawColor,line width= 1.1pt] (104.03, 71.67) -- (116.58, 71.67);

\path[draw=drawColor,line width= 0.6pt,line join=round] (143.80, 71.22) -- (143.80, 71.67);

\path[draw=drawColor,line width= 0.6pt,line join=round] (143.80, 67.26) -- (143.80, 63.74);

\path[draw=drawColor,line width= 0.6pt,fill=fillColor,fill opacity=0.20] (137.52, 71.22) --
	(137.52, 67.26) --
	(150.08, 67.26) --
	(150.08, 71.22) --
	(137.52, 71.22) --
	cycle;

\path[draw=drawColor,line width= 1.1pt] (137.52, 70.78) -- (150.08, 70.78);

\path[draw=drawColor,line width= 0.6pt,line join=round] (160.54, 50.58) -- (160.54, 52.56);

\path[draw=drawColor,line width= 0.6pt,line join=round] (160.54, 41.03) -- (160.54, 33.45);

\path[draw=drawColor,line width= 0.6pt,fill=fillColor,fill opacity=0.20] (154.26, 50.58) --
	(154.26, 41.03) --
	(166.82, 41.03) --
	(166.82, 50.58) --
	(154.26, 50.58) --
	cycle;

\path[draw=drawColor,line width= 1.1pt] (154.26, 48.60) -- (166.82, 48.60);

\path[draw=drawColor,line width= 0.6pt,line join=round] (194.04, 72.07) -- (194.04, 73.29);

\path[draw=drawColor,line width= 0.6pt,line join=round] (194.04, 64.10) -- (194.04, 62.11);

\path[draw=drawColor,line width= 0.6pt,fill=fillColor,fill opacity=0.20] (187.76, 72.07) --
	(187.76, 64.10) --
	(200.32, 64.10) --
	(200.32, 72.07) --
	(187.76, 72.07) --
	cycle;

\path[draw=drawColor,line width= 1.1pt] (187.76, 67.98) -- (200.32, 67.98);
\end{scope}
\begin{scope}
\path[clip] (  0.00,  0.00) rectangle (209.58,115.63);
\definecolor{drawColor}{RGB}{0,0,0}

\path[draw=drawColor,line width= 0.6pt,line join=round] ( 33.27, 29.80) --
	( 33.27,110.13);

\path[draw=drawColor,line width= 0.6pt,line join=round] ( 34.70,107.67) --
	( 33.27,110.13) --
	( 31.85,107.67);
\end{scope}
\begin{scope}
\path[clip] (  0.00,  0.00) rectangle (209.58,115.63);
\definecolor{drawColor}{gray}{0.30}

\node[text=drawColor,anchor=base east,inner sep=0pt, outer sep=0pt, scale=  0.88] at ( 28.32, 30.42) {0.0};

\node[text=drawColor,anchor=base east,inner sep=0pt, outer sep=0pt, scale=  0.88] at ( 28.32, 62.16) {0.1};

\node[text=drawColor,anchor=base east,inner sep=0pt, outer sep=0pt, scale=  0.88] at ( 28.32, 93.90) {1.0};
\end{scope}
\begin{scope}
\path[clip] (  0.00,  0.00) rectangle (209.58,115.63);
\definecolor{drawColor}{gray}{0.20}

\path[draw=drawColor,line width= 0.6pt,line join=round] ( 30.52, 33.45) --
	( 33.27, 33.45);

\path[draw=drawColor,line width= 0.6pt,line join=round] ( 30.52, 65.19) --
	( 33.27, 65.19);

\path[draw=drawColor,line width= 0.6pt,line join=round] ( 30.52, 96.93) --
	( 33.27, 96.93);
\end{scope}
\begin{scope}
\path[clip] (  0.00,  0.00) rectangle (209.58,115.63);
\definecolor{drawColor}{RGB}{0,0,0}

\path[draw=drawColor,line width= 0.6pt,line join=round] ( 33.27, 29.80) --
	(204.08, 29.80);

\path[draw=drawColor,line width= 0.6pt,line join=round] (201.62, 28.38) --
	(204.08, 29.80) --
	(201.62, 31.23);
\end{scope}
\begin{scope}
\path[clip] (  0.00,  0.00) rectangle (209.58,115.63);
\definecolor{drawColor}{gray}{0.20}

\path[draw=drawColor,line width= 0.6pt,line join=round] ( 43.32, 27.05) --
	( 43.32, 29.80);

\path[draw=drawColor,line width= 0.6pt,line join=round] ( 60.07, 27.05) --
	( 60.07, 29.80);

\path[draw=drawColor,line width= 0.6pt,line join=round] ( 76.81, 27.05) --
	( 76.81, 29.80);

\path[draw=drawColor,line width= 0.6pt,line join=round] ( 93.56, 27.05) --
	( 93.56, 29.80);

\path[draw=drawColor,line width= 0.6pt,line join=round] (110.30, 27.05) --
	(110.30, 29.80);

\path[draw=drawColor,line width= 0.6pt,line join=round] (127.05, 27.05) --
	(127.05, 29.80);

\path[draw=drawColor,line width= 0.6pt,line join=round] (143.80, 27.05) --
	(143.80, 29.80);

\path[draw=drawColor,line width= 0.6pt,line join=round] (160.54, 27.05) --
	(160.54, 29.80);

\path[draw=drawColor,line width= 0.6pt,line join=round] (177.29, 27.05) --
	(177.29, 29.80);

\path[draw=drawColor,line width= 0.6pt,line join=round] (194.04, 27.05) --
	(194.04, 29.80);
\end{scope}
\begin{scope}
\path[clip] (  0.00,  0.00) rectangle (209.58,115.63);
\definecolor{drawColor}{gray}{0.30}

\node[text=drawColor,anchor=base,inner sep=0pt, outer sep=0pt, scale=  0.88] at ( 43.32, 18.79) {2};

\node[text=drawColor,anchor=base,inner sep=0pt, outer sep=0pt, scale=  0.88] at ( 60.07, 18.79) {4};

\node[text=drawColor,anchor=base,inner sep=0pt, outer sep=0pt, scale=  0.88] at ( 76.81, 18.79) {8};

\node[text=drawColor,anchor=base,inner sep=0pt, outer sep=0pt, scale=  0.88] at ( 93.56, 18.79) {16};

\node[text=drawColor,anchor=base,inner sep=0pt, outer sep=0pt, scale=  0.88] at (110.30, 18.79) {32};

\node[text=drawColor,anchor=base,inner sep=0pt, outer sep=0pt, scale=  0.88] at (127.05, 18.79) {64};

\node[text=drawColor,anchor=base,inner sep=0pt, outer sep=0pt, scale=  0.88] at (143.80, 18.79) {128};

\node[text=drawColor,anchor=base,inner sep=0pt, outer sep=0pt, scale=  0.88] at (160.54, 18.79) {256};

\node[text=drawColor,anchor=base,inner sep=0pt, outer sep=0pt, scale=  0.88] at (177.29, 18.79) {512};

\node[text=drawColor,anchor=base,inner sep=0pt, outer sep=0pt, scale=  0.88] at (194.04, 18.79) {1024};
\end{scope}
\begin{scope}
\path[clip] (  0.00,  0.00) rectangle (209.58,115.63);
\definecolor{drawColor}{RGB}{0,0,0}

\node[text=drawColor,anchor=base,inner sep=0pt, outer sep=0pt, scale=  1.00] at (118.68,  7.44) {$d$};
\end{scope}
\begin{scope}
\path[clip] (  0.00,  0.00) rectangle (209.58,115.63);
\definecolor{drawColor}{RGB}{0,0,0}

\node[text=drawColor,rotate= 90.00,anchor=base,inner sep=0pt, outer sep=0pt, scale=  1.00] at ( 12.39, 69.97) {$\beta$};
\end{scope}
\end{tikzpicture}}
	\caption{Average query times of \acl{lve} run on the output of \ac{acp} and \ac{aacp} where the input \acp{fg} contain a proportion of $p=0.01$ scaled factors (left), and the distribution of the number $\beta$ of queries after which the offline overhead of \ac{aacp} amortises on input \acp{fg} containing a proportion of $p=0.01$ scaled factors (right).}
	\label{fig:aacp_plot_app_p=0.01}
\end{figure}
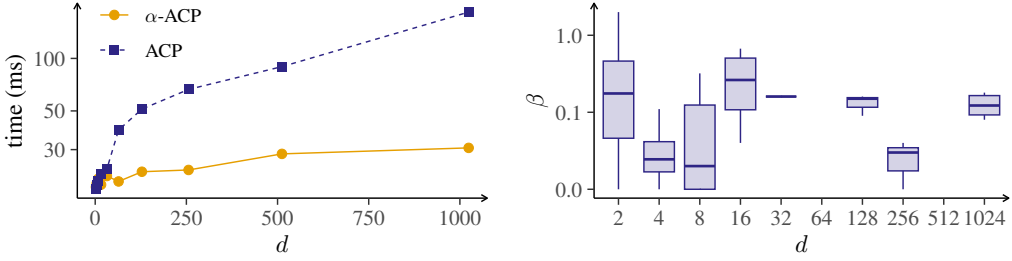
\begin{figure}
	\centering
	\resizebox{0.49\textwidth}{!}{
\begin{tikzpicture}[x=1pt,y=1pt]
\definecolor{fillColor}{RGB}{255,255,255}
\path[use as bounding box,fill=fillColor,fill opacity=0.00] (0,0) rectangle (209.58,115.63);
\begin{scope}
\path[clip] (  0.00,  0.00) rectangle (209.58,115.63);
\definecolor{drawColor}{RGB}{255,255,255}
\definecolor{fillColor}{RGB}{255,255,255}

\path[draw=drawColor,line width= 0.6pt,line join=round,line cap=round,fill=fillColor] (  0.00,  0.00) rectangle (209.58,115.63);
\end{scope}
\begin{scope}
\path[clip] ( 39.63, 29.80) rectangle (204.08,110.13);
\definecolor{fillColor}{RGB}{255,255,255}

\path[fill=fillColor] ( 39.63, 29.80) rectangle (204.08,110.13);
\definecolor{drawColor}{RGB}{230,159,0}

\path[draw=drawColor,line width= 0.6pt,line join=round] ( 47.10, 33.95) --
	( 47.39, 33.75) --
	( 47.98, 33.82) --
	( 49.15, 33.45) --
	( 51.49, 36.76) --
	( 56.17, 35.78) --
	( 65.53, 36.99) --
	( 84.26, 37.50) --
	(121.71, 41.15) --
	(196.61, 43.11);
\definecolor{drawColor}{RGB}{46,37,133}

\path[draw=drawColor,line width= 0.6pt,dash pattern=on 2pt off 2pt ,line join=round] ( 47.10, 33.80) --
	( 47.39, 36.61) --
	( 47.98, 37.98) --
	( 49.15, 38.86) --
	( 51.49, 47.95) --
	( 56.17, 56.96) --
	( 65.53, 64.83) --
	( 84.26, 74.49) --
	(121.71, 91.64) --
	(196.61,106.48);
\definecolor{drawColor}{RGB}{230,159,0}
\definecolor{fillColor}{RGB}{230,159,0}

\path[draw=drawColor,line width= 0.4pt,line join=round,line cap=round,fill=fillColor] ( 51.49, 36.76) circle (  1.96);

\path[draw=drawColor,line width= 0.4pt,line join=round,line cap=round,fill=fillColor] ( 47.39, 33.75) circle (  1.96);

\path[draw=drawColor,line width= 0.4pt,line join=round,line cap=round,fill=fillColor] ( 49.15, 33.45) circle (  1.96);

\path[draw=drawColor,line width= 0.4pt,line join=round,line cap=round,fill=fillColor] ( 56.17, 35.78) circle (  1.96);

\path[draw=drawColor,line width= 0.4pt,line join=round,line cap=round,fill=fillColor] ( 47.10, 33.95) circle (  1.96);

\path[draw=drawColor,line width= 0.4pt,line join=round,line cap=round,fill=fillColor] (196.61, 43.11) circle (  1.96);

\path[draw=drawColor,line width= 0.4pt,line join=round,line cap=round,fill=fillColor] ( 47.98, 33.82) circle (  1.96);

\path[draw=drawColor,line width= 0.4pt,line join=round,line cap=round,fill=fillColor] (121.71, 41.15) circle (  1.96);

\path[draw=drawColor,line width= 0.4pt,line join=round,line cap=round,fill=fillColor] ( 65.53, 36.99) circle (  1.96);

\path[draw=drawColor,line width= 0.4pt,line join=round,line cap=round,fill=fillColor] ( 84.26, 37.50) circle (  1.96);
\definecolor{fillColor}{RGB}{46,37,133}

\path[fill=fillColor] ( 49.53, 45.98) --
	( 53.45, 45.98) --
	( 53.45, 49.91) --
	( 49.53, 49.91) --
	cycle;

\path[fill=fillColor] ( 45.43, 34.64) --
	( 49.36, 34.64) --
	( 49.36, 38.57) --
	( 45.43, 38.57) --
	cycle;

\path[fill=fillColor] ( 47.19, 36.89) --
	( 51.11, 36.89) --
	( 51.11, 40.82) --
	( 47.19, 40.82) --
	cycle;

\path[fill=fillColor] ( 54.21, 54.99) --
	( 58.13, 54.99) --
	( 58.13, 58.92) --
	( 54.21, 58.92) --
	cycle;

\path[fill=fillColor] ( 45.14, 31.83) --
	( 49.06, 31.83) --
	( 49.06, 35.76) --
	( 45.14, 35.76) --
	cycle;

\path[fill=fillColor] (194.65,104.52) --
	(198.57,104.52) --
	(198.57,108.44) --
	(194.65,108.44) --
	cycle;

\path[fill=fillColor] ( 46.02, 36.02) --
	( 49.94, 36.02) --
	( 49.94, 39.94) --
	( 46.02, 39.94) --
	cycle;

\path[fill=fillColor] (119.75, 89.68) --
	(123.67, 89.68) --
	(123.67, 93.61) --
	(119.75, 93.61) --
	cycle;

\path[fill=fillColor] ( 63.57, 62.87) --
	( 67.50, 62.87) --
	( 67.50, 66.79) --
	( 63.57, 66.79) --
	cycle;

\path[fill=fillColor] ( 82.30, 72.52) --
	( 86.22, 72.52) --
	( 86.22, 76.45) --
	( 82.30, 76.45) --
	cycle;
\end{scope}
\begin{scope}
\path[clip] (  0.00,  0.00) rectangle (209.58,115.63);
\definecolor{drawColor}{RGB}{0,0,0}

\path[draw=drawColor,line width= 0.6pt,line join=round] ( 39.63, 29.80) --
	( 39.63,110.13);

\path[draw=drawColor,line width= 0.6pt,line join=round] ( 41.05,107.67) --
	( 39.63,110.13) --
	( 38.20,107.67);
\end{scope}
\begin{scope}
\path[clip] (  0.00,  0.00) rectangle (209.58,115.63);
\definecolor{drawColor}{gray}{0.30}

\node[text=drawColor,anchor=base east,inner sep=0pt, outer sep=0pt, scale=  0.88] at ( 34.68, 39.32) {30};

\node[text=drawColor,anchor=base east,inner sep=0pt, outer sep=0pt, scale=  0.88] at ( 34.68, 61.39) {100};

\node[text=drawColor,anchor=base east,inner sep=0pt, outer sep=0pt, scale=  0.88] at ( 34.68, 81.52) {300};

\node[text=drawColor,anchor=base east,inner sep=0pt, outer sep=0pt, scale=  0.88] at ( 34.68,103.59) {1000};
\end{scope}
\begin{scope}
\path[clip] (  0.00,  0.00) rectangle (209.58,115.63);
\definecolor{drawColor}{gray}{0.20}

\path[draw=drawColor,line width= 0.6pt,line join=round] ( 36.88, 42.35) --
	( 39.63, 42.35);

\path[draw=drawColor,line width= 0.6pt,line join=round] ( 36.88, 64.42) --
	( 39.63, 64.42);

\path[draw=drawColor,line width= 0.6pt,line join=round] ( 36.88, 84.55) --
	( 39.63, 84.55);

\path[draw=drawColor,line width= 0.6pt,line join=round] ( 36.88,106.62) --
	( 39.63,106.62);
\end{scope}
\begin{scope}
\path[clip] (  0.00,  0.00) rectangle (209.58,115.63);
\definecolor{drawColor}{RGB}{0,0,0}

\path[draw=drawColor,line width= 0.6pt,line join=round] ( 39.63, 29.80) --
	(204.08, 29.80);

\path[draw=drawColor,line width= 0.6pt,line join=round] (201.62, 28.38) --
	(204.08, 29.80) --
	(201.62, 31.23);
\end{scope}
\begin{scope}
\path[clip] (  0.00,  0.00) rectangle (209.58,115.63);
\definecolor{drawColor}{gray}{0.20}

\path[draw=drawColor,line width= 0.6pt,line join=round] ( 46.81, 27.05) --
	( 46.81, 29.80);

\path[draw=drawColor,line width= 0.6pt,line join=round] ( 83.38, 27.05) --
	( 83.38, 29.80);

\path[draw=drawColor,line width= 0.6pt,line join=round] (119.95, 27.05) --
	(119.95, 29.80);

\path[draw=drawColor,line width= 0.6pt,line join=round] (156.53, 27.05) --
	(156.53, 29.80);

\path[draw=drawColor,line width= 0.6pt,line join=round] (193.10, 27.05) --
	(193.10, 29.80);
\end{scope}
\begin{scope}
\path[clip] (  0.00,  0.00) rectangle (209.58,115.63);
\definecolor{drawColor}{gray}{0.30}

\node[text=drawColor,anchor=base,inner sep=0pt, outer sep=0pt, scale=  0.88] at ( 46.81, 18.79) {0};

\node[text=drawColor,anchor=base,inner sep=0pt, outer sep=0pt, scale=  0.88] at ( 83.38, 18.79) {250};

\node[text=drawColor,anchor=base,inner sep=0pt, outer sep=0pt, scale=  0.88] at (119.95, 18.79) {500};

\node[text=drawColor,anchor=base,inner sep=0pt, outer sep=0pt, scale=  0.88] at (156.53, 18.79) {750};

\node[text=drawColor,anchor=base,inner sep=0pt, outer sep=0pt, scale=  0.88] at (193.10, 18.79) {1000};
\end{scope}
\begin{scope}
\path[clip] (  0.00,  0.00) rectangle (209.58,115.63);
\definecolor{drawColor}{RGB}{0,0,0}

\node[text=drawColor,anchor=base,inner sep=0pt, outer sep=0pt, scale=  1.00] at (121.85,  7.44) {$d$};
\end{scope}
\begin{scope}
\path[clip] (  0.00,  0.00) rectangle (209.58,115.63);
\definecolor{drawColor}{RGB}{0,0,0}

\node[text=drawColor,rotate= 90.00,anchor=base,inner sep=0pt, outer sep=0pt, scale=  1.00] at ( 12.39, 69.97) {time (ms)};
\end{scope}
\begin{scope}
\path[clip] (  0.00,  0.00) rectangle (209.58,115.63);

\path[] ( 41.35, 78.13) rectangle ( 97.11,118.04);
\end{scope}
\begin{scope}
\path[clip] (  0.00,  0.00) rectangle (209.58,115.63);
\definecolor{drawColor}{RGB}{230,159,0}

\path[draw=drawColor,line width= 0.6pt,line join=round] ( 48.29,105.31) -- ( 59.86,105.31);
\end{scope}
\begin{scope}
\path[clip] (  0.00,  0.00) rectangle (209.58,115.63);
\definecolor{drawColor}{RGB}{230,159,0}
\definecolor{fillColor}{RGB}{230,159,0}

\path[draw=drawColor,line width= 0.4pt,line join=round,line cap=round,fill=fillColor] ( 54.07,105.31) circle (  1.96);
\end{scope}
\begin{scope}
\path[clip] (  0.00,  0.00) rectangle (209.58,115.63);
\definecolor{drawColor}{RGB}{46,37,133}

\path[draw=drawColor,line width= 0.6pt,dash pattern=on 2pt off 2pt ,line join=round] ( 48.29, 90.86) -- ( 59.86, 90.86);
\end{scope}
\begin{scope}
\path[clip] (  0.00,  0.00) rectangle (209.58,115.63);
\definecolor{fillColor}{RGB}{46,37,133}

\path[fill=fillColor] ( 52.11, 88.89) --
	( 56.04, 88.89) --
	( 56.04, 92.82) --
	( 52.11, 92.82) --
	cycle;
\end{scope}
\begin{scope}
\path[clip] (  0.00,  0.00) rectangle (209.58,115.63);
\definecolor{drawColor}{RGB}{0,0,0}

\node[text=drawColor,anchor=base west,inner sep=0pt, outer sep=0pt, scale=  0.80] at ( 66.80,102.55) {$\alpha$-ACP};
\end{scope}
\begin{scope}
\path[clip] (  0.00,  0.00) rectangle (209.58,115.63);
\definecolor{drawColor}{RGB}{0,0,0}

\node[text=drawColor,anchor=base west,inner sep=0pt, outer sep=0pt, scale=  0.80] at ( 66.80, 88.10) {ACP};
\end{scope}
\end{tikzpicture}}
	\resizebox{0.49\textwidth}{!}{
\begin{tikzpicture}[x=1pt,y=1pt]
\definecolor{fillColor}{RGB}{255,255,255}
\path[use as bounding box,fill=fillColor,fill opacity=0.00] (0,0) rectangle (209.58,115.63);
\begin{scope}
\path[clip] (  0.00,  0.00) rectangle (209.58,115.63);
\definecolor{drawColor}{RGB}{255,255,255}
\definecolor{fillColor}{RGB}{255,255,255}

\path[draw=drawColor,line width= 0.6pt,line join=round,line cap=round,fill=fillColor] (  0.00,  0.00) rectangle (209.58,115.63);
\end{scope}
\begin{scope}
\path[clip] ( 37.67, 29.80) rectangle (204.08,110.13);
\definecolor{fillColor}{RGB}{255,255,255}

\path[fill=fillColor] ( 37.67, 29.80) rectangle (204.08,110.13);
\definecolor{drawColor}{RGB}{46,37,133}

\path[draw=drawColor,line width= 0.6pt,line join=round] ( 47.46, 52.15) -- ( 47.46, 69.40);

\path[draw=drawColor,line width= 0.6pt,line join=round] ( 47.46, 35.32) -- ( 47.46, 33.45);
\definecolor{fillColor}{RGB}{46,37,133}

\path[draw=drawColor,line width= 0.6pt,fill=fillColor,fill opacity=0.20] ( 41.34, 52.15) --
	( 41.34, 35.32) --
	( 53.58, 35.32) --
	( 53.58, 52.15) --
	( 41.34, 52.15) --
	cycle;

\path[draw=drawColor,line width= 1.1pt] ( 41.34, 40.93) -- ( 53.58, 40.93);

\path[draw=drawColor,line width= 0.6pt,line join=round] ( 63.78, 51.31) -- ( 63.78, 52.78);

\path[draw=drawColor,line width= 0.6pt,line join=round] ( 63.78, 39.06) -- ( 63.78, 33.45);

\path[draw=drawColor,line width= 0.6pt,fill=fillColor,fill opacity=0.20] ( 57.66, 51.31) --
	( 57.66, 39.06) --
	( 69.89, 39.06) --
	( 69.89, 51.31) --
	( 57.66, 51.31) --
	cycle;

\path[draw=drawColor,line width= 1.1pt] ( 57.66, 49.61) -- ( 69.89, 49.61);

\path[draw=drawColor,line width= 0.6pt,line join=round] ( 80.09, 74.81) -- ( 80.09, 76.48);

\path[draw=drawColor,line width= 0.6pt,line join=round] ( 80.09, 45.31) -- ( 80.09, 40.93);

\path[draw=drawColor,line width= 0.6pt,fill=fillColor,fill opacity=0.20] ( 73.97, 74.81) --
	( 73.97, 45.31) --
	( 86.21, 45.31) --
	( 86.21, 74.81) --
	( 73.97, 74.81) --
	cycle;

\path[draw=drawColor,line width= 1.1pt] ( 73.97, 51.80) -- ( 86.21, 51.80);
\definecolor{drawColor}{RGB}{46,37,133}

\path[draw=drawColor,draw opacity=0.20,line width= 0.4pt,line join=round,line cap=round,fill=fillColor,fill opacity=0.20] ( 96.41,106.48) circle (  1.96);
\definecolor{drawColor}{RGB}{46,37,133}

\path[draw=drawColor,line width= 0.6pt,line join=round] ( 96.41, 69.01) -- ( 96.41, 71.49);

\path[draw=drawColor,line width= 0.6pt,line join=round] ( 96.41, 54.62) -- ( 96.41, 45.31);

\path[draw=drawColor,line width= 0.6pt,fill=fillColor,fill opacity=0.20] ( 90.29, 69.01) --
	( 90.29, 54.62) --
	(102.52, 54.62) --
	(102.52, 69.01) --
	( 90.29, 69.01) --
	cycle;

\path[draw=drawColor,line width= 1.1pt] ( 90.29, 60.83) -- (102.52, 60.83);

\path[draw=drawColor,line width= 0.6pt,line join=round] (112.72, 66.70) -- (112.72, 68.18);

\path[draw=drawColor,line width= 0.6pt,line join=round] (112.72, 64.62) -- (112.72, 64.02);

\path[draw=drawColor,line width= 0.6pt,fill=fillColor,fill opacity=0.20] (106.60, 66.70) --
	(106.60, 64.62) --
	(118.84, 64.62) --
	(118.84, 66.70) --
	(106.60, 66.70) --
	cycle;

\path[draw=drawColor,line width= 1.1pt] (106.60, 65.22) -- (118.84, 65.22);

\path[draw=drawColor,line width= 0.6pt,line join=round] (129.03, 50.82) -- (129.03, 55.89);

\path[draw=drawColor,line width= 0.6pt,line join=round] (129.03, 33.45) -- (129.03, 33.45);

\path[draw=drawColor,line width= 0.6pt,fill=fillColor,fill opacity=0.20] (122.92, 50.82) --
	(122.92, 33.45) --
	(135.15, 33.45) --
	(135.15, 50.82) --
	(122.92, 50.82) --
	cycle;

\path[draw=drawColor,line width= 1.1pt] (122.92, 50.82) -- (135.15, 50.82);

\path[draw=drawColor,line width= 0.6pt,line join=round] (145.35, 64.02) -- (145.35, 67.74);

\path[draw=drawColor,line width= 0.6pt,line join=round] (145.35, 42.02) -- (145.35, 40.93);

\path[draw=drawColor,line width= 0.6pt,fill=fillColor,fill opacity=0.20] (139.23, 64.02) --
	(139.23, 42.02) --
	(151.47, 42.02) --
	(151.47, 64.02) --
	(139.23, 64.02) --
	cycle;

\path[draw=drawColor,line width= 1.1pt] (139.23, 54.66) -- (151.47, 54.66);

\path[draw=drawColor,line width= 0.6pt,line join=round] (161.66, 69.37) -- (161.66, 70.14);

\path[draw=drawColor,line width= 0.6pt,line join=round] (161.66, 67.19) -- (161.66, 65.77);

\path[draw=drawColor,line width= 0.6pt,fill=fillColor,fill opacity=0.20] (155.55, 69.37) --
	(155.55, 67.19) --
	(167.78, 67.19) --
	(167.78, 69.37) --
	(155.55, 69.37) --
	cycle;

\path[draw=drawColor,line width= 1.1pt] (155.55, 68.60) -- (167.78, 68.60);

\path[draw=drawColor,line width= 0.6pt,line join=round] (177.98, 55.17) -- (177.98, 55.89);

\path[draw=drawColor,line width= 0.6pt,line join=round] (177.98, 53.61) -- (177.98, 52.78);

\path[draw=drawColor,line width= 0.6pt,fill=fillColor,fill opacity=0.20] (171.86, 55.17) --
	(171.86, 53.61) --
	(184.10, 53.61) --
	(184.10, 55.17) --
	(171.86, 55.17) --
	cycle;

\path[draw=drawColor,line width= 1.1pt] (171.86, 54.45) -- (184.10, 54.45);

\path[draw=drawColor,line width= 0.6pt,line join=round] (194.29, 59.32) -- (194.29, 59.32);

\path[draw=drawColor,line width= 0.6pt,line join=round] (194.29, 58.81) -- (194.29, 58.29);

\path[draw=drawColor,line width= 0.6pt,fill=fillColor,fill opacity=0.20] (188.18, 59.32) --
	(188.18, 58.81) --
	(200.41, 58.81) --
	(200.41, 59.32) --
	(188.18, 59.32) --
	cycle;

\path[draw=drawColor,line width= 1.1pt] (188.18, 59.32) -- (200.41, 59.32);
\end{scope}
\begin{scope}
\path[clip] (  0.00,  0.00) rectangle (209.58,115.63);
\definecolor{drawColor}{RGB}{0,0,0}

\path[draw=drawColor,line width= 0.6pt,line join=round] ( 37.67, 29.80) --
	( 37.67,110.13);

\path[draw=drawColor,line width= 0.6pt,line join=round] ( 39.09,107.67) --
	( 37.67,110.13) --
	( 36.25,107.67);
\end{scope}
\begin{scope}
\path[clip] (  0.00,  0.00) rectangle (209.58,115.63);
\definecolor{drawColor}{gray}{0.30}

\node[text=drawColor,anchor=base east,inner sep=0pt, outer sep=0pt, scale=  0.88] at ( 32.72, 30.42) { 0.0};

\node[text=drawColor,anchor=base east,inner sep=0pt, outer sep=0pt, scale=  0.88] at ( 32.72, 55.26) { 0.1};

\node[text=drawColor,anchor=base east,inner sep=0pt, outer sep=0pt, scale=  0.88] at ( 32.72, 80.10) { 1.0};

\node[text=drawColor,anchor=base east,inner sep=0pt, outer sep=0pt, scale=  0.88] at ( 32.72,104.94) {10.0};
\end{scope}
\begin{scope}
\path[clip] (  0.00,  0.00) rectangle (209.58,115.63);
\definecolor{drawColor}{gray}{0.20}

\path[draw=drawColor,line width= 0.6pt,line join=round] ( 34.92, 33.45) --
	( 37.67, 33.45);

\path[draw=drawColor,line width= 0.6pt,line join=round] ( 34.92, 58.29) --
	( 37.67, 58.29);

\path[draw=drawColor,line width= 0.6pt,line join=round] ( 34.92, 83.13) --
	( 37.67, 83.13);

\path[draw=drawColor,line width= 0.6pt,line join=round] ( 34.92,107.97) --
	( 37.67,107.97);
\end{scope}
\begin{scope}
\path[clip] (  0.00,  0.00) rectangle (209.58,115.63);
\definecolor{drawColor}{RGB}{0,0,0}

\path[draw=drawColor,line width= 0.6pt,line join=round] ( 37.67, 29.80) --
	(204.08, 29.80);

\path[draw=drawColor,line width= 0.6pt,line join=round] (201.62, 28.38) --
	(204.08, 29.80) --
	(201.62, 31.23);
\end{scope}
\begin{scope}
\path[clip] (  0.00,  0.00) rectangle (209.58,115.63);
\definecolor{drawColor}{gray}{0.20}

\path[draw=drawColor,line width= 0.6pt,line join=round] ( 47.46, 27.05) --
	( 47.46, 29.80);

\path[draw=drawColor,line width= 0.6pt,line join=round] ( 63.78, 27.05) --
	( 63.78, 29.80);

\path[draw=drawColor,line width= 0.6pt,line join=round] ( 80.09, 27.05) --
	( 80.09, 29.80);

\path[draw=drawColor,line width= 0.6pt,line join=round] ( 96.41, 27.05) --
	( 96.41, 29.80);

\path[draw=drawColor,line width= 0.6pt,line join=round] (112.72, 27.05) --
	(112.72, 29.80);

\path[draw=drawColor,line width= 0.6pt,line join=round] (129.03, 27.05) --
	(129.03, 29.80);

\path[draw=drawColor,line width= 0.6pt,line join=round] (145.35, 27.05) --
	(145.35, 29.80);

\path[draw=drawColor,line width= 0.6pt,line join=round] (161.66, 27.05) --
	(161.66, 29.80);

\path[draw=drawColor,line width= 0.6pt,line join=round] (177.98, 27.05) --
	(177.98, 29.80);

\path[draw=drawColor,line width= 0.6pt,line join=round] (194.29, 27.05) --
	(194.29, 29.80);
\end{scope}
\begin{scope}
\path[clip] (  0.00,  0.00) rectangle (209.58,115.63);
\definecolor{drawColor}{gray}{0.30}

\node[text=drawColor,anchor=base,inner sep=0pt, outer sep=0pt, scale=  0.88] at ( 47.46, 18.79) {2};

\node[text=drawColor,anchor=base,inner sep=0pt, outer sep=0pt, scale=  0.88] at ( 63.78, 18.79) {4};

\node[text=drawColor,anchor=base,inner sep=0pt, outer sep=0pt, scale=  0.88] at ( 80.09, 18.79) {8};

\node[text=drawColor,anchor=base,inner sep=0pt, outer sep=0pt, scale=  0.88] at ( 96.41, 18.79) {16};

\node[text=drawColor,anchor=base,inner sep=0pt, outer sep=0pt, scale=  0.88] at (112.72, 18.79) {32};

\node[text=drawColor,anchor=base,inner sep=0pt, outer sep=0pt, scale=  0.88] at (129.03, 18.79) {64};

\node[text=drawColor,anchor=base,inner sep=0pt, outer sep=0pt, scale=  0.88] at (145.35, 18.79) {128};

\node[text=drawColor,anchor=base,inner sep=0pt, outer sep=0pt, scale=  0.88] at (161.66, 18.79) {256};

\node[text=drawColor,anchor=base,inner sep=0pt, outer sep=0pt, scale=  0.88] at (177.98, 18.79) {512};

\node[text=drawColor,anchor=base,inner sep=0pt, outer sep=0pt, scale=  0.88] at (194.29, 18.79) {1024};
\end{scope}
\begin{scope}
\path[clip] (  0.00,  0.00) rectangle (209.58,115.63);
\definecolor{drawColor}{RGB}{0,0,0}

\node[text=drawColor,anchor=base,inner sep=0pt, outer sep=0pt, scale=  1.00] at (120.88,  7.44) {$d$};
\end{scope}
\begin{scope}
\path[clip] (  0.00,  0.00) rectangle (209.58,115.63);
\definecolor{drawColor}{RGB}{0,0,0}

\node[text=drawColor,rotate= 90.00,anchor=base,inner sep=0pt, outer sep=0pt, scale=  1.00] at ( 12.39, 69.97) {$\beta$};
\end{scope}
\end{tikzpicture}}
	\caption{Average query times of \acl{lve} run on the output of \ac{acp} and \ac{aacp} where the input \acp{fg} contain a proportion of $p=0.05$ scaled factors (left), and the distribution of the number $\beta$ of queries after which the offline overhead of \ac{aacp} amortises on input \acp{fg} containing a proportion of $p=0.05$ scaled factors (right).}
	\label{fig:aacp_plot_app_p=0.05}
\end{figure}
\begin{figure}
	\centering
	\resizebox{0.49\textwidth}{!}{
\begin{tikzpicture}[x=1pt,y=1pt]
\definecolor{fillColor}{RGB}{255,255,255}
\path[use as bounding box,fill=fillColor,fill opacity=0.00] (0,0) rectangle (209.58,115.63);
\begin{scope}
\path[clip] (  0.00,  0.00) rectangle (209.58,115.63);
\definecolor{drawColor}{RGB}{255,255,255}
\definecolor{fillColor}{RGB}{255,255,255}

\path[draw=drawColor,line width= 0.6pt,line join=round,line cap=round,fill=fillColor] (  0.00,  0.00) rectangle (209.58,115.63);
\end{scope}
\begin{scope}
\path[clip] ( 39.63, 29.80) rectangle (204.08,110.13);
\definecolor{fillColor}{RGB}{255,255,255}

\path[fill=fillColor] ( 39.63, 29.80) rectangle (204.08,110.13);
\definecolor{drawColor}{RGB}{230,159,0}

\path[draw=drawColor,line width= 0.6pt,line join=round] ( 47.10, 33.78) --
	( 47.39, 34.12) --
	( 47.98, 34.21) --
	( 49.15, 34.52) --
	( 51.49, 34.24) --
	( 56.17, 35.13) --
	( 65.53, 36.41) --
	( 84.26, 37.26) --
	(121.71, 39.05) --
	(196.61, 40.56);
\definecolor{drawColor}{RGB}{46,37,133}

\path[draw=drawColor,line width= 0.6pt,dash pattern=on 2pt off 2pt ,line join=round] ( 47.10, 33.45) --
	( 47.39, 35.86) --
	( 47.98, 39.91) --
	( 49.15, 43.90) --
	( 51.49, 52.66) --
	( 56.17, 60.30) --
	( 65.53, 68.91) --
	( 84.26, 80.31) --
	(121.71, 93.69) --
	(196.61,106.48);
\definecolor{drawColor}{RGB}{230,159,0}
\definecolor{fillColor}{RGB}{230,159,0}

\path[draw=drawColor,line width= 0.4pt,line join=round,line cap=round,fill=fillColor] ( 51.49, 34.24) circle (  1.96);

\path[draw=drawColor,line width= 0.4pt,line join=round,line cap=round,fill=fillColor] ( 47.39, 34.12) circle (  1.96);

\path[draw=drawColor,line width= 0.4pt,line join=round,line cap=round,fill=fillColor] ( 49.15, 34.52) circle (  1.96);

\path[draw=drawColor,line width= 0.4pt,line join=round,line cap=round,fill=fillColor] ( 56.17, 35.13) circle (  1.96);

\path[draw=drawColor,line width= 0.4pt,line join=round,line cap=round,fill=fillColor] ( 47.10, 33.78) circle (  1.96);

\path[draw=drawColor,line width= 0.4pt,line join=round,line cap=round,fill=fillColor] (196.61, 40.56) circle (  1.96);

\path[draw=drawColor,line width= 0.4pt,line join=round,line cap=round,fill=fillColor] ( 47.98, 34.21) circle (  1.96);

\path[draw=drawColor,line width= 0.4pt,line join=round,line cap=round,fill=fillColor] (121.71, 39.05) circle (  1.96);

\path[draw=drawColor,line width= 0.4pt,line join=round,line cap=round,fill=fillColor] ( 65.53, 36.41) circle (  1.96);

\path[draw=drawColor,line width= 0.4pt,line join=round,line cap=round,fill=fillColor] ( 84.26, 37.26) circle (  1.96);
\definecolor{fillColor}{RGB}{46,37,133}

\path[fill=fillColor] ( 49.53, 50.70) --
	( 53.45, 50.70) --
	( 53.45, 54.63) --
	( 49.53, 54.63) --
	cycle;

\path[fill=fillColor] ( 45.43, 33.90) --
	( 49.36, 33.90) --
	( 49.36, 37.82) --
	( 45.43, 37.82) --
	cycle;

\path[fill=fillColor] ( 47.19, 41.93) --
	( 51.11, 41.93) --
	( 51.11, 45.86) --
	( 47.19, 45.86) --
	cycle;

\path[fill=fillColor] ( 54.21, 58.34) --
	( 58.13, 58.34) --
	( 58.13, 62.26) --
	( 54.21, 62.26) --
	cycle;

\path[fill=fillColor] ( 45.14, 31.49) --
	( 49.06, 31.49) --
	( 49.06, 35.42) --
	( 45.14, 35.42) --
	cycle;

\path[fill=fillColor] (194.65,104.52) --
	(198.57,104.52) --
	(198.57,108.44) --
	(194.65,108.44) --
	cycle;

\path[fill=fillColor] ( 46.02, 37.95) --
	( 49.94, 37.95) --
	( 49.94, 41.87) --
	( 46.02, 41.87) --
	cycle;

\path[fill=fillColor] (119.75, 91.72) --
	(123.67, 91.72) --
	(123.67, 95.65) --
	(119.75, 95.65) --
	cycle;

\path[fill=fillColor] ( 63.57, 66.95) --
	( 67.50, 66.95) --
	( 67.50, 70.87) --
	( 63.57, 70.87) --
	cycle;

\path[fill=fillColor] ( 82.30, 78.35) --
	( 86.22, 78.35) --
	( 86.22, 82.27) --
	( 82.30, 82.27) --
	cycle;
\end{scope}
\begin{scope}
\path[clip] (  0.00,  0.00) rectangle (209.58,115.63);
\definecolor{drawColor}{RGB}{0,0,0}

\path[draw=drawColor,line width= 0.6pt,line join=round] ( 39.63, 29.80) --
	( 39.63,110.13);

\path[draw=drawColor,line width= 0.6pt,line join=round] ( 41.05,107.67) --
	( 39.63,110.13) --
	( 38.20,107.67);
\end{scope}
\begin{scope}
\path[clip] (  0.00,  0.00) rectangle (209.58,115.63);
\definecolor{drawColor}{gray}{0.30}

\node[text=drawColor,anchor=base east,inner sep=0pt, outer sep=0pt, scale=  0.88] at ( 34.68, 37.44) {30};

\node[text=drawColor,anchor=base east,inner sep=0pt, outer sep=0pt, scale=  0.88] at ( 34.68, 53.80) {100};

\node[text=drawColor,anchor=base east,inner sep=0pt, outer sep=0pt, scale=  0.88] at ( 34.68, 68.73) {300};

\node[text=drawColor,anchor=base east,inner sep=0pt, outer sep=0pt, scale=  0.88] at ( 34.68, 85.09) {1000};

\node[text=drawColor,anchor=base east,inner sep=0pt, outer sep=0pt, scale=  0.88] at ( 34.68,100.02) {3000};
\end{scope}
\begin{scope}
\path[clip] (  0.00,  0.00) rectangle (209.58,115.63);
\definecolor{drawColor}{gray}{0.20}

\path[draw=drawColor,line width= 0.6pt,line join=round] ( 36.88, 40.47) --
	( 39.63, 40.47);

\path[draw=drawColor,line width= 0.6pt,line join=round] ( 36.88, 56.83) --
	( 39.63, 56.83);

\path[draw=drawColor,line width= 0.6pt,line join=round] ( 36.88, 71.76) --
	( 39.63, 71.76);

\path[draw=drawColor,line width= 0.6pt,line join=round] ( 36.88, 88.12) --
	( 39.63, 88.12);

\path[draw=drawColor,line width= 0.6pt,line join=round] ( 36.88,103.05) --
	( 39.63,103.05);
\end{scope}
\begin{scope}
\path[clip] (  0.00,  0.00) rectangle (209.58,115.63);
\definecolor{drawColor}{RGB}{0,0,0}

\path[draw=drawColor,line width= 0.6pt,line join=round] ( 39.63, 29.80) --
	(204.08, 29.80);

\path[draw=drawColor,line width= 0.6pt,line join=round] (201.62, 28.38) --
	(204.08, 29.80) --
	(201.62, 31.23);
\end{scope}
\begin{scope}
\path[clip] (  0.00,  0.00) rectangle (209.58,115.63);
\definecolor{drawColor}{gray}{0.20}

\path[draw=drawColor,line width= 0.6pt,line join=round] ( 46.81, 27.05) --
	( 46.81, 29.80);

\path[draw=drawColor,line width= 0.6pt,line join=round] ( 83.38, 27.05) --
	( 83.38, 29.80);

\path[draw=drawColor,line width= 0.6pt,line join=round] (119.95, 27.05) --
	(119.95, 29.80);

\path[draw=drawColor,line width= 0.6pt,line join=round] (156.53, 27.05) --
	(156.53, 29.80);

\path[draw=drawColor,line width= 0.6pt,line join=round] (193.10, 27.05) --
	(193.10, 29.80);
\end{scope}
\begin{scope}
\path[clip] (  0.00,  0.00) rectangle (209.58,115.63);
\definecolor{drawColor}{gray}{0.30}

\node[text=drawColor,anchor=base,inner sep=0pt, outer sep=0pt, scale=  0.88] at ( 46.81, 18.79) {0};

\node[text=drawColor,anchor=base,inner sep=0pt, outer sep=0pt, scale=  0.88] at ( 83.38, 18.79) {250};

\node[text=drawColor,anchor=base,inner sep=0pt, outer sep=0pt, scale=  0.88] at (119.95, 18.79) {500};

\node[text=drawColor,anchor=base,inner sep=0pt, outer sep=0pt, scale=  0.88] at (156.53, 18.79) {750};

\node[text=drawColor,anchor=base,inner sep=0pt, outer sep=0pt, scale=  0.88] at (193.10, 18.79) {1000};
\end{scope}
\begin{scope}
\path[clip] (  0.00,  0.00) rectangle (209.58,115.63);
\definecolor{drawColor}{RGB}{0,0,0}

\node[text=drawColor,anchor=base,inner sep=0pt, outer sep=0pt, scale=  1.00] at (121.85,  7.44) {$d$};
\end{scope}
\begin{scope}
\path[clip] (  0.00,  0.00) rectangle (209.58,115.63);
\definecolor{drawColor}{RGB}{0,0,0}

\node[text=drawColor,rotate= 90.00,anchor=base,inner sep=0pt, outer sep=0pt, scale=  1.00] at ( 12.39, 69.97) {time (ms)};
\end{scope}
\begin{scope}
\path[clip] (  0.00,  0.00) rectangle (209.58,115.63);

\path[] ( 41.35, 78.13) rectangle ( 97.11,118.04);
\end{scope}
\begin{scope}
\path[clip] (  0.00,  0.00) rectangle (209.58,115.63);
\definecolor{drawColor}{RGB}{230,159,0}

\path[draw=drawColor,line width= 0.6pt,line join=round] ( 48.29,105.31) -- ( 59.86,105.31);
\end{scope}
\begin{scope}
\path[clip] (  0.00,  0.00) rectangle (209.58,115.63);
\definecolor{drawColor}{RGB}{230,159,0}
\definecolor{fillColor}{RGB}{230,159,0}

\path[draw=drawColor,line width= 0.4pt,line join=round,line cap=round,fill=fillColor] ( 54.07,105.31) circle (  1.96);
\end{scope}
\begin{scope}
\path[clip] (  0.00,  0.00) rectangle (209.58,115.63);
\definecolor{drawColor}{RGB}{46,37,133}

\path[draw=drawColor,line width= 0.6pt,dash pattern=on 2pt off 2pt ,line join=round] ( 48.29, 90.86) -- ( 59.86, 90.86);
\end{scope}
\begin{scope}
\path[clip] (  0.00,  0.00) rectangle (209.58,115.63);
\definecolor{fillColor}{RGB}{46,37,133}

\path[fill=fillColor] ( 52.11, 88.89) --
	( 56.04, 88.89) --
	( 56.04, 92.82) --
	( 52.11, 92.82) --
	cycle;
\end{scope}
\begin{scope}
\path[clip] (  0.00,  0.00) rectangle (209.58,115.63);
\definecolor{drawColor}{RGB}{0,0,0}

\node[text=drawColor,anchor=base west,inner sep=0pt, outer sep=0pt, scale=  0.80] at ( 66.80,102.55) {$\alpha$-ACP};
\end{scope}
\begin{scope}
\path[clip] (  0.00,  0.00) rectangle (209.58,115.63);
\definecolor{drawColor}{RGB}{0,0,0}

\node[text=drawColor,anchor=base west,inner sep=0pt, outer sep=0pt, scale=  0.80] at ( 66.80, 88.10) {ACP};
\end{scope}
\end{tikzpicture}}
	\resizebox{0.49\textwidth}{!}{
\begin{tikzpicture}[x=1pt,y=1pt]
\definecolor{fillColor}{RGB}{255,255,255}
\path[use as bounding box,fill=fillColor,fill opacity=0.00] (0,0) rectangle (209.58,115.63);
\begin{scope}
\path[clip] (  0.00,  0.00) rectangle (209.58,115.63);
\definecolor{drawColor}{RGB}{255,255,255}
\definecolor{fillColor}{RGB}{255,255,255}

\path[draw=drawColor,line width= 0.6pt,line join=round,line cap=round,fill=fillColor] (  0.00,  0.00) rectangle (209.58,115.63);
\end{scope}
\begin{scope}
\path[clip] ( 37.67, 29.80) rectangle (204.08,110.13);
\definecolor{fillColor}{RGB}{255,255,255}

\path[fill=fillColor] ( 37.67, 29.80) rectangle (204.08,110.13);
\definecolor{drawColor}{RGB}{46,37,133}

\path[draw=drawColor,line width= 0.6pt,line join=round] ( 47.46, 50.98) -- ( 47.46, 52.09);

\path[draw=drawColor,line width= 0.6pt,line join=round] ( 47.46, 43.41) -- ( 47.46, 33.45);
\definecolor{fillColor}{RGB}{46,37,133}

\path[draw=drawColor,line width= 0.6pt,fill=fillColor,fill opacity=0.20] ( 41.34, 50.98) --
	( 41.34, 43.41) --
	( 53.58, 43.41) --
	( 53.58, 50.98) --
	( 41.34, 50.98) --
	cycle;

\path[draw=drawColor,line width= 1.1pt] ( 41.34, 48.67) -- ( 53.58, 48.67);

\path[draw=drawColor,line width= 0.6pt,line join=round] ( 63.78, 40.09) -- ( 63.78, 43.97);

\path[draw=drawColor,line width= 0.6pt,line join=round] ( 63.78, 33.45) -- ( 63.78, 33.45);

\path[draw=drawColor,line width= 0.6pt,fill=fillColor,fill opacity=0.20] ( 57.66, 40.09) --
	( 57.66, 33.45) --
	( 69.89, 33.45) --
	( 69.89, 40.09) --
	( 57.66, 40.09) --
	cycle;

\path[draw=drawColor,line width= 1.1pt] ( 57.66, 33.45) -- ( 69.89, 33.45);
\definecolor{drawColor}{RGB}{46,37,133}

\path[draw=drawColor,draw opacity=0.20,line width= 0.4pt,line join=round,line cap=round,fill=fillColor,fill opacity=0.20] ( 80.09,106.48) circle (  1.96);
\definecolor{drawColor}{RGB}{46,37,133}

\path[draw=drawColor,line width= 0.6pt,line join=round] ( 80.09, 57.25) -- ( 80.09, 72.50);

\path[draw=drawColor,line width= 0.6pt,line join=round] ( 80.09, 40.09) -- ( 80.09, 33.45);

\path[draw=drawColor,line width= 0.6pt,fill=fillColor,fill opacity=0.20] ( 73.97, 57.25) --
	( 73.97, 40.09) --
	( 86.21, 40.09) --
	( 86.21, 57.25) --
	( 73.97, 57.25) --
	cycle;

\path[draw=drawColor,line width= 1.1pt] ( 73.97, 40.09) -- ( 86.21, 40.09);

\path[draw=drawColor,line width= 0.6pt,line join=round] ( 96.41, 51.21) -- ( 96.41, 66.03);

\path[draw=drawColor,line width= 0.6pt,line join=round] ( 96.41, 33.45) -- ( 96.41, 33.45);

\path[draw=drawColor,line width= 0.6pt,fill=fillColor,fill opacity=0.20] ( 90.29, 51.21) --
	( 90.29, 33.45) --
	(102.52, 33.45) --
	(102.52, 51.21) --
	( 90.29, 51.21) --
	cycle;

\path[draw=drawColor,line width= 1.1pt] ( 90.29, 42.03) -- (102.52, 42.03);

\path[draw=drawColor,line width= 0.6pt,line join=round] (112.72, 67.50) -- (112.72, 70.33);

\path[draw=drawColor,line width= 0.6pt,line join=round] (112.72, 40.09) -- (112.72, 33.45);

\path[draw=drawColor,line width= 0.6pt,fill=fillColor,fill opacity=0.20] (106.60, 67.50) --
	(106.60, 40.09) --
	(118.84, 40.09) --
	(118.84, 67.50) --
	(106.60, 67.50) --
	cycle;

\path[draw=drawColor,line width= 1.1pt] (106.60, 53.37) -- (118.84, 53.37);

\path[draw=drawColor,line width= 0.6pt,line join=round] (129.03, 64.03) -- (129.03, 67.50);

\path[draw=drawColor,line width= 0.6pt,line join=round] (129.03, 50.98) -- (129.03, 48.87);

\path[draw=drawColor,line width= 0.6pt,fill=fillColor,fill opacity=0.20] (122.92, 64.03) --
	(122.92, 50.98) --
	(135.15, 50.98) --
	(135.15, 64.03) --
	(122.92, 64.03) --
	cycle;

\path[draw=drawColor,line width= 1.1pt] (122.92, 57.12) -- (135.15, 57.12);

\path[draw=drawColor,line width= 0.6pt,line join=round] (145.35, 58.02) -- (145.35, 60.01);

\path[draw=drawColor,line width= 0.6pt,line join=round] (145.35, 44.66) -- (145.35, 43.97);

\path[draw=drawColor,line width= 0.6pt,fill=fillColor,fill opacity=0.20] (139.23, 58.02) --
	(139.23, 44.66) --
	(151.47, 44.66) --
	(151.47, 58.02) --
	(139.23, 58.02) --
	cycle;

\path[draw=drawColor,line width= 1.1pt] (139.23, 52.37) -- (151.47, 52.37);

\path[draw=drawColor,line width= 0.6pt,line join=round] (161.66, 54.21) -- (161.66, 55.50);

\path[draw=drawColor,line width= 0.6pt,line join=round] (161.66, 33.45) -- (161.66, 33.45);

\path[draw=drawColor,line width= 0.6pt,fill=fillColor,fill opacity=0.20] (155.55, 54.21) --
	(155.55, 33.45) --
	(167.78, 33.45) --
	(167.78, 54.21) --
	(155.55, 54.21) --
	cycle;

\path[draw=drawColor,line width= 1.1pt] (155.55, 43.41) -- (167.78, 43.41);
\end{scope}
\begin{scope}
\path[clip] (  0.00,  0.00) rectangle (209.58,115.63);
\definecolor{drawColor}{RGB}{0,0,0}

\path[draw=drawColor,line width= 0.6pt,line join=round] ( 37.67, 29.80) --
	( 37.67,110.13);

\path[draw=drawColor,line width= 0.6pt,line join=round] ( 39.09,107.67) --
	( 37.67,110.13) --
	( 36.25,107.67);
\end{scope}
\begin{scope}
\path[clip] (  0.00,  0.00) rectangle (209.58,115.63);
\definecolor{drawColor}{gray}{0.30}

\node[text=drawColor,anchor=base east,inner sep=0pt, outer sep=0pt, scale=  0.88] at ( 32.72, 30.42) { 0.0};

\node[text=drawColor,anchor=base east,inner sep=0pt, outer sep=0pt, scale=  0.88] at ( 32.72, 52.47) { 0.1};

\node[text=drawColor,anchor=base east,inner sep=0pt, outer sep=0pt, scale=  0.88] at ( 32.72, 74.53) { 1.0};

\node[text=drawColor,anchor=base east,inner sep=0pt, outer sep=0pt, scale=  0.88] at ( 32.72, 96.58) {10.0};
\end{scope}
\begin{scope}
\path[clip] (  0.00,  0.00) rectangle (209.58,115.63);
\definecolor{drawColor}{gray}{0.20}

\path[draw=drawColor,line width= 0.6pt,line join=round] ( 34.92, 33.45) --
	( 37.67, 33.45);

\path[draw=drawColor,line width= 0.6pt,line join=round] ( 34.92, 55.50) --
	( 37.67, 55.50);

\path[draw=drawColor,line width= 0.6pt,line join=round] ( 34.92, 77.56) --
	( 37.67, 77.56);

\path[draw=drawColor,line width= 0.6pt,line join=round] ( 34.92, 99.61) --
	( 37.67, 99.61);
\end{scope}
\begin{scope}
\path[clip] (  0.00,  0.00) rectangle (209.58,115.63);
\definecolor{drawColor}{RGB}{0,0,0}

\path[draw=drawColor,line width= 0.6pt,line join=round] ( 37.67, 29.80) --
	(204.08, 29.80);

\path[draw=drawColor,line width= 0.6pt,line join=round] (201.62, 28.38) --
	(204.08, 29.80) --
	(201.62, 31.23);
\end{scope}
\begin{scope}
\path[clip] (  0.00,  0.00) rectangle (209.58,115.63);
\definecolor{drawColor}{gray}{0.20}

\path[draw=drawColor,line width= 0.6pt,line join=round] ( 47.46, 27.05) --
	( 47.46, 29.80);

\path[draw=drawColor,line width= 0.6pt,line join=round] ( 63.78, 27.05) --
	( 63.78, 29.80);

\path[draw=drawColor,line width= 0.6pt,line join=round] ( 80.09, 27.05) --
	( 80.09, 29.80);

\path[draw=drawColor,line width= 0.6pt,line join=round] ( 96.41, 27.05) --
	( 96.41, 29.80);

\path[draw=drawColor,line width= 0.6pt,line join=round] (112.72, 27.05) --
	(112.72, 29.80);

\path[draw=drawColor,line width= 0.6pt,line join=round] (129.03, 27.05) --
	(129.03, 29.80);

\path[draw=drawColor,line width= 0.6pt,line join=round] (145.35, 27.05) --
	(145.35, 29.80);

\path[draw=drawColor,line width= 0.6pt,line join=round] (161.66, 27.05) --
	(161.66, 29.80);

\path[draw=drawColor,line width= 0.6pt,line join=round] (177.98, 27.05) --
	(177.98, 29.80);

\path[draw=drawColor,line width= 0.6pt,line join=round] (194.29, 27.05) --
	(194.29, 29.80);
\end{scope}
\begin{scope}
\path[clip] (  0.00,  0.00) rectangle (209.58,115.63);
\definecolor{drawColor}{gray}{0.30}

\node[text=drawColor,anchor=base,inner sep=0pt, outer sep=0pt, scale=  0.88] at ( 47.46, 18.79) {2};

\node[text=drawColor,anchor=base,inner sep=0pt, outer sep=0pt, scale=  0.88] at ( 63.78, 18.79) {4};

\node[text=drawColor,anchor=base,inner sep=0pt, outer sep=0pt, scale=  0.88] at ( 80.09, 18.79) {8};

\node[text=drawColor,anchor=base,inner sep=0pt, outer sep=0pt, scale=  0.88] at ( 96.41, 18.79) {16};

\node[text=drawColor,anchor=base,inner sep=0pt, outer sep=0pt, scale=  0.88] at (112.72, 18.79) {32};

\node[text=drawColor,anchor=base,inner sep=0pt, outer sep=0pt, scale=  0.88] at (129.03, 18.79) {64};

\node[text=drawColor,anchor=base,inner sep=0pt, outer sep=0pt, scale=  0.88] at (145.35, 18.79) {128};

\node[text=drawColor,anchor=base,inner sep=0pt, outer sep=0pt, scale=  0.88] at (161.66, 18.79) {256};

\node[text=drawColor,anchor=base,inner sep=0pt, outer sep=0pt, scale=  0.88] at (177.98, 18.79) {512};

\node[text=drawColor,anchor=base,inner sep=0pt, outer sep=0pt, scale=  0.88] at (194.29, 18.79) {1024};
\end{scope}
\begin{scope}
\path[clip] (  0.00,  0.00) rectangle (209.58,115.63);
\definecolor{drawColor}{RGB}{0,0,0}

\node[text=drawColor,anchor=base,inner sep=0pt, outer sep=0pt, scale=  1.00] at (120.88,  7.44) {$d$};
\end{scope}
\begin{scope}
\path[clip] (  0.00,  0.00) rectangle (209.58,115.63);
\definecolor{drawColor}{RGB}{0,0,0}

\node[text=drawColor,rotate= 90.00,anchor=base,inner sep=0pt, outer sep=0pt, scale=  1.00] at ( 12.39, 69.97) {$\beta$};
\end{scope}
\end{tikzpicture}}
	\caption{Average query times of \acl{lve} run on the output of \ac{acp} and \ac{aacp} where the input \acp{fg} contain a proportion of $p=0.1$ scaled factors (left), and the distribution of the number $\beta$ of queries after which the offline overhead of \ac{aacp} amortises on input \acp{fg} containing a proportion of $p=0.1$ scaled factors (right).}
	\label{fig:aacp_plot_app_p=0.1}
\end{figure}
\begin{figure}
	\centering
	\resizebox{0.49\textwidth}{!}{
\begin{tikzpicture}[x=1pt,y=1pt]
\definecolor{fillColor}{RGB}{255,255,255}
\path[use as bounding box,fill=fillColor,fill opacity=0.00] (0,0) rectangle (209.58,115.63);
\begin{scope}
\path[clip] (  0.00,  0.00) rectangle (209.58,115.63);
\definecolor{drawColor}{RGB}{255,255,255}
\definecolor{fillColor}{RGB}{255,255,255}

\path[draw=drawColor,line width= 0.6pt,line join=round,line cap=round,fill=fillColor] (  0.00,  0.00) rectangle (209.58,115.63);
\end{scope}
\begin{scope}
\path[clip] ( 44.03, 29.80) rectangle (204.08,110.13);
\definecolor{fillColor}{RGB}{255,255,255}

\path[fill=fillColor] ( 44.03, 29.80) rectangle (204.08,110.13);
\definecolor{drawColor}{RGB}{230,159,0}

\path[draw=drawColor,line width= 0.6pt,line join=round] ( 51.30, 33.45) --
	( 51.59, 34.75) --
	( 52.16, 34.99) --
	( 53.29, 34.62) --
	( 55.57, 34.62) --
	( 60.13, 35.13) --
	( 69.24, 35.63) --
	( 87.46, 35.97) --
	(123.91, 37.40) --
	(196.81, 39.67);
\definecolor{drawColor}{RGB}{46,37,133}

\path[draw=drawColor,line width= 0.6pt,dash pattern=on 2pt off 2pt ,line join=round] ( 51.30, 33.55) --
	( 51.59, 37.01) --
	( 52.16, 40.94) --
	( 53.29, 44.79) --
	( 55.57, 51.18) --
	( 60.13, 60.98) --
	( 69.24, 69.88) --
	( 87.46, 80.49) --
	(123.91, 94.48) --
	(196.81,106.48);
\definecolor{drawColor}{RGB}{230,159,0}
\definecolor{fillColor}{RGB}{230,159,0}

\path[draw=drawColor,line width= 0.4pt,line join=round,line cap=round,fill=fillColor] ( 55.57, 34.62) circle (  1.96);

\path[draw=drawColor,line width= 0.4pt,line join=round,line cap=round,fill=fillColor] ( 51.59, 34.75) circle (  1.96);

\path[draw=drawColor,line width= 0.4pt,line join=round,line cap=round,fill=fillColor] ( 53.29, 34.62) circle (  1.96);

\path[draw=drawColor,line width= 0.4pt,line join=round,line cap=round,fill=fillColor] ( 60.13, 35.13) circle (  1.96);

\path[draw=drawColor,line width= 0.4pt,line join=round,line cap=round,fill=fillColor] ( 51.30, 33.45) circle (  1.96);

\path[draw=drawColor,line width= 0.4pt,line join=round,line cap=round,fill=fillColor] (196.81, 39.67) circle (  1.96);

\path[draw=drawColor,line width= 0.4pt,line join=round,line cap=round,fill=fillColor] ( 52.16, 34.99) circle (  1.96);

\path[draw=drawColor,line width= 0.4pt,line join=round,line cap=round,fill=fillColor] (123.91, 37.40) circle (  1.96);

\path[draw=drawColor,line width= 0.4pt,line join=round,line cap=round,fill=fillColor] ( 69.24, 35.63) circle (  1.96);

\path[draw=drawColor,line width= 0.4pt,line join=round,line cap=round,fill=fillColor] ( 87.46, 35.97) circle (  1.96);
\definecolor{fillColor}{RGB}{46,37,133}

\path[fill=fillColor] ( 53.61, 49.22) --
	( 57.53, 49.22) --
	( 57.53, 53.15) --
	( 53.61, 53.15) --
	cycle;

\path[fill=fillColor] ( 49.62, 35.04) --
	( 53.55, 35.04) --
	( 53.55, 38.97) --
	( 49.62, 38.97) --
	cycle;

\path[fill=fillColor] ( 51.33, 42.83) --
	( 55.26, 42.83) --
	( 55.26, 46.76) --
	( 51.33, 46.76) --
	cycle;

\path[fill=fillColor] ( 58.17, 59.01) --
	( 62.09, 59.01) --
	( 62.09, 62.94) --
	( 58.17, 62.94) --
	cycle;

\path[fill=fillColor] ( 49.34, 31.58) --
	( 53.26, 31.58) --
	( 53.26, 35.51) --
	( 49.34, 35.51) --
	cycle;

\path[fill=fillColor] (194.85,104.52) --
	(198.77,104.52) --
	(198.77,108.44) --
	(194.85,108.44) --
	cycle;

\path[fill=fillColor] ( 50.19, 38.97) --
	( 54.12, 38.97) --
	( 54.12, 42.90) --
	( 50.19, 42.90) --
	cycle;

\path[fill=fillColor] (121.95, 92.52) --
	(125.87, 92.52) --
	(125.87, 96.44) --
	(121.95, 96.44) --
	cycle;

\path[fill=fillColor] ( 67.28, 67.92) --
	( 71.20, 67.92) --
	( 71.20, 71.84) --
	( 67.28, 71.84) --
	cycle;

\path[fill=fillColor] ( 85.50, 78.53) --
	( 89.43, 78.53) --
	( 89.43, 82.45) --
	( 85.50, 82.45) --
	cycle;
\end{scope}
\begin{scope}
\path[clip] (  0.00,  0.00) rectangle (209.58,115.63);
\definecolor{drawColor}{RGB}{0,0,0}

\path[draw=drawColor,line width= 0.6pt,line join=round] ( 44.03, 29.80) --
	( 44.03,110.13);

\path[draw=drawColor,line width= 0.6pt,line join=round] ( 45.45,107.67) --
	( 44.03,110.13) --
	( 42.60,107.67);
\end{scope}
\begin{scope}
\path[clip] (  0.00,  0.00) rectangle (209.58,115.63);
\definecolor{drawColor}{gray}{0.30}

\node[text=drawColor,anchor=base east,inner sep=0pt, outer sep=0pt, scale=  0.88] at ( 39.08, 48.92) {100};

\node[text=drawColor,anchor=base east,inner sep=0pt, outer sep=0pt, scale=  0.88] at ( 39.08, 73.54) {1000};

\node[text=drawColor,anchor=base east,inner sep=0pt, outer sep=0pt, scale=  0.88] at ( 39.08, 98.16) {10000};
\end{scope}
\begin{scope}
\path[clip] (  0.00,  0.00) rectangle (209.58,115.63);
\definecolor{drawColor}{gray}{0.20}

\path[draw=drawColor,line width= 0.6pt,line join=round] ( 41.28, 51.95) --
	( 44.03, 51.95);

\path[draw=drawColor,line width= 0.6pt,line join=round] ( 41.28, 76.57) --
	( 44.03, 76.57);

\path[draw=drawColor,line width= 0.6pt,line join=round] ( 41.28,101.19) --
	( 44.03,101.19);
\end{scope}
\begin{scope}
\path[clip] (  0.00,  0.00) rectangle (209.58,115.63);
\definecolor{drawColor}{RGB}{0,0,0}

\path[draw=drawColor,line width= 0.6pt,line join=round] ( 44.03, 29.80) --
	(204.08, 29.80);

\path[draw=drawColor,line width= 0.6pt,line join=round] (201.62, 28.38) --
	(204.08, 29.80) --
	(201.62, 31.23);
\end{scope}
\begin{scope}
\path[clip] (  0.00,  0.00) rectangle (209.58,115.63);
\definecolor{drawColor}{gray}{0.20}

\path[draw=drawColor,line width= 0.6pt,line join=round] ( 51.02, 27.05) --
	( 51.02, 29.80);

\path[draw=drawColor,line width= 0.6pt,line join=round] ( 86.61, 27.05) --
	( 86.61, 29.80);

\path[draw=drawColor,line width= 0.6pt,line join=round] (122.20, 27.05) --
	(122.20, 29.80);

\path[draw=drawColor,line width= 0.6pt,line join=round] (157.80, 27.05) --
	(157.80, 29.80);

\path[draw=drawColor,line width= 0.6pt,line join=round] (193.39, 27.05) --
	(193.39, 29.80);
\end{scope}
\begin{scope}
\path[clip] (  0.00,  0.00) rectangle (209.58,115.63);
\definecolor{drawColor}{gray}{0.30}

\node[text=drawColor,anchor=base,inner sep=0pt, outer sep=0pt, scale=  0.88] at ( 51.02, 18.79) {0};

\node[text=drawColor,anchor=base,inner sep=0pt, outer sep=0pt, scale=  0.88] at ( 86.61, 18.79) {250};

\node[text=drawColor,anchor=base,inner sep=0pt, outer sep=0pt, scale=  0.88] at (122.20, 18.79) {500};

\node[text=drawColor,anchor=base,inner sep=0pt, outer sep=0pt, scale=  0.88] at (157.80, 18.79) {750};

\node[text=drawColor,anchor=base,inner sep=0pt, outer sep=0pt, scale=  0.88] at (193.39, 18.79) {1000};
\end{scope}
\begin{scope}
\path[clip] (  0.00,  0.00) rectangle (209.58,115.63);
\definecolor{drawColor}{RGB}{0,0,0}

\node[text=drawColor,anchor=base,inner sep=0pt, outer sep=0pt, scale=  1.00] at (124.05,  7.44) {$d$};
\end{scope}
\begin{scope}
\path[clip] (  0.00,  0.00) rectangle (209.58,115.63);
\definecolor{drawColor}{RGB}{0,0,0}

\node[text=drawColor,rotate= 90.00,anchor=base,inner sep=0pt, outer sep=0pt, scale=  1.00] at ( 12.39, 69.97) {time (ms)};
\end{scope}
\begin{scope}
\path[clip] (  0.00,  0.00) rectangle (209.58,115.63);

\path[] ( 44.95, 78.13) rectangle (100.72,118.04);
\end{scope}
\begin{scope}
\path[clip] (  0.00,  0.00) rectangle (209.58,115.63);
\definecolor{drawColor}{RGB}{230,159,0}

\path[draw=drawColor,line width= 0.6pt,line join=round] ( 51.90,105.31) -- ( 63.46,105.31);
\end{scope}
\begin{scope}
\path[clip] (  0.00,  0.00) rectangle (209.58,115.63);
\definecolor{drawColor}{RGB}{230,159,0}
\definecolor{fillColor}{RGB}{230,159,0}

\path[draw=drawColor,line width= 0.4pt,line join=round,line cap=round,fill=fillColor] ( 57.68,105.31) circle (  1.96);
\end{scope}
\begin{scope}
\path[clip] (  0.00,  0.00) rectangle (209.58,115.63);
\definecolor{drawColor}{RGB}{46,37,133}

\path[draw=drawColor,line width= 0.6pt,dash pattern=on 2pt off 2pt ,line join=round] ( 51.90, 90.86) -- ( 63.46, 90.86);
\end{scope}
\begin{scope}
\path[clip] (  0.00,  0.00) rectangle (209.58,115.63);
\definecolor{fillColor}{RGB}{46,37,133}

\path[fill=fillColor] ( 55.72, 88.89) --
	( 59.64, 88.89) --
	( 59.64, 92.82) --
	( 55.72, 92.82) --
	cycle;
\end{scope}
\begin{scope}
\path[clip] (  0.00,  0.00) rectangle (209.58,115.63);
\definecolor{drawColor}{RGB}{0,0,0}

\node[text=drawColor,anchor=base west,inner sep=0pt, outer sep=0pt, scale=  0.80] at ( 70.41,102.55) {$\alpha$-ACP};
\end{scope}
\begin{scope}
\path[clip] (  0.00,  0.00) rectangle (209.58,115.63);
\definecolor{drawColor}{RGB}{0,0,0}

\node[text=drawColor,anchor=base west,inner sep=0pt, outer sep=0pt, scale=  0.80] at ( 70.41, 88.10) {ACP};
\end{scope}
\end{tikzpicture}}
	\resizebox{0.49\textwidth}{!}{
\begin{tikzpicture}[x=1pt,y=1pt]
\definecolor{fillColor}{RGB}{255,255,255}
\path[use as bounding box,fill=fillColor,fill opacity=0.00] (0,0) rectangle (209.58,115.63);
\begin{scope}
\path[clip] (  0.00,  0.00) rectangle (209.58,115.63);
\definecolor{drawColor}{RGB}{255,255,255}
\definecolor{fillColor}{RGB}{255,255,255}

\path[draw=drawColor,line width= 0.6pt,line join=round,line cap=round,fill=fillColor] (  0.00,  0.00) rectangle (209.58,115.63);
\end{scope}
\begin{scope}
\path[clip] ( 33.27, 29.80) rectangle (204.08,110.13);
\definecolor{fillColor}{RGB}{255,255,255}

\path[fill=fillColor] ( 33.27, 29.80) rectangle (204.08,110.13);
\definecolor{drawColor}{RGB}{46,37,133}

\path[draw=drawColor,line width= 0.6pt,line join=round] ( 43.32, 75.08) -- ( 43.32, 83.39);

\path[draw=drawColor,line width= 0.6pt,line join=round] ( 43.32, 33.45) -- ( 43.32, 33.45);
\definecolor{fillColor}{RGB}{46,37,133}

\path[draw=drawColor,line width= 0.6pt,fill=fillColor,fill opacity=0.20] ( 37.04, 75.08) --
	( 37.04, 33.45) --
	( 49.60, 33.45) --
	( 49.60, 75.08) --
	( 37.04, 75.08) --
	cycle;

\path[draw=drawColor,line width= 1.1pt] ( 37.04, 52.88) -- ( 49.60, 52.88);

\path[draw=drawColor,line width= 0.6pt,line join=round] ( 60.07, 48.49) -- ( 60.07, 48.49);

\path[draw=drawColor,line width= 0.6pt,line join=round] ( 60.07, 33.45) -- ( 60.07, 33.45);

\path[draw=drawColor,line width= 0.6pt,fill=fillColor,fill opacity=0.20] ( 53.79, 48.49) --
	( 53.79, 33.45) --
	( 66.35, 33.45) --
	( 66.35, 48.49) --
	( 53.79, 48.49) --
	cycle;

\path[draw=drawColor,line width= 1.1pt] ( 53.79, 40.97) -- ( 66.35, 40.97);
\definecolor{drawColor}{RGB}{46,37,133}

\path[draw=drawColor,draw opacity=0.20,line width= 0.4pt,line join=round,line cap=round,fill=fillColor,fill opacity=0.20] ( 76.81, 33.45) circle (  1.96);

\path[draw=drawColor,draw opacity=0.20,line width= 0.4pt,line join=round,line cap=round,fill=fillColor,fill opacity=0.20] ( 76.81,104.11) circle (  1.96);

\path[draw=drawColor,draw opacity=0.20,line width= 0.4pt,line join=round,line cap=round,fill=fillColor,fill opacity=0.20] ( 76.81, 93.58) circle (  1.96);

\path[draw=drawColor,draw opacity=0.20,line width= 0.4pt,line join=round,line cap=round,fill=fillColor,fill opacity=0.20] ( 76.81,106.48) circle (  1.96);
\definecolor{drawColor}{RGB}{46,37,133}

\path[draw=drawColor,line width= 0.6pt,line join=round] ( 76.81, 68.36) -- ( 76.81, 68.36);

\path[draw=drawColor,line width= 0.6pt,line join=round] ( 76.81, 57.28) -- ( 76.81, 48.49);

\path[draw=drawColor,line width= 0.6pt,fill=fillColor,fill opacity=0.20] ( 70.53, 68.36) --
	( 70.53, 57.28) --
	( 83.09, 57.28) --
	( 83.09, 68.36) --
	( 70.53, 68.36) --
	cycle;

\path[draw=drawColor,line width= 1.1pt] ( 70.53, 57.28) -- ( 83.09, 57.28);

\path[draw=drawColor,line width= 0.6pt,line join=round] ( 93.56, 72.46) -- ( 93.56, 92.18);

\path[draw=drawColor,line width= 0.6pt,line join=round] ( 93.56, 33.45) -- ( 93.56, 33.45);

\path[draw=drawColor,line width= 0.6pt,fill=fillColor,fill opacity=0.20] ( 87.28, 72.46) --
	( 87.28, 33.45) --
	( 99.84, 33.45) --
	( 99.84, 72.46) --
	( 87.28, 72.46) --
	cycle;

\path[draw=drawColor,line width= 1.1pt] ( 87.28, 33.45) -- ( 99.84, 33.45);

\path[draw=drawColor,line width= 0.6pt,line join=round] (110.30, 99.22) -- (110.30,100.49);

\path[draw=drawColor,line width= 0.6pt,line join=round] (110.30, 48.49) -- (110.30, 48.49);

\path[draw=drawColor,line width= 0.6pt,fill=fillColor,fill opacity=0.20] (104.03, 99.22) --
	(104.03, 48.49) --
	(116.58, 48.49) --
	(116.58, 99.22) --
	(104.03, 99.22) --
	cycle;

\path[draw=drawColor,line width= 1.1pt] (104.03, 73.45) -- (116.58, 73.45);

\path[draw=drawColor,line width= 0.6pt,line join=round] (127.05, 90.69) -- (127.05, 94.90);

\path[draw=drawColor,line width= 0.6pt,line join=round] (127.05, 33.45) -- (127.05, 33.45);

\path[draw=drawColor,line width= 0.6pt,fill=fillColor,fill opacity=0.20] (120.77, 90.69) --
	(120.77, 33.45) --
	(133.33, 33.45) --
	(133.33, 90.69) --
	(120.77, 90.69) --
	cycle;

\path[draw=drawColor,line width= 1.1pt] (120.77, 48.49) -- (133.33, 48.49);

\path[draw=drawColor,line width= 0.6pt,line join=round] (143.80, 75.66) -- (143.80, 75.66);

\path[draw=drawColor,line width= 0.6pt,line join=round] (143.80, 33.45) -- (143.80, 33.45);

\path[draw=drawColor,line width= 0.6pt,fill=fillColor,fill opacity=0.20] (137.52, 75.66) --
	(137.52, 33.45) --
	(150.08, 33.45) --
	(150.08, 75.66) --
	(137.52, 75.66) --
	cycle;

\path[draw=drawColor,line width= 1.1pt] (137.52, 54.55) -- (150.08, 54.55);

\path[draw=drawColor,line width= 0.6pt,line join=round] (160.54, 57.28) -- (160.54, 57.28);

\path[draw=drawColor,line width= 0.6pt,line join=round] (160.54, 45.37) -- (160.54, 33.45);

\path[draw=drawColor,line width= 0.6pt,fill=fillColor,fill opacity=0.20] (154.26, 57.28) --
	(154.26, 45.37) --
	(166.82, 45.37) --
	(166.82, 57.28) --
	(154.26, 57.28) --
	cycle;

\path[draw=drawColor,line width= 1.1pt] (154.26, 57.28) -- (166.82, 57.28);

\path[draw=drawColor,line width= 0.6pt,line join=round] (177.29, 33.45) -- (177.29, 33.45);

\path[draw=drawColor,line width= 0.6pt,line join=round] (177.29, 33.45) -- (177.29, 33.45);

\path[draw=drawColor,line width= 0.6pt,fill=fillColor,fill opacity=0.20] (171.01, 33.45) --
	(171.01, 33.45) --
	(183.57, 33.45) --
	(183.57, 33.45) --
	(171.01, 33.45) --
	cycle;

\path[draw=drawColor,line width= 1.1pt] (171.01, 33.45) -- (183.57, 33.45);
\end{scope}
\begin{scope}
\path[clip] (  0.00,  0.00) rectangle (209.58,115.63);
\definecolor{drawColor}{RGB}{0,0,0}

\path[draw=drawColor,line width= 0.6pt,line join=round] ( 33.27, 29.80) --
	( 33.27,110.13);

\path[draw=drawColor,line width= 0.6pt,line join=round] ( 34.70,107.67) --
	( 33.27,110.13) --
	( 31.85,107.67);
\end{scope}
\begin{scope}
\path[clip] (  0.00,  0.00) rectangle (209.58,115.63);
\definecolor{drawColor}{gray}{0.30}

\node[text=drawColor,anchor=base east,inner sep=0pt, outer sep=0pt, scale=  0.88] at ( 28.32, 30.42) {0.0};

\node[text=drawColor,anchor=base east,inner sep=0pt, outer sep=0pt, scale=  0.88] at ( 28.32, 54.25) {0.0};

\node[text=drawColor,anchor=base east,inner sep=0pt, outer sep=0pt, scale=  0.88] at ( 28.32, 80.36) {0.1};

\node[text=drawColor,anchor=base east,inner sep=0pt, outer sep=0pt, scale=  0.88] at ( 28.32,104.19) {0.3};
\end{scope}
\begin{scope}
\path[clip] (  0.00,  0.00) rectangle (209.58,115.63);
\definecolor{drawColor}{gray}{0.20}

\path[draw=drawColor,line width= 0.6pt,line join=round] ( 30.52, 33.45) --
	( 33.27, 33.45);

\path[draw=drawColor,line width= 0.6pt,line join=round] ( 30.52, 57.28) --
	( 33.27, 57.28);

\path[draw=drawColor,line width= 0.6pt,line join=round] ( 30.52, 83.39) --
	( 33.27, 83.39);

\path[draw=drawColor,line width= 0.6pt,line join=round] ( 30.52,107.22) --
	( 33.27,107.22);
\end{scope}
\begin{scope}
\path[clip] (  0.00,  0.00) rectangle (209.58,115.63);
\definecolor{drawColor}{RGB}{0,0,0}

\path[draw=drawColor,line width= 0.6pt,line join=round] ( 33.27, 29.80) --
	(204.08, 29.80);

\path[draw=drawColor,line width= 0.6pt,line join=round] (201.62, 28.38) --
	(204.08, 29.80) --
	(201.62, 31.23);
\end{scope}
\begin{scope}
\path[clip] (  0.00,  0.00) rectangle (209.58,115.63);
\definecolor{drawColor}{gray}{0.20}

\path[draw=drawColor,line width= 0.6pt,line join=round] ( 43.32, 27.05) --
	( 43.32, 29.80);

\path[draw=drawColor,line width= 0.6pt,line join=round] ( 60.07, 27.05) --
	( 60.07, 29.80);

\path[draw=drawColor,line width= 0.6pt,line join=round] ( 76.81, 27.05) --
	( 76.81, 29.80);

\path[draw=drawColor,line width= 0.6pt,line join=round] ( 93.56, 27.05) --
	( 93.56, 29.80);

\path[draw=drawColor,line width= 0.6pt,line join=round] (110.30, 27.05) --
	(110.30, 29.80);

\path[draw=drawColor,line width= 0.6pt,line join=round] (127.05, 27.05) --
	(127.05, 29.80);

\path[draw=drawColor,line width= 0.6pt,line join=round] (143.80, 27.05) --
	(143.80, 29.80);

\path[draw=drawColor,line width= 0.6pt,line join=round] (160.54, 27.05) --
	(160.54, 29.80);

\path[draw=drawColor,line width= 0.6pt,line join=round] (177.29, 27.05) --
	(177.29, 29.80);

\path[draw=drawColor,line width= 0.6pt,line join=round] (194.04, 27.05) --
	(194.04, 29.80);
\end{scope}
\begin{scope}
\path[clip] (  0.00,  0.00) rectangle (209.58,115.63);
\definecolor{drawColor}{gray}{0.30}

\node[text=drawColor,anchor=base,inner sep=0pt, outer sep=0pt, scale=  0.88] at ( 43.32, 18.79) {2};

\node[text=drawColor,anchor=base,inner sep=0pt, outer sep=0pt, scale=  0.88] at ( 60.07, 18.79) {4};

\node[text=drawColor,anchor=base,inner sep=0pt, outer sep=0pt, scale=  0.88] at ( 76.81, 18.79) {8};

\node[text=drawColor,anchor=base,inner sep=0pt, outer sep=0pt, scale=  0.88] at ( 93.56, 18.79) {16};

\node[text=drawColor,anchor=base,inner sep=0pt, outer sep=0pt, scale=  0.88] at (110.30, 18.79) {32};

\node[text=drawColor,anchor=base,inner sep=0pt, outer sep=0pt, scale=  0.88] at (127.05, 18.79) {64};

\node[text=drawColor,anchor=base,inner sep=0pt, outer sep=0pt, scale=  0.88] at (143.80, 18.79) {128};

\node[text=drawColor,anchor=base,inner sep=0pt, outer sep=0pt, scale=  0.88] at (160.54, 18.79) {256};

\node[text=drawColor,anchor=base,inner sep=0pt, outer sep=0pt, scale=  0.88] at (177.29, 18.79) {512};

\node[text=drawColor,anchor=base,inner sep=0pt, outer sep=0pt, scale=  0.88] at (194.04, 18.79) {1024};
\end{scope}
\begin{scope}
\path[clip] (  0.00,  0.00) rectangle (209.58,115.63);
\definecolor{drawColor}{RGB}{0,0,0}

\node[text=drawColor,anchor=base,inner sep=0pt, outer sep=0pt, scale=  1.00] at (118.68,  7.44) {$d$};
\end{scope}
\begin{scope}
\path[clip] (  0.00,  0.00) rectangle (209.58,115.63);
\definecolor{drawColor}{RGB}{0,0,0}

\node[text=drawColor,rotate= 90.00,anchor=base,inner sep=0pt, outer sep=0pt, scale=  1.00] at ( 12.39, 69.97) {$\beta$};
\end{scope}
\end{tikzpicture}}
	\caption{Average query times of \acl{lve} run on the output of \ac{acp} and \ac{aacp} where the input \acp{fg} contain a proportion of $p=0.15$ scaled factors (left), and the distribution of the number $\beta$ of queries after which the offline overhead of \ac{aacp} amortises on input \acp{fg} containing a proportion of $p=0.15$ scaled factors (right).}
	\label{fig:aacp_plot_app_p=0.15}
\end{figure}
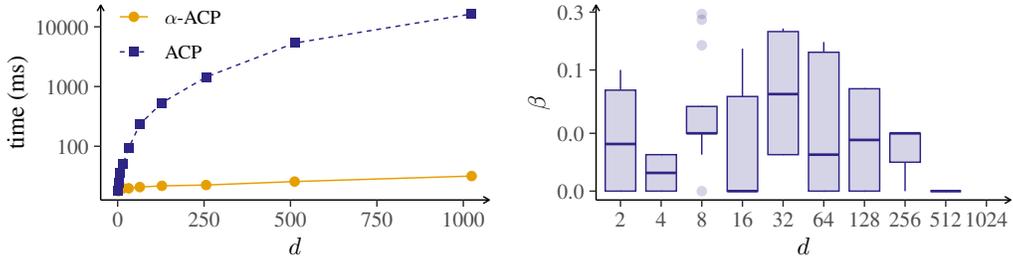

The results are illustrated in \cref{fig:aacp_plot_app_p=0.01,fig:aacp_plot_app_p=0.05,fig:aacp_plot_app_p=0.1,fig:aacp_plot_app_p=0.15}.
Unsurprisingly, the run times of \acl{lve} on the output of \ac{aacp} remain constant for all choices of $p$ because \ac{aacp} is able to detect arbitrarily many scaled factors without forfeiting compression.
At the same time, \ac{acp} is not able to detect exchangeable factors on different scales and thus, the run times of \acl{lve} on the output of \ac{acp} increase as the proportion $p$ of scaled factors increases.
Regarding the amortisation of the offline overhead (depicted in the plots on the right), we can observe the same behaviour as in \cref{fig:aacp_plot_main} (negative values for $\beta$ are again omitted).
The median value for $\beta$ is always below one and there are no notable differences between the different choices of $p$.
Even though the values of $\beta$ slightly deviate between different choices for $p$, the deviation can be considered negligible and it seems as the deviation stems from noisy measurements.
In conclusion, the additional offline overhead of \ac{aacp} amortises after a single query most of the time, highlighting the efficiency of \ac{aacp}.
\end{document}